\newtheorem{theorem}{Theorem}
\newtheorem{fact}{Fact}
\newtheorem{definition}{Definition}
\newtheorem{proposition}{Proposition}
\newtheorem{remark}{Remark}
\newtheorem{assumption}{Assumption}
\newcommand{\defeq}{:=}
\newcommand{\norm}[1]{\left\lVert#1\right\rVert}
\newcommand{\normop}[1]{\left\lVert#1\right\rVert_{\textup{op}}}
\newcommand{\normf}[1]{\left\lVert#1\right\rVert_{\textup{F}}}
\newcommand{\normtr}[1]{\left\lVert#1\right\rVert_{\textup{tr}}}
\newcommand{\normsop}[1]{\lVert#1\rVert_{\textup{op}}}
\newcommand{\inprod}[2]{\left\langle#1, #2\right\rangle}
\newcommand{\eps}{\epsilon}
\newcommand{\lam}{\lambda}
\newcommand{\sig}{\sigma}
\newcommand{\R}{\mathbb{R}}
\newcommand{\N}{\mathbb{N}}
\newcommand{\diag}[1]{\textbf{\textup{diag}}\left(#1\right)}
\newcommand{\half}{\frac{1}{2}}
\newcommand{\0}{\mathbf{0}}
\newcommand{\E}{\mathbb{E}}
\newcommand{\Var}{\textup{Var}}
\newcommand{\Tr}{\textup{Tr}}
\newcommand{\ball}{\mathbb{B}}
\newcommand{\dd}{\textup{d}}
\newcommand{\Par}[1]{\left(#1\right)}
\newcommand{\Brack}[1]{\left[#1\right]}
\newcommand{\Brace}[1]{\left\{#1\right\}}
\newcommand{\Abs}[1]{\left|#1\right|}
\newcommand{\Sym}{\mathbb{S}}
\newcommand{\PSD}{\Sym_{\succeq\0}}
\newcommand{\mg}{\mathbf{G}}
\newcommand{\mh}{\mathbf{H}}
\newcommand{\mm}{\mathbf{M}}
\newcommand{\mo}{\mathbf{O}}
\newcommand{\mmu}{\mathbf{U}}
\newcommand{\mv}{\mathbf{V}}
\newcommand{\mw}{\mathbf{W}}
\newcommand{\mx}{\mathbf{X}}
\newcommand{\my}{\mathbf{Y}}
\newcommand{\mz}{\mathbf{Z}}
\newcommand{\msig}{\boldsymbol{\Sigma}}
\newcommand{\vu}{\mathbf{u}}
\newcommand{\vv}{\mathbf{v}}
\newcommand{\vx}{\mathbf{x}}
\newcommand{\vy}{\mathbf{y}}
\newcommand{\vdelta}{\boldsymbol{\delta}}
\newcommand{\vmu}{\boldsymbol{\mu}}
\newcommand{\vsig}{\boldsymbol{\sig}}
\newcommand{\calD}{\mathcal{D}}
\newcommand{\calF}{\mathcal{F}}
\newcommand{\calH}{\mathcal{H}}
\newcommand{\calK}{\mathcal{K}}
\newcommand{\calS}{\mathcal{S}}
\newcommand{\calV}{\mathcal{V}}
\newcommand{\jli}[1]{\textcolor{blue}{\textbf{jli:} #1}}
\newcommand{\D}{\mathbb{D}}
\newcommand{\M}{\mathbb{M}}
\newcommand{\X}{\mathbb{X}}
\newcommand{\tmm}{\widetilde{\mm}}
\newcommand{\hcalF}{\widehat{\calF}}
\DeclareMathOperator{\dom}{dom}
\DeclareMathOperator{\sgn}{sgn}
\DeclareMathOperator{\msgn}{msgn}
\newcommand{\nes}{\beta_1}
\newcommand{\pol}{\beta_2}
\newcommand{\nbatch}{n_\mathrm{batch}}
\DeclareMathOperator{\rank}{rank}
\newcommand{\lion}{\textsc{Lion}}
\newcommand{\lionk}{\textsc{Lion}-$\calK$}
\newcommand{\adam}{\textsc{Adam}}
\newcommand{\adamw}{\textsc{AdamW}}
\newcommand{\adagrad}{\textsc{AdaGrad}}
\newcommand{\muon}{\textsc{Muon}}
\newcommand{\shampoo}{\textsc{Shampoo}}
\def\bb#1\ee{\begin{align*}#1\end{align*}}
\def\bba#1\eea{\begin{align}#1\end{align}}
\def\bbb#1\eee{\begin{align}#1\end{align}}
\title{\muon{} Optimizes Under Spectral Norm Constraints}
\author{
Lizhang Chen\thanks{University of Texas at Austin, \texttt{lzchen@cs.utexas.edu}} \and 
Jonathan Li\thanks{University of Texas at Austin, \texttt{jli@cs.utexas.edu}}\and 
Qiang Liu\thanks{University of Texas at Austin, \texttt{lqiang@cs.utexas.edu}}
}
\date{}
\begin{document}
\pagenumbering{gobble}
\maketitle
\begin{abstract}
The pursuit of faster optimization algorithms remains an active and important research direction in deep learning. Recently, the \muon{} optimizer~\cite{JordanJBYCNB24} has demonstrated promising empirical performance, but its theoretical foundation remains less understood. In this paper, we bridge this gap and provide a theoretical analysis of \muon{} by placing it within the \lion{}-$\mathcal{K}$ family of optimizers~\cite{ChenLLL24}. Specifically, we show that \muon{} corresponds to \lion{}-$\mathcal{K}$ when equipped with the nuclear norm, and we leverage the theoretical results of \lion{}-$\mathcal{K}$ to establish that \muon{} (with decoupled weight decay) implicitly solves an optimization problem that enforces a constraint on the spectral norm of weight matrices. This perspective not only demystifies the implicit regularization effects of \muon{} but also leads to natural generalizations through varying the choice of convex map $\mathcal{K}$, allowing for the exploration of a broader class of implicitly regularized and constrained optimization algorithms.
\end{abstract}
\thispagestyle{empty}
\newpage
\tableofcontents
\thispagestyle{empty}
\newpage
\pagenumbering{arabic}

\section{Introduction}\label{sec:intro}

Optimization remains an important research direction in deep learning, where the backpropagation algorithm~\cite{LeCun89} enables efficient and scalable gradient-based training of neural architectures. Among gradient-based optimizers, adaptive methods such as \adagrad{}~\cite{DuchiHS11}, \adam{}~\cite{KingmaB14}, and \adamw{}~\cite{LoshchilovH19} have become standard for training large-scale deep neural networks due to their ability to dynamically adjust learning rates based on first- and second-order moment estimates.

Recent advances in optimization algorithms have shown promising potential to outperform traditional adaptive gradient methods in training large-scale neural networks \cite{PengQK24, LiangLCL24, LiuLHLM24, JordanJBYCNB24, YuanLWZG24, VyasMZSBJK25, PooladzandiL24, Li22, PethickXAZSC25, XieZLLY24}. A noteworthy example is the \lion{} optimizer~\cite{ChenLHRWPDLHLL23}, which was discovered through symbolic search and has demonstrated competitive empirical performance across diverse tasks despite its simple update rule. A theoretical foundation for \lion{} was established through the \lionk{} framework~\cite{ChenLLL24}, which generalizes \lion{} and unifies powerful optimization techniques such as mirror descent~\cite{KricheneBB15, BeckT03}, Nesterov momentum~\cite{SunWLW23, Nesterov83}, Hamiltonian descent~\cite{MaddisonPTOD18}, Frank--Wolfe algorithms~\cite{PethickXAZSC25, Jaggi13}, and decoupled weight decay~\cite{LoshchilovH19, LiuSYJLDQXLYCZLLYHZWWDZKZXZUZY25}.

The recently proposed \muon{} optimizer \cite{JordanJBYCNB24} is another compelling development among emerging optimizers. \muon{} introduces orthogonalized gradient momentum updates via Newton-Schulz iteration~\cite{BernsteinN24}, demonstrating promising empirical results and potential for efficient large-scale model training \cite{LiuSYJLDQXLYCZLLYHZWWDZKZXZUZY25}. However, its theoretical underpinnings and connections to broader optimization techniques remain unclear.

In this paper, we bridge this gap by embedding \muon{} within the \lionk{} framework, providing not only a theoretical explanation for \muon{}’s empirical success but also a unified perspective that enables natural generalizations and directions for future work. Specifically, we make the following key contributions, which are further outlined in Section~\ref{sec:results}.

\paragraph{\muon{} as a nuclear norm \lionk{} optimizer.} We interpret \muon{} as a special case of the \lionk{} family of optimizers when the convex function $\calK$ and its subgradient $\nabla\calK$ are chosen as the nuclear norm and matrix sign function, respectively.

\paragraph{Implicit constrained optimization via \muon{}.} We show that \muon{} (with decoupled weight decay \cite{LoshchilovH19, LiuSYJLDQXLYCZLLYHZWWDZKZXZUZY25}) implicitly solves an optimization problem that enforces constraints on the singular values of weight matrices, offering a novel interpretation as a form of spectral regularization.

\paragraph{Convergence analysis of \muon{}.} We leverage the theoretical guarantees of \lionk{} dynamics to establish convergence rates for \muon{} under both the deterministic (Theorem~\ref{thm:discrete}) and stochastic gradient (Theorem~\ref{thm:stochastic}) implementations. We furthermore show that \muon{} with decoupled weight decay converges to the set of Karush--Kuhn--Tucker (KKT) points of the aforementioned implicit constrained optimization problem (Theorems~\ref{thm:kkt} and \ref{thm:kkt_stochastic}).

\paragraph{Generalizations of \muon{} via \lionk{}.} By situating \muon{} within the broader \lionk{} framework, we naturally generalize it beyond the nuclear norm, introducing a richer family of optimizers with various implicit regularization effects derived from alternative convex functions.

\section{Related work}\label{sec:related}

\paragraph{Steepest descent under norm constraints.} \cite{BernsteinN24} reinterprets popular optimizers such as \adam{}~\cite{KingmaB14}, \shampoo{}~\cite{GuptaKS18}, and \muon{}~\cite{JordanJBYCNB24} as instances of steepest descent under norm constraints. Similarly, \cite{PethickXAZSC25} proposes a stochastic conditional gradient approach from a norm-constraint perspective. However, these analyses do not account for momentum, a central component in practical implementations of these optimizers. As a result, when momentum is introduced, these methods no longer strictly conform to the steepest descent interpretation, highlighting a fundamental limitation of this perspective. Moreover, this interpretation does not naturally extend to optimizers that incorporate decoupled weight decay.

\paragraph{Decoupled weight decay.}
Weight decay is a widely used regularization technique in deep learning, traditionally implemented as an $\ell_2$ penalty directly coupled with gradient-based parameter updates~\cite{LoshchilovH19}. The concept of decoupled weight decay, introduced by \adamw{}~\cite{LoshchilovH19}, separates regularization from adaptive gradient computations. Empirical evidence suggests that this decoupling enhances training stability and improves generalization, making it a standard practice in modern adaptive optimizers~\cite{ChenLHRWPDLHLL23, LiuSYJLDQXLYCZLLYHZWWDZKZXZUZY25}. Recently, \cite{XieL24} demonstrated that \adamw{} implicitly solves a constrained optimization problem given convergence. \cite{ChenLLL24} proved that optimizers with bounded updates and decoupled weight decay inherently correspond to constrained optimization formulations, even without requiring convergence assumptions.

\paragraph{Lyapunov analysis of optimizers.}
Hamiltonian dynamics provides a rigorous theoretical framework for understanding momentum-based optimization~\cite{Nesterov83, SutskeverMDH13}. Unlike standard gradient descent, which ensures a monotonic decrease in the objective function, momentum methods exhibit nonmonotonic behavior, requiring more advanced analytical tools for convergence analysis~\cite{JinNJ18}. Lyapunov-based techniques~\cite{ShevitzP94, KricheneBB15, LiuWCLZLKL24, ChenLLL24} have since been developed to analyze the stability and convergence properties of optimization algorithms~\cite{LiangLCL24}.

\paragraph{Concurrent work.} Although several concurrent works have studied the convergence of \muon{} under various smoothness assumptions \cite{LiH25, AnLPMGZ25, Kovalev25, ShenHHSZ25}, none of them consider \muon{} with decoupled weight decay, despite it being the variant that has demonstrated the most promising empirical results \cite{LiuSYJLDQXLYCZLLYHZWWDZKZXZUZY25}. \cite{SfyrakiW25} analyzes \muon{} with decoupled weight decay through a Frank--Wolfe perspective, whereas our work is the first to use the \lionk{} framework and to prove convergence with decreasing step sizes \`a la Robbins--Monro.
\section{Main results}\label{sec:results}

In this paper, we consider the optimization problem
\begin{equation}\label{eq:problem}
    \min_{\mx \in \X}  \calF(\mx)\quad\text{with}\quad\calF(\mx) = \E_{\xi \sim \mathcal{D}}\left[\calF(\mx, \xi)\right]  ,
\end{equation}
where $\X\defeq\R^{n\times m}$ is the space of real $n\times m$ matrices, $\calF:\X\to\R$ is a differentiable loss function, and the expectation is taken over the data distribution $\mathcal{D}$ with independent and identically distributed samples $\xi \sim \mathcal{D}$. Given a realization of the function $\calF(\mx, \xi)$, the stochastic gradient $\nabla \calF(\mx, \xi)$ is defined as the gradient of $\calF(\mx, \xi)$ with respect to the variable $\mx$.

The \muon{} optimizer \cite{JordanJBYCNB24} 
was recently proposed for solving \eqref{eq:problem}. When equipped with Nesterov momentum and decoupled weight decay \cite{LiuSYJLDQXLYCZLLYHZWWDZKZXZUZY25}, it has the implicit update rule \begin{equation}\label{eq:muon_update}
    \begin{aligned}
        \mm_{t+1}&=\pol\mm_t-(1-\pol)\mg_t\\
        \tmm_{t+1}&=\nes\mm_t-(1-\nes)\mg_t\\
        \mx_{t+1}&=\mx_t+\eta_t\Par{\msgn(\tmm_{t+1})-\lam\mx_{t+1}},
    \end{aligned}
\end{equation}
where $\mx_t$ represents the parameters, $\mm_t$ and $\tmm_t$ represent momentum, $\mg_t$ is either the deterministic gradient $\nabla\calF(\mx_t)$ or a stochastic gradient $\nabla\calF(\mx_t,\xi_t)$, $\eta_t>0$ is the learning rate, $\nes,\pol\in[0,1)$ are two momentum coefficients, $\lam\geq0$ is the weight decay coefficient, and $\msgn$ is known as the matrix sign function (see Definition~\ref{def:msgn}).

\lionk{} \cite{ChenLLL24} is a family of optimizers originally developed as a generalization and theoretical foundation for the \lion{} optimizer \cite{ChenLHRWPDLHLL23}. It is parameterized by a convex function $\calK:\X\to\R$ with a subgradient $\nabla\calK$ and has the implicit update rule \begin{equation}\label{eq:lionk_update}
    \begin{aligned}
        \mm_{t+1}&=\pol\mm_t-(1-\pol)\mg_t\\
        \tmm_{t+1}&=\nes\mm_t-(1-\nes)\mg_t\\
        \mx_{t+1}&=\mx_t+\eta_t\Par{\nabla\calK(\tmm_{t+1})-\lam\mx_{t+1}}.
    \end{aligned}
\end{equation} The update rule of \muon{} bears remarkable similarity to \eqref{eq:lionk_update}, and \muon{} can in fact be identified as the special case of \lionk{} with $\calK(\mx)=\normtr{\mx}$ and $\nabla\calK(\mx)=\msgn(\mx)$, where $\normtr{\cdot}$ denotes the nuclear norm and $\msgn$ is known to be a subgradient of $\normtr{\cdot}$. Recall that $\normtr{\mx}=\sum_{i=1}^{\min(n,m)}\vsig_i(\mx)$, where $\vsig_i(\mx)$ is the $i^\text{th}$ largest singular value of $\mx$.

Perhaps surprisingly, due to decoupled weight decay, \lionk{} optimizers do not minimize the original loss function. Instead, they minimize the regularized objective \begin{equation}\label{eq:reg_obj}
    \hcalF(\mx)\defeq\calF(\mx)+\frac{1}{\lam}\calK^*(\lam\mx),
\end{equation} where $\calK^*$ denotes the convex conjugate of $\calK$. Leveraging this property of \lionk{}, we conclude that \muon{} is implicitly solving the constrained optimization problem \begin{equation}\label{eq:constrained}
    \min_{\mx\in\X}\calF(\mx)\text{ s.t. }\normop{\mx}\leq\frac{1}{\lam},
\end{equation} where the spectral norm $\normop{\cdot}$, defined as $\normop{\mx}=\vsig_1(\mx)$,
is known to be the dual norm of $\normtr{\cdot}$.

Despite the general \lionk{} framework, \muon{}'s use of the nondifferentiable nuclear norm casts unique challenges in providing convergence guarantees. In this work, we provide an analysis tailored to \muon{} and rigorously establish that the iterates of \eqref{eq:muon_update} converge to the set of KKT points of \eqref{eq:constrained}.

To give a quick overview of the results, we first note that the KKT points of \eqref{eq:constrained} can be characterized by the KKT score function
 \begin{equation}\label{eq:score}
    \calS(\mx)\defeq\normtr{\nabla\calF(\mx)}+\inprod{\lam\mx}{\nabla\calF(\mx)}. 
\end{equation}

We can show that a point $\mx$ is a KKT point if and only if the KKT score is zero, i.e., $\calS(\mx) = 0$, and the primal constraint $\normop{\mx}\leq\frac{1}{\lam}$ is satisfied. We then identify two Lyapunov functions that are used to verify convergence in terms of these conditions. For the constraint condition $ \normop{\mx}\leq\frac{1}{\lam}$, we use the Lyapunov function 
\begin{equation}\label{eq:lyapunov}
    \mathcal V_{\ball}(\mx)=\max\Par{\normop{\mx}-\frac{1}{\lam},0},
\end{equation}
which measures the distance from $\mx$ to the constraint ball $\ball\defeq\{\mx\in\X\mid\normop{\lam\mx}\leq1\}$. 
Following the update \eqref{eq:muon_update}, 
we show that $\calV_\ball(\mx_{t})$ decays exponentially fast when $\eta_t<\frac{1}{\lam}$, i.e.
\[
\mathcal V_{\ball}(\mx_{t}) \leq\Par{\prod_{s=0}^{t-1} (1-\eta_s\lam)}\mathcal V_{\ball}(\mx_0).
\]
Hence, $\calV_\ball$ converges to $0$ at a linear rate, which implies that $\mx_t$ rapidly converges to $\ball$ and never leaves it after entering. Inside the ball, we use a second Lyapunov function \[ \mathcal V_{\mathcal K}(\mx,\mm)=\calF(\mx)-\calF^\star+\frac{c}{\lam}(\normtr{\mm}-\inprod{\lam\mx}{\mm}), \]  
where $c$ is an appropriately defined scalar. We show that \muon{} (approximately) monotonically decreases $\calV_\calK$ within the constraint set, which implies that the KKT score vanishes along the trajectory by a generalization of LaSalle's invariance principle for discrete-time stochastic processes.

\begin{figure*}[t]
    \centering
    \includegraphics[width=\textwidth]{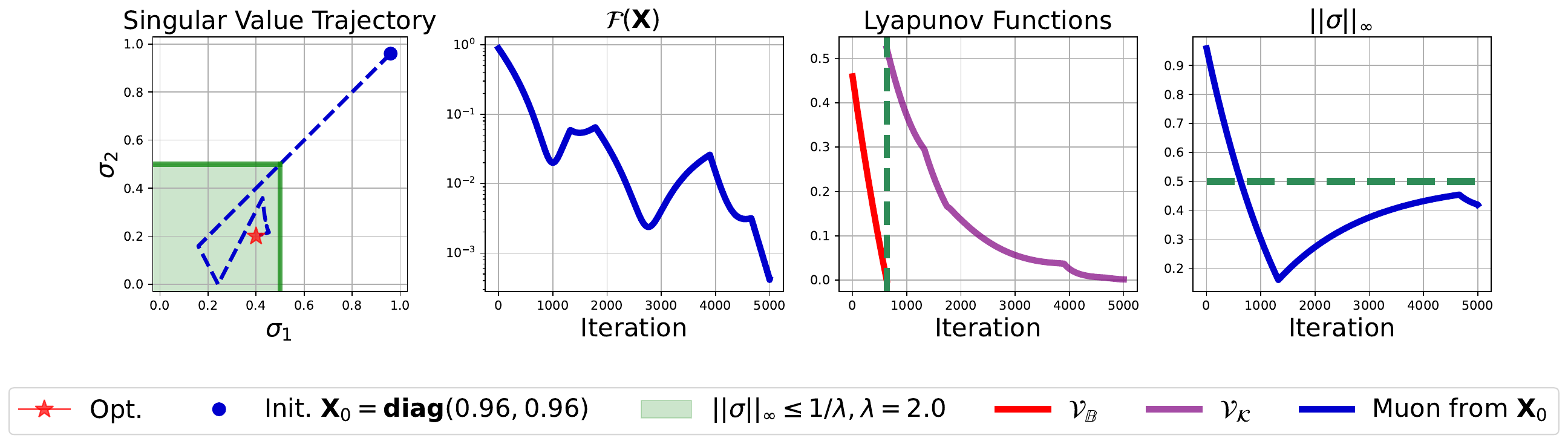}
\caption{Convergence behavior of \muon{}. Although the primary objective value $\calF(\mx)$ exhibits nonmonotonic fluctuations, the Lyapnuov functions $\calV_\ball$ and $\calV_\calK$ decrease monotonically within their respective domains --- $\calV_\ball$ when the trajectory is outside $\ball$, and $\calV_\calK$ once the trajectory enters $\ball$.
}
    \label{fig:toy}
\end{figure*}

Our main results are summarized by Figure~\ref{fig:toy} and the following theorems.

\begin{theorem}[Informal, see Theorems~\ref{thm:discrete} and \ref{thm:stochastic}]
    When $\mx_t$, $\mm_t$, and $\tmm_t$ are updated using \eqref{eq:muon_update} with $\eta_t=\eta=\Theta\Par{\frac{1}{\sqrt{T}}}$ and $\Var(\mg_t)\leq\frac{\sig^2}{\nbatch}$, $\min_{1\leq t\leq T}\E[\calS(\mx_t)]=O\Par{\frac{1}{\sqrt{T}}+\frac{\sig}{\sqrt{\nbatch}}}$.
\end{theorem}

\begin{theorem}[Informal, see Theorems~\ref{thm:kkt} and \ref{thm:kkt_stochastic}]
    Assume $\eta_t \leq \frac{1}{\lam}$, $\sum_{t=0}^\infty \eta_t = \infty$, and $\sum_{t=0}^\infty \eta_t^2 < \infty$. Under certain conditions, when $\mx_t$, $\mm_t$, and $\tmm_t$ are updated using \eqref{eq:muon_update}, we have that $\mx_t$ converges to the set of KKT points of \eqref{eq:constrained} a.s., regardless of initialization.
\end{theorem}

Precise statements and detailed proofs can be found in Section~\ref{sec:convergence}.

\section{Preliminaries}\label{sec:prelims}

\paragraph{General notation.} We let $\X\defeq\R^{n\times m}$ denote the space of real $n\times m$ matrices, corresponding to weight matrices in neural networks. We denote matrices in capital boldface and vectors in lowercase boldface. We let $\0$ denote the zero matrix of appropriate dimension. We let $\inprod{\mx}{\my}\defeq\Tr(\mx^\top\my)$ denote the Frobenius inner product. For a differentiable function $\calK:\X\to\R\cup\{\infty\}$, we let $\nabla\calK$ denote the gradient of $\calK$. We say $\calK$ is \emph{convex} if for all $\mx,\my\in\X$ and $\lam\in[0,1]$, \[ \calK((1-\lam)\mx+\lam\my)\leq(1-\lam)\calK(\mx)+\lam\calK(\my). \] We say $\mg$ is a \emph{subgradient} of $\calK$ at $\mx$ if for all $\my\in\X$, \[ \calK(\my)\geq\calK(\mx)+\inprod{\mg}{\my-\mx}. \] If $\calK$ is convex and differentiable at $\mx$, then $\nabla\calK(\mx)$ is the unique subgradient of $\calK$ at $\mx$. If $\calK$ is convex but nondifferentiable, we let $\partial\calK(\mx)$ denote the set of subgradients of $\calK$ at $\mx$ and overload $\nabla\calK(\mx)$ to denote an element of $\partial\calK(\mx)$. For a function $\calK$, we let $\calK^*$ denote the convex conjugate of $\calK$, where \[ \calK^*(\mx)\defeq\sup_{\my\in\X}\Par{\inprod{\mx}{\my}-\calK(\my)}. \] From this definition, we immediately deduce the \emph{Fenchel--Young inequality} \[ \calK(\mx)+\calK^*(\my)\geq\inprod{\mx}{\my}. \] We let $\dom(\calK)\defeq\{\mx\in\X\mid\calK(\mx)<\infty\}$ denote the effective domain of $\calK$. We say $\calK:\X\to\R\cup\{\infty\}$ is \emph{closed} if the set $\{\mx\in\dom(\calK)\mid\calK(\mx)\leq\alpha\}$ is closed for each $\alpha\in\R$. We say $\calK:\X\to\R\cup\{\infty\}$ is \emph{proper} if $\dom(\calK)$ is nonempty. The celebrated \emph{Fenchel--Moreau theorem} states that if $\calK:\X\to\R\cup\{\infty\}$ is convex, closed, and proper, then $\calK^{**}=\calK$. A corollary is that $\my\in\partial\calK(\mx)$ if and only if \[ \calK(\mx)+\calK^*(\my)=\inprod{\mx}{\my}, \] i.e. the Fenchel--Young inequality holds with equality, and $\mx\in\partial\calK^*(\my)$ if $\partial\calK^*(\my)$ is nonempty. We let $\chi_\D$ denote the characteristic function of a set $\D\subseteq\X$, where \[ \chi_\D(\mx)\defeq\begin{cases}
    0&\text{if }\mx\in\D\\
    \infty&\text{otherwise}
\end{cases}. \] For $k\in\{1,2,\dots,\min(n,m)\}$, we let $\vsig_k(\mx)$ denote the $k^\text{th}$ largest singular value of $\mx$.

\paragraph{Norms.} For a norm $\norm{\cdot}$ on $\X$, and $r>0$, we let $\ball_{\norm{\cdot}}(r)\defeq\{\mx\in\X\mid\norm{\mx}\leq r\}$ denote the ball of radius $r$. When $\D\subseteq\X$ and the norm is clear from context, we let $d(\mx,\D)\defeq\inf_{\my\in\D}\norm{\mx-\my}$ denote the distance from $\mx$ to $\D$. We let $\norm{\cdot}_*$ denote the dual norm of $\norm{\cdot}$, where \[ \norm{\mx}_*\defeq\sup_{\my\neq\0}\frac{\inprod{\mx}{\my}}{\norm{\my}}. \] It follows directly from this definition that $\inprod{\mx}{\my}\leq\norm{\mx}\norm{\my}_*$ and $\norm{\mx}_{**}=\norm{\mx}$. We define the matrix norms \[ \norm{\mx}_p\defeq\Par{\sum_{i=1}^n\sum_{j=1}^m\Abs{\mx_{ij}}^p}^{\frac{1}{p}},\,\normtr{\mx}\defeq\sum_{i=1}^{\min(n,m)}\vsig_i(\mx),\,\normf{\mx}\defeq\norm{\mx}_2,\,\normop{\mx}\defeq\vsig_1(\mx), \] where $p\in[1,\infty]$. $\norm{\cdot}_p$ is the entrywise $\ell_p$ norm, $\normtr{\cdot}$ is known as the \emph{trace norm} or \emph{nuclear norm}, $\normf{\cdot}$ is known as the \emph{Frobenius norm}, and $\normop{\cdot}$ is known as the \emph{spectral norm}. The dual norm of $\norm{\cdot}_p$ is $\norm{\cdot}_q$, where $\frac{1}{p}+\frac{1}{q}=1$, the dual norm of $\normtr{\cdot}$ is $\normop{\cdot}$, and $\normf{\cdot}$ is self-dual.

\begin{fact}\label{fact:norm_conjugate}
    Let $\norm{\cdot}$ be a norm on $\X$ with dual norm $\norm{\cdot}_*$. If $\calK(\mx)=\norm{\mx}$, then \[ \calK^*(\mx)=\chi_{\ball_*(1)}(\mx)=\begin{cases}
        0&\text{if }\norm{\mx}_*\leq1\\
        \infty&\text{otherwise}
    \end{cases}. \]
\end{fact}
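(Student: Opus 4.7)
The plan is to evaluate the convex conjugate $\calK^*(\mx) = \sup_{\my \in \X}\bigl(\inprod{\mx}{\my} - \norm{\my}\bigr)$ directly from its definition, splitting into two cases based on whether $\mx$ lies inside the dual unit ball $\ball_{\norm{\cdot}_*}(1)$. Both cases rely only on the defining inequality $\inprod{\mx}{\my} \leq \norm{\mx}_* \norm{\my}$ for dual norms and on homogeneity, so no substantive obstacle is expected.

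First I would handle the case $\norm{\mx}_* \leq 1$. The dual norm inequality gives $\inprod{\mx}{\my} \leq \norm{\mx}_* \norm{\my} \leq \norm{\my}$ for every $\my \in \X$, hence $\inprod{\mx}{\my} - \norm{\my} \leq 0$. Since $\my = \0$ attains the value $0$ in the supremum, we obtain $\calK^*(\mx) = 0$, matching $\chi_{\ball_{\norm{\cdot}_*}(1)}(\mx)$ on this region.

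Next I would handle the case $\norm{\mx}_* > 1$. By the definition of $\norm{\cdot}_*$ as a supremum of $\inprod{\mx}{\my}/\norm{\my}$, there exists some $\my_0 \neq \0$ with $\inprod{\mx}{\my_0} > \norm{\my_0}$. Plugging in the scaled direction $t\my_0$ for $t > 0$ and using positive homogeneity of the norm yields $\inprod{\mx}{t\my_0} - \norm{t\my_0} = t\bigl(\inprod{\mx}{\my_0} - \norm{\my_0}\bigr)$, which tends to $+\infty$ as $t \to \infty$. Therefore $\calK^*(\mx) = \infty$, matching $\chi_{\ball_{\norm{\cdot}_*}(1)}(\mx)$ on this region. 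Combining the two cases completes the proof.
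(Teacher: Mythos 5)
Your proof is correct. The paper states this as a known fact and does not supply a proof of its own, so there is nothing to compare against; your two-case computation --- using $\inprods{\mx}{\my}\leq\norms{\mx}_*\norms{\my}$ together with $\my=\0$ to get $\calK^*(\mx)=0$ on the dual unit ball, and positive homogeneity along a violating direction $\my_0$ to get $\calK^*(\mx)=\infty$ outside it --- is exactly the standard argument and fills this gap cleanly.
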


We say that a function $\calF:\X\to\R$ is \emph{$L$-smooth} if it is differentiable and \[ \normf{\nabla\calF(\my)-\nabla\calF(\mx)}\leq L\normf{\my-\mx}\text{ for all }\mx,\my\in\X. \] If $\calF$ is $L$-smooth, then \[ \calF(\my)\leq\calF(\mx)+\inprod{\nabla\calF(\mx)}{\my-\mx}+\frac{L}{2}\normf{\my-\mx}^2\text{ for all }\mx,\my\in\X. \]

For additional background on convex analysis, we refer to \cite{Rockafellar70}.

\subsection{Assumptions}\label{ssec:assumptions}

\begin{assumption}\label{assume:min}
    $\calF^\star \defeq \inf_{\mx \in \X} \calF(\mx)$ is finite, and there exists $\mx^\star\in\X$ such that $\calF(\mx^\star)=\calF^\star$.
\end{assumption}

Assumption~\ref{assume:min} is necessary for \eqref{eq:problem} to be well-posed. For our discrete-time analysis, we impose an additional smoothness assumption on $\calF$.

\begin{assumption}[$L$-smoothness]\label{assume:smooth}
    $\calF:\X\to\R$ is $L$-smooth.
\end{assumption}

We now define the variance of random matrices and introduce an assumption for the analysis of stochastic settings.

\begin{definition}[Variance]
    The \emph{variance} of an $\X$-valued random variable $\mx$ is defined as \[ \Var(\mx)\defeq\E\Brack{\normf{\mx-\E[\mx]}^2}. \]
\end{definition}

\begin{assumption}[Bounded variance]\label{assume:g}
The stochastic samples $\xi_t \sim \mathcal{D}$ are independent and identically distributed (i.i.d.). Additionally, the stochastic gradient $\nabla \calF(\mx_t, \xi_t)$ satisfies
\[
\E_{\xi_t\sim\calD}[\nabla \calF(\mx_t, \xi_t)] = \nabla \calF(\mx_t) \quad \text{and} \quad \Var(\nabla \calF(\mx_t, \xi_t)) \leq \frac{\sigma^2}{\nbatch},
\] 
where $\sigma^2$ is a constant and $\nbatch$ denotes the batch size.
\end{assumption}

\begin{assumption}[Iteration-wise bounded variance]\label{assume:iteration_bound}
The stochastic gradient $\nabla \calF(\mx_t, \xi_t)$ satisfies
\[
\E_{\xi_t\sim\calD}[\nabla \calF(\mx_t, \xi_t)] = \nabla \calF(\mx_t) \quad \text{and} \quad \Var(\nabla \calF(\mx_t, \xi_t)) \leq \Brack{\frac{\sigma^2}{\nbatch}}_t.
\]
\end{assumption}

We remark that Assumptions~\ref{assume:smooth} and \ref{assume:g} are standard in the literature for the analysis of stochastic optimization algorithms, e.g. \cite{BernsteinWAA18, DefossezBBU22, LiuWCLZLKL24, YuanLWZG24}.

Since we work within the \lionk{} framework, our last assumption concerns the choice of $\calK$.

\begin{assumption}\label{assume:convex}
    $\calK:\X\to\R$ is convex. This implies that $\calK$ is also closed and proper.
\end{assumption}

\section{Background on \texorpdfstring{\lionk{}}{Lion-K}}\label{sec:methods}

\lionk{} \cite{ChenLLL24} is a family of optimization algorithms developed to provide a theoretical foundation for the \lion{} optimizer, which was originally discovered via symbolic search \cite{ChenLHRWPDLHLL23}. Given a convex function $\calK:\X\to\R$ with subgradient $\nabla\calK$, the update rule for \lionk{} is given by \eqref{eq:lionk_update}. This update rule is equivalent to the original one given by \cite{ChenLLL24}, where the last update is \begin{equation}\label{eq:original}
    \mx_{t+1}=\mx_t+\eta_t(\nabla\calK(\tmm_{t+1})-\lam\mx_t), 
\end{equation} under the reparameterization $\eta_t\gets\frac{\eta_t}{1+\eta_t\lam}$.

\lionk{} can be seen as mixing several fundamental design elements in optimization: \begin{itemize}
    \item \textbf{Polyak momentum} $\mm$, which accumulates the exponential moving average of the gradients, controlled by the coefficient $\pol$.
    \item \textbf{Nesterov momentum} $\tmm$, which introduces extra gradient components into the update, controlled by the coefficient $\nes$.
    \item \textbf{Nonlinear preconditioning} $\nabla\calK$, which applies a transformation to the momentum before it is used to update the parameters. This is legitimate since $\nabla\calK$ is a monotone map, meaning that $\inprod{\nabla\calK(\mx)-\nabla\calK(\my)}{\mx-\my}\geq0$, which follows from the convexity of $\calK$ (see Lemma~\ref{lem:monotonic}).
    \item \textbf{Decoupled weight decay} $\lam\mx$, which reduces the parameter magnitude in addition to the update $\nabla\calK(\tmm)$. This introduces a regularization effect (see Section~\ref{ssec:weight_decay}) and is closely related to Frank--Wolfe style algorithms.
\end{itemize}

\subsection{Effect of decoupled weight decay}\label{ssec:weight_decay}

Due to the interplay of decoupled weight decay and the $\nabla\calK$ mapping, \lionk{} optimizers minimize the regularized objective \eqref{eq:reg_obj}. To gain a quick heuristic understanding of how the regularization term arises, we can simply examine a fixed point of the optimizer. Using the original update rule \eqref{eq:original}, assume that the algorithm reaches a fixed point, where we have $\mm_{t+1}=\tmm_{t+1}=-\nabla\calF(\mx_t)$ and $\nabla\calK(\tmm_{t+1})-\lam\mx_t=\0$. This yields $\nabla\calK(-\nabla\calF(\mx_t))-\lam\mx_t=\0.$ Since $\nabla\calK^*$ is the inverse function of $\nabla\calK$ by convex conjugacy, we have \[ \nabla\hcalF(\mx_t)=\nabla\calF(\mx_t)+\nabla\calK^*(\lam\mx_t)=\0. \] This suggests that every fixed point of the algorithm must be a stationary point of the regularized objective $\hcalF$.

\subsection{Lyapunov function for \texorpdfstring{\lionk{}}{Lion-K}}\label{ssec:lyapunov}

The fixed-point analysis alone does not guarantee the convergence of the algorithm. We give a full analysis using a Lyapunov function method, patterned off \cite{ChenLLL24}. To understand this, it helps to focus on the limit of small step sizes, where the dynamics of \lionk{} can be modeled by the ordinary differential equation (ODE)\begin{equation}\label{eq:lionk_ode}
    \begin{aligned}
        \dot{\mm}_t&=-\nabla\calF(\mx_t)-\mm_t\\
        \dot{\mx}_t&=\nabla\calK\Par{\mm_t-\eps\Par{\nabla\calF(\mx_t)+\mm_t}}-\lam\mx_t.
    \end{aligned}
\end{equation} Here, the effect of Nesterov momentum is captured by $\eps\in[0,1]$.

It is not immediately obvious why the \lionk{} ODE would serve to minimize $\hcalF(\mx)$, as the ODE does not necessarily guarantee a monotonic decrease in $\hcalF(\mx)$. However, we show in Appendix~\ref{app:continuous} that the \lionk{} ODE minimizes the auxiliary function \[ \calH(\mx,\mm)\defeq\hcalF(\mx)+\frac{1-\eps}{1+\eps\lam}\Par{\calK^*(\lam\mx)+\calK(\mm)-\inprod{\mm}{\lam\mx}} \] in the sense that it is monotonically decreasing along the ODE trajectories, i.e. $\frac{\dd}{\dd t}\calH(\mx_t,\mm_t)\leq0$ until a local minimum is achieved. In other words, $\calH(\mx,\mm)$ admits a Lyapunov function of the \lionk{} ODE \eqref{eq:lionk_ode}. Here, $\calH(\mx,\mm)$ is a joint function of position $\mx$ and momentum $\mm$. It can be interpreted as a Hamiltonian function in a physical metaphor, where $\hcalF(\mx)$ is the potential energy, and the additional term represents a form of kinetic energy.

Moreover, minimizing $\calH(\mx,\mm)$ is equivalent to minimizing the regularized objective $\hcalF(\mx)$. This can be seen by the Fenchel--Young inequality, which ensures that $\calK^*(\lam\mx)+\calK(\mm)-\inprod{\mm}{\lam\mx}\geq0$, with equality when $\mm\in\partial\calK(\lam\mx)$.

\subsection{Constrained optimization problem}\label{ssec:constrained}

If $\calK^*$ takes on positive infinite values, \lionk{} effectively solves the constrained optimization problem \begin{equation}\label{eq:lionk_constrained}
    \min_{\mx\in\X}\hcalF(\mx)\text{ s.t. }\lam\mx\in\dom(\calK^*).
\end{equation}

If the algorithm is initialized outside the effective domain, \cite{ChenLLL24} shows that in the continuous-time setting, $\mx_t$ is rapidly driven into the effective domain and stays inside afterwards. Specifically, this process is guaranteed to be exponentially fast in time: \begin{equation}\label{eq:dist}
    d(\lam\mx_t,\dom(\calK^*))\leq\exp(-\lam t)d(\lam\mx_0,\dom(\calK^*))\text{ for all }t\geq0.
\end{equation} Consequently, $\lam\mx_t$ rapidly converges to $\dom(\calK^*)$ and remains within this domain once it arrives, where the Lyapunov function is finite and decreases monotonically.

\section{\muon{} meets \texorpdfstring{\lionk{}}{Lion-K}}\label{sec:muon}

We recall the \muon{} update rule \eqref{eq:muon_update} and formally define the matrix sign function.

\begin{definition}[Matrix sign]\label{def:msgn}
    Let $\mmu\msig\mv^\top$ be a singular value decomposition of $\mx\in\X$. The \emph{matrix sign} function, denoted $\msgn$, is given by \[ \msgn(\mx)\defeq(\mx\mx^\top)^{-\half}\mx=\mmu\sgn(\msig)\mv^\top, \] where $(\cdot)^{-\half}$ denotes the Moore--Penrose inverse of the matrix square root and $\sgn$ denotes the entrywise signum function.
\end{definition}

The matrix sign of $\mx$ is also known as the \emph{Mahalanobis whitening} or \emph{zero-phase component analysis (ZCA) whitening}, which stands as the optimal whitening procedure that minimizes the distortion with the original data. It is also closely related to the polar decomposition of $\mx$.

We now illustrate the connection between \muon{} and \lionk{} using the following well-known fact.

\begin{fact}[\cite{Watson92, CandesR09}]\label{fact:tr_subg}
    Let $\calK(\mx)=\normtr{\mx}$ and $\mx\in\X$ with singular value decomposition $\mmu\msig\mv^\top$. Then \[ \partial\calK(\mx)=\Brace{\mmu\sgn(\msig)\mv^\top+\mw\mid\mw\in\X,\mmu^\top\mw=\0,\mw\mv=\0,\normop{\mw}\leq1}. \] In particular, $\msgn(\mx)\in\partial\calK(\mx)$.
\end{fact}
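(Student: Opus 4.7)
The plan is to reduce the statement to a characterization of the equality case in the Fenchel--Young inequality, and then analyze this via the singular value decomposition. Since $\calK(\mx) = \normtr{\mx}$ is convex, closed, and proper, the Fenchel--Moreau theorem recalled in Section~\ref{sec:prelims} yields that $\my \in \partial\calK(\mx)$ if and only if $\calK(\mx) + \calK^*(\my) = \inprod{\mx}{\my}$. By Fact~\ref{fact:norm_conjugate} and the fact that $\normop{\cdot}$ is the dual norm of $\normtr{\cdot}$, we have $\calK^*(\my) = 0$ when $\normop{\my} \leq 1$ and $+\infty$ otherwise. Thus the claim reduces to showing that
\[
\Brace{\my \in \X \mid \normop{\my} \leq 1,\ \inprod{\mx}{\my} = \normtr{\mx}}
\]
coincides with the right-hand side of the displayed identity.

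For the inclusion $\supseteq$, take $\my = \mmu\sgn(\msig)\mv^\top + \mw$ satisfying the three listed conditions. A direct expansion using $\mmu^\top\mmu = \mi$, $\mv^\top\mv = \mi$, and the cyclic property of trace gives
\[
\inprod{\mx}{\my} = \Tr(\msig\sgn(\msig)\mv^\top\mv) + \Tr(\msig\mmu^\top\mw\mv) = \Tr(\msig\sgn(\msig)) + 0 = \normtr{\mx},
\]
where the second trace vanishes from either $\mmu^\top\mw=\0$ or $\mw\mv=\0$. For the operator norm bound, expanding $\my\my^\top$ and using $\mw\mv = \0$ to kill the cross terms produces $\my\my^\top = \mmu\sgn(\msig)^2\mmu^\top + \mw\mw^\top$, a sum of two PSD matrices whose column spaces are mutually orthogonal (since $\mmu^\top\mw = \0$). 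The eigenvalues of $\my\my^\top$ are therefore the union of those of the two summands, yielding $\normop{\my}^2 = \max(1, \normop{\mw}^2) \leq 1$.

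For the inclusion $\subseteq$, suppose $\normop{\my} \leq 1$ and $\inprod{\mx}{\my} = \normtr{\mx}$. The key tool is von Neumann's trace inequality $\inprod{\mx}{\my} \leq \sum_i \vsig_i(\mx)\vsig_i(\my)$, which, together with $\vsig_i(\my) \leq 1$, forces $\vsig_i(\my) = 1$ for every $i \leq r \defeq \rank(\mx)$. The equality case of von Neumann further forces $\mx$ and $\my$ to admit a simultaneous singular value decomposition; aligning frames so that the top-$r$ singular vectors of $\my$ coincide with the columns of $\mmu$ and $\mv$, one obtains $\my = \mmu\sgn(\msig)\mv^\top + \mw$ with $\mw$ supported on the orthogonal complements of these frames (so that $\mmu^\top\mw = \0$ and $\mw\mv = \0$) and $\normop{\mw} \leq \vsig_{r+1}(\my) \leq 1$. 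The \emph{in particular} claim then follows immediately by setting $\mw = \0$, since $\msgn(\mx) = \mmu\sgn(\msig)\mv^\top$ by Definition~\ref{def:msgn}.

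The main obstacle is extracting the alignment statement from the equality case of von Neumann's inequality, which is delicate when $\mx$ has repeated or zero singular values. A more self-contained route, which I would prefer in the writeup, is to block-decompose $\my$ directly against the orthonormal frames $(\mmu, \mmu_\perp)$ and $(\mv, \mv_\perp)$; the explicit trace identity $\inprod{\mx}{\my} = \normtr{\mx}$ then pins down the $(\mmu, \mv)$ block to be $\sgn(\msig)$, while $\normop{\my} \leq 1$ combined with this diagonal block being an isometry forces the $(\mmu, \mv_\perp)$ and $(\mmu_\perp, \mv)$ blocks to vanish, leaving only the $(\mmu_\perp, \mv_\perp)$ block as the free matrix $\mw$. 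This avoids any appeal to the general equality-case lemma and keeps the argument within elementary linear algebra.
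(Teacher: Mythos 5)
The paper does not prove Fact~\ref{fact:tr_subg} at all: it is imported as a known result from \cite{Watson92, CandesR09}, so there is no in-paper argument to compare against. Your proposal is a correct, essentially self-contained derivation, and the overall architecture is sound: reducing membership in $\partial\calK(\mx)$ to the Fenchel--Young equality $\normtr{\mx}=\inprod{\mx}{\my}$ together with $\normop{\my}\leq 1$ is exactly the right move given Fact~\ref{fact:norm_conjugate}, the $\supseteq$ direction via the orthogonal PSD splitting of $\my\my^\top$ is clean, and your preference for the block decomposition against $(\mmu,\mmu_\perp)$ and $(\mv,\mv_\perp)$ over the equality case of von Neumann's inequality is well judged --- the von Neumann equality case genuinely is fiddly with repeated or zero singular values, whereas the block route needs only Cauchy--Schwarz. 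One reading convention you are implicitly using should be made explicit: the fact is only true if $\mmu\msig\mv^\top$ is the \emph{compact} SVD (columns of $\mmu,\mv$ spanning the column and row spaces of $\mx$); with a full SVD the condition $\mmu^\top\mw=\0$ forces $\mw=\0$ and the claimed set collapses to $\{\msgn(\mx)\}$, which is false for rank-deficient $\mx$ (e.g.\ $\partial\normtr{\0}=\ball_{\normop{\cdot}}(1)$).

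The one step stated too loosely is the claim that ``the explicit trace identity pins down the $(\mmu,\mv)$ block to be $\sgn(\msig)$.'' The trace identity alone gives only $\sum_i \vsig_i\,\vu_i^\top\my\vv_i = \sum_i\vsig_i$; you need $\normop{\my}\leq 1$ already at this stage to conclude $|\vu_i^\top\my\vv_i|\leq 1$ and hence $\vu_i^\top\my\vv_i=1$ for each $i$ with $\vsig_i>0$. From there, $\|\my\vv_i\|_2\leq 1$ and $\vu_i^\top(\my\vv_i)=1$ force $\my\vv_i=\vu_i$ (and symmetrically $\my^\top\vu_i=\vv_i$) by the equality case of Cauchy--Schwarz, which simultaneously yields $\ma=\mi$ and kills the $(\mmu,\mv_\perp)$ and $(\mmu_\perp,\mv)$ blocks. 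So the diagonal block and the off-diagonal vanishing come out of one combined argument rather than sequentially from the trace identity and then the norm bound. With that ordering corrected, the proof is complete and elementary.
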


Hence, \muon{} can be interpreted as \lionk{} with $\calK(\mx)=\normtr{\mx}$ and $\nabla\calK(\mx)=\msgn(\mx)$, corresponding to a matrix generalization of $\lion{}$, which is \lionk{} with $\calK(\mx)=\norm{\mx}_1$ and $\nabla\calK(\mx)=\sgn(\mx)$. Indeed, $\normtr{\cdot}$ and $\normop{\cdot}$ are precisely the Schatten $1$- and $\infty$-norms, which are the $\ell_1$ and $\ell_\infty$ norms on the singular values of a matrix, respectively, and the suggestively named $\msgn$ function can be seen as a matrix analog of $\sgn$.

\subsection{Spectral norm constraint}

By \eqref{eq:lionk_constrained} and Fact~\ref{fact:norm_conjugate}, we conclude that \muon{} solves the constrained optimization problem \eqref{eq:constrained}. The bound $\frac{1}{\lam}$ is determined solely by the weight decay coefficient $\lam$. Without weight decay ($\lam=0$), we obtain the original unconstrained optimization problem.

In fact, the bound constraint arises from any update of the form \[ \mx_{t+1}=\mx_t+\eta_t(\mo_t-\lam\mx_t), \] where $\mo_t$ has bounded norm, regardless of how it is updated, although the solution may not necessarily minimize the objective within the constrained set. Because $\mo_t$ has bounded norm, the weight decay term dominates the update whenever the constraint is not satisfied (i.e. $\norm{\lam\mx_t}>1$), leading to an exponential decrease in magnitude. This intuition is formalized by the following result, which is a discrete-time variant of \eqref{eq:dist}.

\begin{proposition}\label{prop:dist}
    For any update of the form $\mx_{t+1}=\mx_t+\eta_t(\mo_t-\lam\mx_t)$ with $\norm{\mo_t}\leq b$, $\eta_t\lam\leq1$, and $\lam>0$, where $\norm{\cdot}$ is any norm on $\X$ and $b$ is a constant, we have \[ \norm{\mx_t}-\frac{b}{\lam}\leq\Par{\prod_{s=0}^{t-1}(1-\eta_s\lam)}\Par{\norm{\mx_0}-\frac{b}{\lam}}. \]
\end{proposition}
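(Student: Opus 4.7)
The plan is to rewrite the update rule as a near-convex combination and then carry out a one-step contraction argument that unrolls into the desired product bound.

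First, I would observe that the update can be rewritten as $\mx_{t+1} = (1-\eta_t\lam)\mx_t + \eta_t\mo_t$. Because the hypothesis $\eta_t\lam \le 1$ guarantees that the coefficient $(1-\eta_t\lam)$ is nonnegative, I can apply the triangle inequality and the bound $\norm{\mo_t}\le b$ to obtain
\[
\norm{\mx_{t+1}} \le (1-\eta_t\lam)\norm{\mx_t} + \eta_t b.
\]
This is the only place the nonnegativity condition is used; without it, the triangle inequality would not produce a useful bound.

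Next I would perform the algebraic trick that makes the telescoping visible: subtract $b/\lam$ from both sides and rewrite the right-hand side as
\[
\norm{\mx_{t+1}} - \frac{b}{\lam} \le (1-\eta_t\lam)\norm{\mx_t} + \eta_t b - \frac{b}{\lam} = (1-\eta_t\lam)\Par{\norm{\mx_t} - \frac{b}{\lam}},
\]
where the last equality follows from $\eta_t b - \tfrac{b}{\lam} = -\tfrac{b}{\lam}(1-\eta_t\lam)$. This is the key identity that turns the additive bound into a multiplicative contraction toward $b/\lam$.

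Finally, I would conclude by a straightforward induction on $t$: the base case $t=0$ is trivial, and given the inductive hypothesis, chaining the one-step inequality yields
\[
\norm{\mx_t} - \frac{b}{\lam} \le \Par{\prod_{s=0}^{t-1}(1-\eta_s\lam)}\Par{\norm{\mx_0} - \frac{b}{\lam}}.
\]
There is no real obstacle in this argument; the only subtlety worth flagging in the write-up is the role of $\eta_t\lam \le 1$, which ensures the triangle-inequality step is tight enough to preserve the sign and the meaning of the contraction factor (e.g. so that if $\norm{\mx_0} \le b/\lam$, the right-hand side remains nonpositive and thus gives the trivial bound $\norm{\mx_t} \le b/\lam$, while if $\norm{\mx_0} > b/\lam$ the excess shrinks geometrically toward zero).
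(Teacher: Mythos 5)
Your proposal is correct and is essentially identical to the paper's proof: both rewrite the update as the combination $(1-\eta_t\lam)\mx_t+\eta_t\mo_t$, apply the triangle inequality using $\eta_t\lam\leq 1$ and $\norm{\mo_t}\leq b$, use the identity $\eta_t b-\tfrac{b}{\lam}=-\tfrac{b}{\lam}(1-\eta_t\lam)$ to obtain the one-step contraction, and unroll recursively. No differences worth noting.
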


\begin{proof}
    We have \begin{align*}
        \norm{\mx_{t+1}}-\frac{b}{\lam}&=\norm{\mx_t+\eta_t(\mo_t-\lam\mx_t)}-\frac{b}{\lam}\leq(1-\eta_t\lam)\norm{\mx_t}+\eta_t\norm{\mo_t}-\frac{b}{\lam}\\
        &\leq(1-\eta_t\lam)\norm{\mx_t}+\eta_tb-\frac{b}{\lam}=(1-\eta_t\lam)\Par{\norm{\mx_t}-\frac{b}{\lam}}.
    \end{align*}
    Applying this recursively yields the result.
\end{proof}

For \muon{}, we have $\normsop{\msgn(\tmm_{t+1})}\leq1$, so Proposition~\ref{prop:dist} applies.

Although the continuous-time results in Section~\ref{sec:methods} provide intuition on the dynamics of \lionk{}, the nondifferentiability of the trace norm ultimately prevents us from directly applying these results in establishing the theoretical properties of \muon{}. Instead, we will resort to our discrete-time analysis in Section~\ref{sec:convergence} to rigorously prove the convergence and implicit bias of \muon{}.

\subsection{Generalizations of \muon{}}

Generalizing beyond the nuclear norm, we can take $\calK$ to be a general convex spectral function, i.e. \[ \calK(\mx)=\sum_{i=1}^{\min(n,m)}\phi(\vsig_i(\mx)), \]  where $\phi:[0,\infty)\to\R$ is a convex scalar function. Because $\frac{\partial\vsig_i(\mx)}{\partial\mx}=\vu_i\vv_i^\top$, where $\vu_i$ and $\vv_i$ are the singular vectors associated with $\vsig_i$, a subgradient of $\calK$ above is given by \[ \nabla\calK(\mx)=\mmu\diag{\{\nabla \phi(\vsig_i)\}}\mv^\top, \] where $\mx$ has singular value decomposition $\mmu\msig\mv^\top$.

Assume $\nabla\phi$ is upper bounded by $b$, i.e. $\sup_{x\geq0}\nabla\phi(x)=b$. Then the update $\mo_t=\nabla\calK(\tmm_{t+1})$ satisfies $\normop{\mo_t}\leq b$, which yields a constraint of $\normop{\mx}\leq\frac{b}{\lam}$ by Proposition~\ref{prop:dist}.

In the practical implementation of \muon{}, $\phi$ is effectively taken as a high-order polynomial inspired by Newton--Schulz iteration for calculating the matrix sign.

Table~\ref{tab:convex_matrix_summary} summarizes several common convex matrix functions along with their key properties. Importantly, the convexity of $\mathcal{K}$ ensures the convergence of the associated \lionk{} optimizers, as established through both continuous-time and discrete-time analyses (Section~\ref{sec:convergence} and Appendix~\ref{app:continuous}). Consequently, our framework introduces a large class of provably convergent optimization algorithms parameterized by a convex function $\calK$.

\begin{table}[t]
    \centering
    \small
    \caption{Summary of common convex matrix functions. Note that the nuclear norm is a special case of the spectral sum with $\phi(\cdot)=|\cdot|$.}
    \label{tab:convex_matrix_summary}
    \begin{tabular}{lll}
        \toprule
        \textbf{Function} & \textbf{Domain} & \textbf{Remarks} \\
        \midrule
        Squared Frobenius norm $\normf{\mx}^2$ & All matrices & Smooth, strongly convex \\
        Nuclear norm $\normtr{\mx}$ & All matrices & Lipschitz continuous, convex \\
        Spectral norm $\normop{\mx}$ & All matrices & Lipschitz continuous, convex \\
        Quadratic form $\Tr(\mx^\top\mathbf{M}\mx)$ & All matrices, $\mathbf{M}\succeq\mathbf{0}$ & Smooth quadratic form, convex \\
        Spectral sum $\sum_i \phi(\vsig_i(\mx))$ & All matrices, convex $\phi$ & General convex spectral functions \\
        \bottomrule
    \end{tabular}
\end{table}

More generally, the constrained optimization problem $\min_{\mx\in\X}\calF(\mx)$ such that $\mx\in\D$, where $\D$ is a convex set, can be solved using \lionk{} with $\calK^*(\mx)=\chi_\D(\mx)$. As in the cases of \lion{} and \muon{}, a particularly important instance is when $\D$ is a norm ball $\ball_{\norm{\cdot}}(1)$. In this setting, $\nabla\calK(\mx)$ corresponds to the solution of a linear minimization oracle problem $\max_{\mz\in\X}\inprod{\mx}{\mz}$ such that $\norm{\mz}\leq1$. Related discussions can be found, for example, in \cite{BernsteinN24} and \cite{PethickXAZSC25}.

\section{Convergence analysis of \muon{}}\label{sec:convergence}

In this section, we generalize the analysis of \cite{ChenLLL24} to handle $\X$-valued updates and leverage this result to provide convergence rates for the KKT score function~\eqref{eq:score} and prove the convergence of \muon{} to the set of KKT points of \eqref{eq:constrained}. Our strategy consists of three components: \begin{itemize}
    \item In Section~\ref{ssec:kkt}, we show that $\mx^\star$ is a KKT point of \eqref{eq:constrained} if and only if $\normop{\lam\mx^\star}\leq1$ and the KKT score function \eqref{eq:score} vanishes at $\mx^\star$.
    \item We give a discrete-time analysis of matrix \lionk{} and show that, as a corollary, the KKT score function is $O\Par{\frac{1}{\sqrt{T}}}$ in the deterministic gradient setting (Section~\ref{ssec:discrete}) and $O\Par{\frac{1}{\sqrt{T}}+\frac{\sig}{\sqrt{\nbatch}}}$ in the stochastic gradient setting (Section~\ref{ssec:stochastic}).
    \item In Section~\ref{ssec:convergence}, we put together the previous results and conclude that \muon{} converges to the set of KKT points of \eqref{eq:constrained} using LaSalle's invariance principle.
\end{itemize}

\subsection{KKT points of spectral-norm-constrained problems}\label{ssec:kkt}
We note that \eqref{eq:constrained} is equivalent to \[ \min_{\mx\in\X}\calF(\mx)\text{ s.t. }\vsig_i(\mx)\leq\frac{1}{\lam}\text{ for all }i\in\{1,2,\dots,\min(n,m)\} \] and define the KKT points of this constrained optimization problem.

\begin{definition}[KKT points]\label{def:kkt}
    We say that $\mx^\star\in\X$ is a point that satisfies the Karush--Kuhn--Tucker (KKT) conditions, or is a \emph{KKT point}, of \eqref{eq:constrained} if there exists $\vmu\in\R^{\min(n,m)}$ such that the following conditions hold: \begin{itemize}
        \item (stationarity) $\nabla\calF(\mx^\star)+\sum_{i=1}^{\min(n,m)}\vmu_i\vu_i\vv_i^\top=\0$, where $\vu_i$ and $\vv_i$ are singular vectors corresponding to $\vsig_i(\mx^\star)$.
        \item (primal feasibility) $\vsig_i(\mx^\star)\leq\frac{1}{\lam}$ for all $i\in\{1,2,\dots,\min(n,m)\}$.
        \item (dual feasibility) $\vmu\geq\0$ entrywise.
        \item (complementary slackness) $\vmu_i\Par{\vsig_i(\mx^\star)-\frac{1}{\lam}}=0$ for all $i\in\{1,2,\dots,\min(n,m)\}$.
    \end{itemize}
\end{definition}

At a given point $\mx$, it is not immediately clear whether the KKT conditions are satisfied. To address this challenge, we work with an equivalent characterization based on the KKT score function \eqref{eq:score}. We show the direction used in our analysis here and defer the rest of the proof to Appendix~\ref{app:proofs}.

\begin{restatable}{proposition}{restatescore}\label{prop:kkt}
    $\mx^\star\in\X$ is a KKT point of \eqref{eq:constrained} if and only if $\normop{\lam\mx^\star}\leq1$ and $\calS(\mx^\star)=0$.
\end{restatable}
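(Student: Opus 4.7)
The plan is to directly apply the standard KKT conditions for the constrained problem \eqref{eq:constrained} and convert them into the condition $\calS(\mx^\star) = 0$ using the duality between the nuclear and spectral norms. The two facts I will rely on are: (i) the subdifferential characterization $\mg\in\partial\normop{\mx}$ iff $\normtr{\mg}\leq 1$ and $\inprod{\mg}{\mx} = \normop{\mx}$, and (ii) the trace--spectral duality $|\inprod{\ma}{\mb}|\leq\normtr{\ma}\normop{\mb}$.

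First I would spell out the KKT system: primal feasibility $\normop{\mx^\star}\leq 1/\lam$ (equivalently $\normop{\lam\mx^\star}\leq 1$); stationarity, i.e.\ existence of a multiplier $\mu\geq 0$ and $\mg\in\partial\normop{\mx^\star}$ with $\nabla\calF(\mx^\star) + \mu\mg = \0$; and complementary slackness $\mu(\normop{\mx^\star}-1/\lam) = 0$.

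For the forward direction I would case-split on $\mu$. If $\mu=0$, stationarity forces $\nabla\calF(\mx^\star)=\0$, so both summands of $\calS(\mx^\star)$ vanish. If $\mu>0$, complementary slackness gives $\normop{\mx^\star}=1/\lam$, and then the subgradient conditions $\normtr{\mg}\leq 1$, $\inprod{\mg}{\mx^\star}=\normop{\mx^\star}=1/\lam$ combined with Hölder force $\normtr{\mg}=1$. Substituting $\nabla\calF(\mx^\star)=-\mu\mg$ yields $\normtr{\nabla\calF(\mx^\star)}=\mu$ and $\inprod{\lam\mx^\star}{\nabla\calF(\mx^\star)} = -\mu\lam\inprod{\mg}{\mx^\star} = -\mu$, so these cancel in $\calS(\mx^\star)$.

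For the reverse direction, assume $\normop{\lam\mx^\star}\leq 1$ and $\calS(\mx^\star)=0$. Primal feasibility is immediate. If $\nabla\calF(\mx^\star)=\0$, taking $\mu=0$ trivially satisfies stationarity and complementary slackness. Otherwise, $\calS(\mx^\star)=0$ rearranges to $\inprod{\lam\mx^\star}{\nabla\calF(\mx^\star)} = -\normtr{\nabla\calF(\mx^\star)} < 0$, and combining with Hölder gives $\normtr{\nabla\calF(\mx^\star)}\leq\normop{\lam\mx^\star}\normtr{\nabla\calF(\mx^\star)}$, forcing $\normop{\lam\mx^\star}=1$. Setting $\mu\defeq\normtr{\nabla\calF(\mx^\star)}>0$ and $\mg\defeq-\nabla\calF(\mx^\star)/\mu$, one checks $\normtr{\mg}=1$ and $\inprod{\mg}{\mx^\star}=1/\lam=\normop{\mx^\star}$, so $\mg\in\partial\normop{\mx^\star}$ and both stationarity and complementary slackness hold.

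The main obstacle is cleanly pinning down the subdifferential $\partial\normop{\mx^\star}$ and tracking the equality case of Hölder, but once those are in hand the argument is essentially a bookkeeping exercise requiring no machinery beyond elementary convex duality.
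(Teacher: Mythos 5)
Your proof is correct, and for the harder (``if'') direction it takes a genuinely different route from the paper. The paper proves that direction by showing $\calS(\mx^\star)=0$ forces $\lam\mx^\star\in\partial\normstr{-\nabla\calF(\mx^\star)}$, then invokes Watson's explicit SVD-based characterization of $\partial\normstr{\cdot}$ (Fact~\ref{fact:tr_subg}) to decompose $\lam\mx^\star=\mmu\sgn(\msig)\mv^\top+\mw$ and reads off per-constraint multipliers $\vmu_i=\vsig_i(\nabla\calF(\mx^\star))$ for the singular-value formulation of Definition~\ref{def:kkt}. You instead stay entirely at the level of the single constraint $\normop{\mx}\leq 1/\lam$, use the abstract characterization $\partial\norm{\mx}=\{\mg:\norm{\mg}_*\leq1,\ \inprods{\mg}{\mx}=\norm{\mx}\}$, and construct the multiplier $\mu=\normstr{\nabla\calF(\mx^\star)}$ and subgradient $\mg=-\nabla\calF(\mx^\star)/\mu$ directly; this avoids any SVD bookkeeping and is arguably cleaner. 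The one mismatch is that the paper's official Definition~\ref{def:kkt} states the KKT system with $\min(n,m)$ constraints $\vsig_i(\mx)\leq 1/\lam$ and multipliers attached to $\vu_i\vv_i^\top$, whereas you verify the KKT system for the single spectral-norm constraint; to fully match the paper you would need one more line observing that the two systems are equivalent (diagonalize the $\mh$ in Watson's characterization $\mg=\mmu_1\mh\mv_1^\top$ and absorb the rotation into the choice of top singular vectors). Since the paper itself silently switches to the single-constraint formulation in its Appendix proof of the converse, this is a cosmetic gap rather than a mathematical one. Your ``only if'' direction, by contrast, is essentially the paper's Appendix argument, except that you derive $\normstr{\mg}=1$ from the H\"older equality case rather than from $\Tr(\mh)=1$.
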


\begin{proof}
    Suppose $\normop{\lam\mx^\star}\leq1$ and \[ \calS(\mx^\star)=\normtr{\nabla\calF(\mx^\star)}+\inprod{\lam\mx^\star}{\nabla\calF(\mx^\star)}=\normtr{-\nabla\calF(\mx^\star)}-\inprod{\lam\mx^\star}{-\nabla\calF(\mx^\star)}=0. \] Then $\lam\mx^\star$ is a subgradient of $\normtr{\cdot}$ at $-\nabla\calF(\mx^\star)$, since for all $\my\in\X$, \[ \normtr{\my}\geq\normtr{\my}\normop{\lam\mx^\star}\geq\inprod{\lam\mx^\star}{\my}=\normtr{-\nabla\calF(\mx^\star)}+\inprod{\lam\mx^\star}{\my+\nabla\calF(\mx^\star)}. \] Let $\mmu\msig\mv^\top$ be a singular value decomposition of $-\nabla\calF(\mx^\star)$. By Fact~\ref{fact:tr_subg}, \[ \lam\mx^\star=\mmu\sgn(\msig)\mv^\top+\mw,\text{ where }\mmu^\top\mw=\0,\text{ }\mw\mv=\0,\text{ and }\normop{\mw}\leq1. \] Then setting $\vmu$ to be the singular values of $\nabla\calF(\mx^\star)$ in nonincreasing order shows that $\mx^\star$ satisfies the KKT conditions: \begin{itemize}
        \item (stationarity) since $\mmu^\top\mw=\0$ and $\mw\mv=\0$, $\lam\mx^\star$ has singular value decomposition \[ \begin{pmatrix}
            \vu_1&\cdots&\vu_r&\vx_1&\cdots&\vx_{n-r}
        \end{pmatrix}\diag{1,\dots,1,\vsig_1(\mw),\dots,\vsig_{\min(n,m)-r}(\mw)}\begin{pmatrix}
            \vv_1^\top\\
            \vdots\\
            \vv_r^\top\\
            \vy_1^\top\\
            \vdots\\
            \vy_{m-r}^\top
        \end{pmatrix}, \] where $r\defeq\rank(\nabla\calF(\mx^\star))$. In addition, $\normop{\mw}\leq1$ implies that $\vsig_1(\mx^\star)=\dots=\vsig_r(\mx^\star)=\frac{1}{\lam}$, with corresponding singular vectors $\vu_1,\dots,\vu_r$ and $\vv_1,\dots,\vv_r$. But by construction, $\nabla\calF(\mx^\star)$ has singular values $\vmu$ and singular vectors $-\vu_1,\dots,-\vu_n$ and $\vv_1,\dots,\vv_m$. Thus \[ \nabla\calF(\mx^\star)+\sum_{i=1}^{\min(n,m)}\vmu_i\vu_i\vv_i^\top=\sum_{i=1}^r\vmu_i(-\vu_i)\vv_i^\top+\sum_{i=1}^r\vmu_i\vu_i\vv_i^\top=\0. \]
        \item (primal feasibility) $\normop{\lam\mx^\star}\leq1$ by assumption.
        \item (dual feasibility) $\vmu\geq\0$ entrywise by the nonnegativity of singular values.
        \item (complementary slackness) for $i\in\{1,2,\dots,\min(n,m)\}$, if $\vmu_i=0$, then the condition holds. Otherwise, $\vmu_i=\vsig_i(\nabla\calF(\mx^\star))>0$ implies $\vsig_i(\lam\mx^\star)=1$, so the condition holds.
    \end{itemize}
\end{proof}

\subsection{Convergence rate of \texorpdfstring{\lionk{}}{Lion-K} with deterministic gradient}\label{ssec:discrete}

Our analysis in this section is an extension of the discrete-time analysis in \cite{ChenLLL24}, which is in turn inspired by the Lyapunov function for continuous-time \lionk{} dynamics (cf. Appendix~\ref{app:continuous}).

\begin{proposition}\label{prop:discrete}
    Under Assumptions~\ref{assume:min}, \ref{assume:smooth}, and \ref{assume:convex}, let $0\leq\nes<\pol<1$ and $\lam>0$, and suppose $\mx_t$, $\mm_t$, and $\tmm_t$ are updated using \eqref{eq:lionk_update}. Let \begin{equation}
        \begin{aligned}
            c_t&\defeq\frac{\eta_t\lam\nes}{\eta_t\lam(1-\nes)+(1-\pol)}\\
            b_t&\defeq\frac{\nes(1-\pol)}{(\pol-\nes)(\eta_t\lam(1-\nes)+(1-\pol))}\\
            a_t&\defeq c_t+1\\
            \calH_t&\defeq\calF(\mx_t)-\calF^\star+\frac{1}{\lam}\calK^*(\lam\mx_t)+\frac{c_t}{\lam}(\calK^*(\lam\mx_t)+\calK(\mm_t)-\inprod{\lam\mx_t}{\mm_t})\\
            \Gamma_t&\defeq\inprod{\nabla\calK(\tmm_{t+1})-\lam\mx_{t+1}}{\tmm_{t+1}-\nabla\calK^*(\lam\mx_{t+1})}\\
            \Delta_t&\defeq\inprod{\nabla\calK(\tmm_{t+1})-\nabla\calK(\mm_{t+1})}{\tmm_{t+1}-\mm_{t+1}}.
        \end{aligned}
    \end{equation} Then for all $T>0$, \begin{equation}\label{eq:gamma_delta_bound}
        \frac{1}{T}\sum_{t=0}^{T-1}\eta_t(a_t\Gamma_t+b_t\Delta_t)\leq\frac{\calH_0-\calH_T}{T}+\frac{L}{2T}\sum_{t=0}^{T-1}\eta_t^2\normf{\nabla\calK(\tmm_{t+1})-\lam\mx_{t+1}}^2.
    \end{equation}
\end{proposition}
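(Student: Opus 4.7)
The plan is a discrete-time Lyapunov argument: I would establish the per-step descent inequality
\[
\calH_{t+1}-\calH_t \;\leq\; -\eta_t(a_t\Gamma_t+b_t\Delta_t) + \frac{L\eta_t^2}{2}\normf{\nabla\calK(\tmm_{t+1})-\lam\mx_{t+1}}^2,
\]
and then telescope from $t=0$ to $T-1$ and divide by $T$ to obtain \eqref{eq:gamma_delta_bound}. Note that $\Gamma_t,\Delta_t\geq 0$ by the monotonicity of the subdifferential of the convex $\calK$ (Lemma~\ref{lem:monotonic}), so the inequality is a genuine descent result up to an $O(\eta_t^2)$ noise term.

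To derive the per-step bound I would expand $\calH_{t+1}-\calH_t$ into four pieces: (i) the change in $\calF$, (ii) the change in $\tfrac{1+c_t}{\lam}\calK^*(\lam\mx_t)$, (iii) the change in $\tfrac{c_t}{\lam}\calK(\mm_t)$, and (iv) the change in $-\tfrac{c_t}{\lam}\inprod{\lam\mx_t}{\mm_t}$. For (i), apply $L$-smoothness of $\calF$: this yields $\inprod{\nabla\calF(\mx_t)}{\mx_{t+1}-\mx_t}+\tfrac{L}{2}\normf{\mx_{t+1}-\mx_t}^2$, and the quadratic term becomes precisely the noise term in the bound via the update $\mx_{t+1}-\mx_t=\eta_t(\nabla\calK(\tmm_{t+1})-\lam\mx_{t+1})$. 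For (ii) and (iii), invoke the subgradient inequalities $\calK^*(\lam\mx_{t+1})-\calK^*(\lam\mx_t)\leq\inprod{\nabla\calK^*(\lam\mx_{t+1})}{\lam\mx_{t+1}-\lam\mx_t}$ and $\calK(\mm_{t+1})-\calK(\mm_t)\leq\inprod{\nabla\calK(\mm_{t+1})}{\mm_{t+1}-\mm_t}$. For (iv), split the bilinear difference as $\inprod{\lam(\mx_{t+1}-\mx_t)}{\mm_{t+1}}+\inprod{\lam\mx_t}{\mm_{t+1}-\mm_t}$.

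Next I would substitute the identities implied by the updates. A direct calculation from \eqref{eq:lionk_update} gives $\tmm_{t+1}-\mm_{t+1}=-(\pol-\nes)(\mm_t+\mg_t)$ and $\mm_{t+1}-\mm_t=-(1-\pol)(\mm_t+\mg_t)$, so
\[
\mm_{t+1}-\mm_t \;=\; \frac{1-\pol}{\pol-\nes}\,(\tmm_{t+1}-\mm_{t+1}) \qquad\text{and}\qquad \mg_t \;=\; \frac{\mm_{t+1}-\tmm_{t+1}}{\pol-\nes}-\mm_t.
\]
After substituting these together with $\mx_{t+1}-\mx_t=\eta_t(\nabla\calK(\tmm_{t+1})-\lam\mx_{t+1})$ and collecting, the resulting linear terms should regroup into $-\eta_t a_t\,\inprod{\nabla\calK(\tmm_{t+1})-\lam\mx_{t+1}}{\tmm_{t+1}-\nabla\calK^*(\lam\mx_{t+1})}$ (the $\Gamma_t$ piece) plus $-\eta_t b_t\,\inprod{\nabla\calK(\tmm_{t+1})-\nabla\calK(\mm_{t+1})}{\tmm_{t+1}-\mm_{t+1}}$ (the $\Delta_t$ piece). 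The formulas for $c_t$, $a_t=c_t+1$, and $b_t$ in the statement are exactly the coefficients that make this regrouping balance.

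The main obstacle is the algebraic bookkeeping: turning function-value differences into inner products via convexity, carrying out the substitutions for $\mg_t$ and $\mm_{t+1}-\mm_t$ cleanly, and matching the result against the prescribed $a_t,b_t,c_t$. A secondary wrinkle is that $c_t$ depends on $\eta_t$, so the Lyapunov function itself varies with $t$; this causes no issue because the bound is stated directly in terms of $\calH_0-\calH_T$, which telescopes regardless of whether the Lyapunov function is time-invariant. Once the per-step inequality is in hand, the conclusion is immediate by summing and dividing by $T$.
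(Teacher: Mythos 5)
Your proposal is correct and follows essentially the same route as the paper's proof: bound $\calH_{t+1}-\calH_t$ via $L$-smoothness of $\calF$, the subgradient inequalities for $\calK$ and $\calK^*$, and a product-rule split of the bilinear term, then substitute $\mx_{t+1}-\mx_t=\eta_t(\nabla\calK(\tmm_{t+1})-\lam\mx_{t+1})$ and $\mm_{t+1}-\mm_t=\frac{1-\pol}{\pol-\nes}(\tmm_{t+1}-\mm_{t+1})$ and verify that the stated $a_t,b_t,c_t$ (via $c_t=(a_t+b_t)\nes-b_t\pol$ and $c_t=a_t-1$) make the linear terms collapse to $-\eta_t(a_t\Gamma_t+b_t\Delta_t)$ before telescoping. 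Your bilinear split pairs $\mm_{t+1}$ with $\mx_{t+1}-\mx_t$ where the paper pairs $\mm_t$, but the resulting cross terms cancel exactly, so the bookkeeping lands in the same place.
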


\begin{proof}
    By smoothness, we have \begin{equation}\label{eq:smooth}
        \calF(\mx_{t+1})-\calF(\mx_t)\leq\inprod{\nabla\calF(\mx_t)}{\mx_{t+1}-\mx_t}+\frac{L}{2}\normf{\mx_{t+1}-\mx_t}^2.
    \end{equation} By convexity, we have \begin{equation}\label{eq:convex}
        \begin{aligned}
            \calK^*(\lam\mx_{t+1})-\calK^*(\lam\mx_t)&\leq\inprod{\lam\nabla\calK^*(\lam\mx_{t+1})}{\mx_{t+1}-\mx_t}\\
            \calK(\mm_{t+1})-\calK(\mm_t)&\leq\inprod{\nabla\calK(\mm_{t+1})}{\mm_{t+1}-\mm_t}.
        \end{aligned}
    \end{equation}
    Finally, we have \begin{equation}\label{eq:inprod}
        \inprod{\mx_{t+1}}{\mm_{t+1}}-\inprod{\mx_t}{\mm_t}=\inprod{\mm_t}{\mx_{t+1}-\mx_t}+\inprod{\mx_{t+1}}{\mm_{t+1}-\mm_t}.
    \end{equation} Combining \eqref{eq:smooth}, \eqref{eq:convex}, and \eqref{eq:inprod} gives \begin{equation}\label{eq:potential_step}
        \calH_{t+1}-\calH_t\leq\inprod{\nabla_\mx\calH_t}{\mx_{t+1}-\mx_t}+\inprod{\nabla_\mm\calH_t}{\mm_{t+1}-\mm_t}+\frac{L}{2}\normf{\mx_{t+1}-\mx_t}^2,
    \end{equation} where \[ \nabla_\mx\calH_t\defeq\nabla\calF(\mx_t)+(1+c_t)\nabla\calK^*(\lam\mx_{t+1})-c_t\mm_t\quad\text{and}\quad\nabla_\mm\calH_t\defeq\frac{c_t}{\lam}\nabla\calK(\mm_{t+1})-c_t\mx_{t+1}. \]
    
    Recalling \eqref{eq:lionk_update}, we have \[ \mx_{t+1}-\mx_t=\eta_t\vdelta_t\text{ and }\mm_{t+1}-\mm_t=\frac{1-\pol}{\pol-\nes}\Par{\tmm_{t+1}-\mm_{t+1}}, \] where $\vdelta_t\defeq\nabla\calK(\tmm_{t+1})-\lam\mx_{t+1}$. Substituting into \eqref{eq:potential_step}, \begin{equation}\label{eq:potential_bound}
        \begin{aligned}
            \calH_{t+1}-\calH_t&\leq\eta_t\inprod{\nabla_\mx\calH_t}{\vdelta_t}+\frac{1-\pol}{\pol-\nes}\inprod{\nabla_\mm\calH_t}{\tmm_{t+1}-\mm_{t+1}}+\frac{L}{2}\normf{\mx_{t+1}-\mx_t}^2\\
            &=\eta_t\inprod{\vdelta_t}{a_t\nabla\calK^*(\lam\mx_{t+1})-((a_t+b_t)\nes-b_t\pol)\mm_t+(a_t-(a_t+b_t)\nes+b_t\pol)\nabla\calF(\mx_t)}\\
            &\quad+\frac{b_t\eta_t\lam}{c_t}\inprod{\frac{c_t}{\lam}\nabla\calK(\mm_{t+1})-c_t\mx_{t+1}}{\tmm_{t+1}-\mm_{t+1}}+\frac{L}{2}\normf{\mx_{t+1}-\mx_t}^2\\
            &=-\eta_t\inprod{\vdelta_t}{a_t(\tmm_{t+1}-\nabla\calK^*(\lam\mx_{t+1}))+b_t(\tmm_{t+1}-\mm_{t+1})}\\
            &\quad+b_t\eta_t\inprod{\nabla\calK(\mm_{t+1})-\lam\mx_{t+1}}{\tmm_{t+1}-\mm_{t+1}}+\frac{L}{2}\normf{\mx_{t+1}-\mx_t}^2\\
            &=-a_t\eta_t\Gamma_t-b_t\eta_t\Par{\inprod{\vdelta_t}{\tmm_{t+1}-\mm_{t+1}}-\inprod{\nabla\calK(\mm_{t+1})-\lam\mx_{t+1}}{\tmm_{t+1}-\mm_{t+1}}}\\
            &\quad+\frac{L}{2}\normf{\mx_{t+1}-\mx_t}^2\\
            &=-\eta_t(a_t\Gamma_t+b_t\Delta_t)+\frac{\eta_t^2L}{2}\normf{\vdelta_t}^2,
        \end{aligned}
    \end{equation} where the second line uses $c_t=(a_t+b_t)\nes-b_t\pol$ and $c_t=a_t-1$ and the fourth line uses \[ \tmm_{t+1}-\mm_{t+1}=-(\pol-\nes)(\mm_t+\nabla\calF(\mx_t)). \]

    Rearranging \eqref{eq:potential_bound}, summing over $T$ iterations, and dividing both sides by $T$ gives the result.
\end{proof}

Proposition~\ref{prop:discrete} establishes bounds for general \lionk{} optimizers in the discrete-time setting, and we will use this result to bound the convergence rate of the KKT score function \eqref{eq:score}.

To avoid tedium in the analysis, we assume that $\mx_0$ is initialized so that $\normop{\lam\mx_0}\leq1$. Note that by Proposition~\ref{prop:dist}, this implies $\normop{\lam\mx_t}\leq1$ for all $t\geq0$. We remark that in practical implementations of \muon{}, the weight decay parameter $\lam$ is known, so $\mx_0$ can always be chosen to satisfy $\normop{\lam\mx_0}\leq1$.

We state several helper lemmas and defer their proofs to Appendix~\ref{app:proofs}.

\begin{restatable}{lemma}{restatemonotonic}\label{lem:monotonic}
    Let $\calK,\calK^*:\X\to\R\cup\{\infty\}$ be a convex, closed, and proper pair of conjugate functions with subgradients $\nabla\calK$ and $\nabla\calK^*$. Then for all $\mx,\my\in\X$, \begin{align}
        \inprod{\nabla\calK(\mx)-\nabla\calK(\my)}{\mx-\my}&\geq0\label{eq:monotonic1}\\
        \inprod{\nabla\calK(\mx)-\my}{\mx-\nabla\calK^*(\my)}&\geq0.\label{eq:monotonic2}
    \end{align}
\end{restatable}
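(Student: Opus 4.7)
The plan is to prove the two inequalities separately, using standard convex analysis tools from the preliminaries (the subgradient inequality, Fenchel--Young, and Fenchel--Moreau).

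For \eqref{eq:monotonic1}, I would invoke the definition of subgradient twice. Since $\nabla\calK(\mx)\in\partial\calK(\mx)$ and $\nabla\calK(\my)\in\partial\calK(\my)$, the subgradient inequality gives
\[
\calK(\my)\geq\calK(\mx)+\inprods{\nabla\calK(\mx)}{\my-\mx},\qquad \calK(\mx)\geq\calK(\my)+\inprods{\nabla\calK(\my)}{\mx-\my}.
\]
Adding the two inequalities cancels $\calK(\mx)$ and $\calK(\my)$ and yields $\inprods{\nabla\calK(\mx)-\nabla\calK(\my)}{\mx-\my}\geq 0$. This is the classical monotonicity of the subdifferential and requires only the convexity of $\calK$.

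For \eqref{eq:monotonic2}, the idea is to reduce it to \eqref{eq:monotonic1} applied to the conjugate function $\calK^*$. Under the assumptions (convex, closed, proper), the Fenchel--Moreau theorem gives $\calK^{**}=\calK$ and the equivalence $\mg\in\partial\calK(\mx)\iff\mx\in\partial\calK^*(\mg)$ recalled in the preliminaries. In particular, since $\nabla\calK(\mx)\in\partial\calK(\mx)$, we have $\mx\in\partial\calK^*(\nabla\calK(\mx))$, so $\mx$ is a valid choice of $\nabla\calK^*$ evaluated at $\nabla\calK(\mx)$ (using the paper's convention that $\nabla\calK^*$ denotes an arbitrary element of $\partial\calK^*$). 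Now apply the monotonicity statement \eqref{eq:monotonic1} to $\calK^*$ at the two points $\nabla\calK(\mx)$ and $\my$, which gives
\[
\inprods{\nabla\calK^*(\nabla\calK(\mx))-\nabla\calK^*(\my)}{\nabla\calK(\mx)-\my}\geq 0.
\]
Substituting the selection $\nabla\calK^*(\nabla\calK(\mx))=\mx$ and rearranging the inner product (which is symmetric) gives exactly \eqref{eq:monotonic2}.

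The only subtle point is the careful handling of the multi-valued subdifferential: the lemma writes $\nabla\calK$ and $\nabla\calK^*$ as though they were single-valued, but both inequalities hold for any selection from $\partial\calK$ and $\partial\calK^*$, so the step where we choose $\mx$ as the particular subgradient of $\calK^*$ at $\nabla\calK(\mx)$ is legitimate under the paper's overloaded notation. No smoothness or further regularity is needed, so I do not expect a major obstacle.
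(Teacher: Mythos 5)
Your proof is correct and takes essentially the same approach as the paper: \eqref{eq:monotonic1} via the summed subgradient inequalities, and \eqref{eq:monotonic2} by combining monotonicity with the Fenchel--Moreau correspondence $\mg\in\partial\calK(\mx)\iff\mx\in\partial\calK^*(\mg)$. The only cosmetic difference is that the paper substitutes $\my\gets\nabla\calK^*(\my)$ into \eqref{eq:monotonic1} for $\calK$ and selects $\my\in\partial\calK(\nabla\calK^*(\my))$, whereas you apply monotonicity to $\calK^*$ and select $\mx\in\partial\calK^*(\nabla\calK(\mx))$; these are dual phrasings of the identical argument, and your handling of the multi-valued selection is the right one.
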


\begin{restatable}{lemma}{restatesubg}\label{lem:subg_inprod}
    Let $\calK(\mx)=\norm{\mx}$ for a norm $\norm{\cdot}$ on $\X$. Then $\inprod{\nabla\calK(\mx)}{\mx}=\calK(\mx)$.
\end{restatable}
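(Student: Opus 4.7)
The plan is to use the positive homogeneity of the norm together with the subgradient inequality, applied at two cleverly chosen points. This avoids having to unpack the structure of subgradients for specific norms and yields a one-line computation.

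First I would recall the defining subgradient inequality: for all $\my \in \X$,
\[
\calK(\my) \geq \calK(\mx) + \inprod{\nabla\calK(\mx)}{\my - \mx}.
\]
Since $\calK(\mx) = \norm{\mx}$ is positively homogeneous of degree $1$, we have $\calK(t\mx) = t\calK(\mx)$ for any $t \geq 0$. Substituting $\my = 2\mx$ gives $2\calK(\mx) \geq \calK(\mx) + \inprod{\nabla\calK(\mx)}{\mx}$, so $\inprod{\nabla\calK(\mx)}{\mx} \leq \calK(\mx)$. Substituting $\my = \0$ gives $0 \geq \calK(\mx) - \inprod{\nabla\calK(\mx)}{\mx}$, so $\inprod{\nabla\calK(\mx)}{\mx} \geq \calK(\mx)$. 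Combining the two inequalities yields the claim.

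As a sanity check, one could also arrive at the same conclusion via conjugacy: by Fact~\ref{fact:norm_conjugate}, $\calK^*(\my) = \chi_{\ball_*(1)}(\my)$, and by the Fenchel--Moreau theorem, $\nabla\calK(\mx) \in \partial\calK(\mx)$ if and only if $\calK(\mx) + \calK^*(\nabla\calK(\mx)) = \inprod{\mx}{\nabla\calK(\mx)}$. Since $\partial\calK(\mx)$ is nonempty, $\calK^*(\nabla\calK(\mx))$ must be finite and therefore equals zero, giving the Fenchel--Young equality $\inprod{\nabla\calK(\mx)}{\mx} = \calK(\mx)$.

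There is essentially no obstacle here; both routes are short. The only mild subtlety is that $\nabla\calK(\mx)$ refers to an arbitrary element of $\partial\calK(\mx)$, so the statement must hold for every selection. Both approaches above handle this uniformly, since the subgradient inequality and the Fenchel--Young equality both hold for every element of $\partial\calK(\mx)$. The homogeneity-based argument is the one I would write up, as it is the most self-contained.
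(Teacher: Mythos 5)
Your proposal is correct and its main argument — plugging $\my=2\mx$ and $\my=\0$ into the subgradient inequality and using positive homogeneity of the norm — is exactly the paper's proof. The conjugacy-based sanity check is a fine alternative but unnecessary.
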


\begin{restatable}{lemma}{restatebound}\label{lem:norm_bound}
    In the setting of Proposition~\ref{prop:discrete}, let $\calK(\mx)=\normtr{\mx}$, $\nabla\calK(\mx)=\msgn(\mx)$, $\normop{\lam\mx_0}\leq1$, $\eta_t=\eta$, and $C_\calK\defeq\sqrt{\min(n,m)}$. Then for all $t>0$, \[ \normf{\nabla\calF(\mx_t)+\tmm_t}\leq\frac{2\eta C_\calK L(1+\nes-\pol)}{1-\pol}+\nes\pol^{t-1}\normf{\nabla\calF(\mx_0)+\mm_0}. \]
\end{restatable}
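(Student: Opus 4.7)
The plan is to introduce the auxiliary quantity $\mr_t \defeq \nabla\calF(\mx_t)+\mm_t$ and first establish a clean linear recursion for it, then convert back to $\nabla\calF(\mx_t)+\tmm_t$ at the end. Expanding the definition of $\mm_t$ from \eqref{eq:lionk_update} gives $\mr_t = \pol\mr_{t-1} + (\nabla\calF(\mx_t) - \nabla\calF(\mx_{t-1}))$, so the Frobenius norm satisfies $\normf{\mr_t}\leq\pol\normf{\mr_{t-1}} + L\normf{\mx_t-\mx_{t-1}}$ by $L$-smoothness (Assumption~\ref{assume:smooth}). The payoff is that $\pol<1$, so iterating this inequality will produce a geometric factor $\pol^t\normf{\mr_0}$ plus a bounded "forcing" term, exactly matching the structure of the target bound.

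Next I would uniformly bound the step size $\normf{\mx_t-\mx_{t-1}}$. From the update, $\mx_t - \mx_{t-1} = \eta(\msgn(\tmm_t) - \lam\mx_t)$. Since $\msgn(\cdot)$ has spectral norm at most $1$ and the hypothesis $\normop{\lam\mx_0}\leq1$ combined with Proposition~\ref{prop:dist} (applied with $\mo_t = \msgn(\tmm_{t+1})$ and $b=1$) gives $\normop{\lam\mx_t}\leq1$ for all $t$, we obtain $\normop{\msgn(\tmm_t)-\lam\mx_t}\leq2$. Converting spectral to Frobenius via $\normf{\ma}\leq\sqrt{\rank(\ma)}\normop{\ma}\leq C_\calK\normop{\ma}$ yields $\normf{\mx_t-\mx_{t-1}}\leq 2\eta C_\calK$, hence $\normf{\nabla\calF(\mx_t)-\nabla\calF(\mx_{t-1})}\leq 2\eta C_\calK L$.

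Feeding this into the recursion and iterating gives $\normf{\mr_t}\leq \pol^t\normf{\mr_0} + \frac{2\eta C_\calK L}{1-\pol}$ by the standard geometric-sum bound. Finally, I would observe the identity
\[
\nabla\calF(\mx_t)+\tmm_t = \nes\bigl(\mm_{t-1}+\nabla\calF(\mx_{t-1})\bigr) + \bigl(\nabla\calF(\mx_t)-\nabla\calF(\mx_{t-1})\bigr) = \nes\mr_{t-1} + \bigl(\nabla\calF(\mx_t)-\nabla\calF(\mx_{t-1})\bigr),
\]
which follows by substituting $\tmm_t = \nes\mm_{t-1}-(1-\nes)\nabla\calF(\mx_{t-1})$ and regrouping. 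Applying the triangle inequality, the smoothness bound, and the already-established bound on $\normf{\mr_{t-1}}$ then gives
\[
\normf{\nabla\calF(\mx_t)+\tmm_t}\leq\nes\pol^{t-1}\normf{\mr_0} + \frac{2\nes\eta C_\calK L}{1-\pol} + 2\eta C_\calK L,
\]
and collecting terms with common denominator $1-\pol$ yields exactly the claimed bound.

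The main obstacle is really just a conceptual one: recognizing that the natural quantity to track is $\mr_t = \nabla\calF(\mx_t)+\mm_t$ (with the Polyak coefficient $\pol$) rather than $\nabla\calF(\mx_t)+\tmm_t$ directly, and only converting at the very end. Everything else is a routine combination of Proposition~\ref{prop:dist}, the $\normf{\cdot}\leq C_\calK\normop{\cdot}$ bound for matrices of rank at most $\min(n,m)$, and $L$-smoothness; no subtle inequality handling is required.
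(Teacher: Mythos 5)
Your proposal is correct and follows essentially the same route as the paper's proof: bound the per-step gradient change by $2\eta C_\calK L$ via Proposition~\ref{prop:dist} and the Frobenius-vs-spectral norm comparison, unroll the geometric recursion for $\nabla\calF(\mx_t)+\mm_t$, and convert to $\nabla\calF(\mx_t)+\tmm_t$ at the end. The only cosmetic difference is that you name the auxiliary quantity $\mr_t$ and state the recursion explicitly, whereas the paper unrolls the sum inline.
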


\begin{proposition}\label{prop:muon_bound}
    In the setting of Lemma~\ref{lem:norm_bound}, for all $T>0$, \begin{equation}\label{eq:inprod_bound}
        \frac{1}{T}\sum_{t=1}^T\calS(\mx_t)\leq\frac{\calH_0-\calH_T}{\eta T}+2\eta C_\calK^2L+\frac{2\nes C_\calK\normf{\nabla\calF(\mx_0)+\mm_0}}{(1-\pol)T}+\frac{4\eta C_\calK^2L(1+\nes-\pol)}{1-\pol}.
    \end{equation}
\end{proposition}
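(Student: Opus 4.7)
The plan is to invoke Proposition~\ref{prop:discrete} with $\calK(\mx) = \normtr{\mx}$, $\nabla\calK = \msgn$, and constant step size $\eta_t = \eta$, and then convert the inequality \eqref{eq:gamma_delta_bound} into a bound on $\frac{1}{T}\sum_{s=1}^T \calS(\mx_s)$. Because $\normop{\lam\mx_0} \leq 1$ and Proposition~\ref{prop:dist} (with $b=1$) guarantees $\normop{\lam\mx_t} \leq 1$ for all $t$, we may select $\nabla\calK^*(\lam\mx_{t+1}) = \0$ throughout, which is a valid subgradient of the indicator $\calK^* = \chi_{\ball_{\normop{\cdot}}(1)}$ at every point of its domain.

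First I would simplify $\Gamma_t$. Using the zero subgradient choice together with Lemma~\ref{lem:subg_inprod} (so that $\inprod{\msgn(\tmm_{t+1})}{\tmm_{t+1}} = \normtr{\tmm_{t+1}}$), we obtain $\Gamma_t = \normtr{\tmm_{t+1}} - \inprod{\lam\mx_{t+1}}{\tmm_{t+1}}$. Lemma~\ref{lem:monotonic} forces $\Gamma_t, \Delta_t \geq 0$, and since $a_t \geq 1$ and $b_t \geq 0$, we have $a_t\Gamma_t + b_t\Delta_t \geq \Gamma_t$. On the RHS of \eqref{eq:gamma_delta_bound}, since $\msgn(\tmm_{t+1})$ is a partial isometry of rank at most $\min(n,m)$, $\normf{\msgn(\tmm_{t+1})} \leq C_\calK$, and $\normop{\lam\mx_{t+1}} \leq 1$ implies $\normf{\lam\mx_{t+1}} \leq C_\calK$, so $\normf{\nabla\calK(\tmm_{t+1}) - \lam\mx_{t+1}}^2 \leq 4C_\calK^2$. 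Plugging in yields $\frac{1}{T}\sum_{t=0}^{T-1}\Gamma_t \leq \frac{\calH_0 - \calH_T}{\eta T} + 2\eta C_\calK^2 L$.

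The key bridge step is to relate $\Gamma_{s-1}$ (indexed so that it involves $\tmm_s$ and $\mx_s$) back to $\calS(\mx_s)$. I would write $\calS(\mx_s) - \Gamma_{s-1} = \bigl(\normtr{\nabla\calF(\mx_s)} - \normtr{\tmm_s}\bigr) + \inprods{\lam\mx_s}{\nabla\calF(\mx_s) + \tmm_s}$ and control each summand using (i) the reverse triangle inequality for $\normtr{\cdot}$ together with $\normtr{-\ma} = \normtr{\ma}$, and (ii) the dual-norm bound $\inprods{\ma}{\mb} \leq \normop{\ma}\normtr{\mb}$. This gives $\calS(\mx_s) - \Gamma_{s-1} \leq 2\normtr{\nabla\calF(\mx_s) + \tmm_s} \leq 2C_\calK\normf{\nabla\calF(\mx_s) + \tmm_s}$, where the last step is Cauchy--Schwarz on the singular values.

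Finally I would apply Lemma~\ref{lem:norm_bound} to bound $\normf{\nabla\calF(\mx_s) + \tmm_s}$ by a steady-state term plus a geometric tail. Summing the steady-state term over $s = 1, \ldots, T$ produces the $\frac{4\eta C_\calK^2 L(1 + \nes - \pol)}{1 - \pol}$ contribution, while $\sum_{s=1}^T \pol^{s-1} \leq \frac{1}{1 - \pol}$ produces the $\frac{2\nes C_\calK \normf{\nabla\calF(\mx_0) + \mm_0}}{(1 - \pol)T}$ contribution. Adding this to the earlier bound on $\frac{1}{T}\sum_t \Gamma_t$ yields the claimed inequality. The main obstacle is the bridge step: Proposition~\ref{prop:discrete} naturally produces $\Gamma_t$, which measures the Fenchel gap between $\tmm_{t+1}$ and $\lam\mx_{t+1}$, whereas the KKT score requires the gradient $\nabla\calF(\mx_s)$ in place of $-\tmm_s$, and the cost of this substitution is precisely the Nesterov tracking error that Lemma~\ref{lem:norm_bound} is designed to control.
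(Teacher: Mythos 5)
Your proposal is correct and follows essentially the same route as the paper: apply Proposition~\ref{prop:discrete} with the nuclear norm, lower-bound $a_t\Gamma_t+b_t\Delta_t$ by $\Gamma_t$ (using $\Gamma_t,\Delta_t\geq0$, $a_t\geq1$, $b_t\geq0$), and convert $\Gamma_{t-1}$ into $\calS(\mx_t)$ at the cost of the Nesterov tracking error $\normf{\nabla\calF(\mx_t)+\tmm_t}$ controlled by Lemma~\ref{lem:norm_bound}. The only difference is local: where the paper derives the bridge inequality $\calS(\mx_t)\leq\Gamma_{t-1}+2C_\calK\normf{\nabla\calF(\mx_t)+\tmm_t}$ via a four-term decomposition involving $\msgn(\nabla\calF(\mx_t))$ and $\msgn(\tmm_t)$ followed by Cauchy--Schwarz in the Frobenius norm, you reach the identical estimate more directly from the reverse triangle inequality for $\normstr{\cdot}$, the duality bound $\inprods{\lam\mx_t}{\cdot}\leq\normsop{\lam\mx_t}\normstr{\cdot}$, and $\normstr{\cdot}\leq C_\calK\normf{\cdot}$.
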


\begin{proof}
    Using the notation of Proposition~\ref{prop:discrete}, we have \begin{align*}
        \calS(\mx_t)&=\inprod{\msgn(\nabla\calF(\mx_t))+\lam\mx_t}{\nabla\calF(\mx_t)}\\
        &=\inprod{\msgn(\tmm_t)-\lam\mx_t}{\tmm_t}-\inprod{\msgn(\tmm_t)-\lam\mx_t}{\nabla\calF(\mx_t)+\tmm_t}\\
        &\quad+\inprod{-\msgn(\nabla\calF(\mx_t))-\msgn(\tmm_t)}{\tmm_t}\\
        &\quad-\inprod{-\msgn(\nabla\calF(\mx_t))-\msgn(\tmm_t)}{\nabla\calF(\mx_t)+\tmm_t}\\
        &\leq\inprod{\msgn(\tmm_t)-\lam\mx_t}{\tmm_t}-\inprod{\msgn(\tmm_t)-\lam\mx_t}{\nabla\calF(\mx_t)+\tmm_t}\\
        &\quad-\inprod{-\msgn(\nabla\calF(\mx_t))-\msgn(\tmm_t)}{\nabla\calF(\mx_t)+\tmm_t}\\
        &\leq\Gamma_{t-1}+\inprod{\msgn(\nabla\calF(\mx_t))+\lam\mx_t}{\nabla\calF(\mx_t)+\tmm_t}\\
        &\leq\Gamma_{t-1}+\normf{\msgn(\nabla\calF(\mx_t))+\lam\mx_t}\normf{\nabla\calF(\mx_t)+\tmm_t}\\
        &\leq\Gamma_{t-1}+2C_\calK\Par{\frac{2\eta C_\calK L(1+\nes-\pol)}{1-\pol}+\nes\pol^{t-1}\normf{\nabla\calF(\mx_0)+\mm_0}}\\
        &=\Gamma_{t-1}+\frac{4\eta C_\calK^2L(1+\nes-\pol)}{1-\pol}+2\nes\pol^{t-1}C_\calK\normf{\nabla\calF(\mx_0)+\mm_0},
    \end{align*} where the first line uses Fact~\ref{fact:tr_subg} and Lemma~\ref{lem:subg_inprod}, the fifth line uses \begin{align*}
        \inprod{\msgn(\nabla\calF(\mx_t))+\msgn(\tmm_t)}{\tmm_t}&=\inprod{\msgn(\nabla\calF(\mx_t))}{\tmm_t}+\normtr{\tmm_t}\\
        &\geq\normtr{\tmm_t}-\normtr{\tmm_t}\normop{\msgn(\nabla\calF(\mx_t))}\\
        &\geq0,
    \end{align*} the seventh line uses \[ \partial\calK^*(\my)=\begin{cases}
        \{\0\}&\text{if }\normop{\my}<1\\
        \{\mz\in\X\mid\inprod{\mz}{\my}=\normtr{\mz}\}&\text{if }\normop{\my}=1
    \end{cases}, \] the eighth line uses Cauchy--Schwarz, and the ninth line uses Lemma~\ref{lem:norm_bound}. It follows that \begin{align*}
        \frac{1}{T}\sum_{t=1}^T\calS(\mx_t)&\leq\frac{1}{T}\sum_{t=1}^T\Par{\Gamma_{t-1}+2\nes\pol^{t-1}C_\calK\normf{\nabla\calF(\mx_0)+\mm_0}}+\frac{4\eta C_\calK^2L(1+\nes-\pol)}{1-\pol}\\
        &\leq\frac{1}{T}\sum_{t=0}^{T-1}\Par{a_t\Gamma_t+b_t\Delta_t+2\nes\pol^tC_\calK\normf{\nabla\calF(\mx_0)+\mm_0}}+\frac{4\eta C_\calK^2L(1+\nes-\pol)}{1-\pol}\\
        &\leq\frac{\calH_0-\calH_T}{\eta T}+\frac{\eta L}{2T}\sum_{t=0}^{T-1}\normf{\msgn(\tmm_{t+1})-\lam\mx_{t+1}}^2\\
        &\quad+\frac{2\nes C_\calK\normf{\nabla\calF(\mx_0)+\mm_0}}{(1-\pol)T}+\frac{4\eta C_\calK^2L(1+\nes-\pol)}{1-\pol}\\
        &\leq\frac{\calH_0-\calH_T}{\eta T}+2\eta C_\calK^2L+\frac{2\nes C_\calK\normf{\nabla\calF(\mx_0)+\mm_0}}{(1-\pol)T}+\frac{4\eta C_\calK^2L(1+\nes-\pol)}{1-\pol},
    \end{align*} where the third line uses Proposition~\ref{prop:discrete} and the fourth line uses \[ \sum_{t=0}^{T-1}\pol^t\leq\sum_{j=0}^\infty\pol^j=\frac{1}{1-\pol}. \]
\end{proof}

Our convergence rates in the deterministic gradient setting follow directly from the previous results.

\begin{theorem}\label{thm:discrete}
    Under Assumptions~\ref{assume:min}, \ref{assume:smooth}, and \ref{assume:convex}, let $0\leq\nes<\pol<1$, $\lam>0$, $\eta_t=\eta=\Theta\Par{\frac{1}{\sqrt{T}}}$, and $\normop{\lam\mx_0}\leq1$, and suppose $\mx_t$, $\mm_t$, and $\tmm_t$ are updated using \eqref{eq:lionk_update} with deterministic gradients and that $\nabla\calK$ has bounded norm. Then \begin{align*}
        \min_{1\leq t\leq T}\inprod{\nabla\calK(\tmm_t)-\lam\mx_t}{\tmm_t-\nabla\calK^*(\lam\mx_t)}=O\Par{\frac{1}{\sqrt{T}}}\\
        \min_{1\leq t\leq T}\inprod{\nabla\calK(\tmm_t)-\nabla\calK(\mm_t)}{\tmm_t-\mm_t}=O\Par{\frac{1}{\sqrt{T}}}.
    \end{align*} Moreover, when $\mx_t$, $\mm_t$, and $\tmm_t$ are updated using \eqref{eq:muon_update} with deterministic gradients, \[ \min_{1\leq t\leq T}\calS(\mx_t)=O\Par{\frac{1}{\sqrt{T}}}.
    \]
\end{theorem}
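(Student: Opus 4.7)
The plan is to derive all three rates as direct corollaries of the two technical bounds that have already been proven: Proposition~\ref{prop:discrete} delivers the $\Gamma_t$ and $\Delta_t$ averaged bound for general \lionk{}, and Proposition~\ref{prop:muon_bound} delivers the averaged bound on $\calS$ in the \muon{} case. With $\eta_t = \eta = \Theta(1/\sqrt{T})$ substituted in, I expect every term on the right-hand side of these bounds to become either $O(1/\sqrt{T})$ or $O(1/T)$, and I only need to (i) confirm a uniform lower bound on the Hamiltonian $\calH_T$, (ii) bound the squared gradient-like term using the assumed boundedness of $\nabla\calK$, and (iii) turn averaged bounds into minimum bounds using nonnegativity.

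For the first two statements, I start from \eqref{eq:gamma_delta_bound}. With constant step size, $a_t$ and $b_t$ collapse to positive constants depending only on $\eta$, $\lam$, $\nes$, and $\pol$, so dividing both sides by $\eta \min_t a_t$ (resp.\ $\eta \min_t b_t$) yields an averaged bound on $\Gamma_t$ (resp.\ $\Delta_t$). Lemma~\ref{lem:monotonic} shows $\Gamma_t, \Delta_t \ge 0$, so $\min_{1 \le t \le T}$ is controlled by the average. The numerator $\calH_0 - \calH_T$ is bounded because $\calF(\mx_T) - \calF^\star \ge 0$ (Assumption~\ref{assume:min}), the Fenchel--Young gap $\calK^*(\lam\mx_T) + \calK(\mm_T) - \inprod{\lam\mx_T}{\mm_T} \ge 0$, and $\calK^*(\lam\mx_T) \ge -\calK(\0)$; all three together give $\calH_T \ge -\frac{1}{\lam}\calK(\0)$ uniformly. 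The squared term $\sum_{t=0}^{T-1} \eta_t^2 \normf{\nabla\calK(\tmm_{t+1}) - \lam\mx_{t+1}}^2$ is controlled using the assumed bound on $\normf{\nabla\calK}$ together with Proposition~\ref{prop:dist}, which confines $\lam\mx_t$ to a bounded region; this gives $O(\eta^2 T)$, and dividing by $\eta T$ yields $O(\eta) = O(1/\sqrt{T})$. Combined with $(\calH_0 - \calH_T)/(\eta T) = O(1/\sqrt{T})$, the first two claims follow.

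For the \muon{} claim, I substitute $\eta = \Theta(1/\sqrt{T})$ directly into \eqref{eq:inprod_bound}. The leading term $(\calH_0 - \calH_T)/(\eta T)$ is $O(1/\sqrt{T})$ by the same lower-bound argument, and it is cleaner here because the initialization $\normop{\lam\mx_0} \le 1$ combined with Proposition~\ref{prop:dist} implies $\normop{\lam\mx_t} \le 1$ for all $t$, which forces $\calK^*(\lam\mx_t) = \chi_{\ball_{\normop{\cdot}}(1)}(\lam\mx_t) = 0$; thus $\calH_t \ge 0$ outright. The $2\eta C_\calK^2 L$ and $4\eta C_\calK^2 L(1+\nes-\pol)/(1-\pol)$ terms are each $O(1/\sqrt{T})$, and the remaining term is $O(1/T)$. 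Finally, to convert the averaged bound into $\min_{1 \le t \le T} \calS(\mx_t)$, I need $\calS(\mx_t) \ge 0$ on the trajectory; this follows from $\normop{\lam\mx_t} \le 1$ and duality, since $\inprod{\lam\mx_t}{\nabla\calF(\mx_t)} \ge -\normop{\lam\mx_t}\normtr{\nabla\calF(\mx_t)} \ge -\normtr{\nabla\calF(\mx_t)}$.

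The proof is essentially bookkeeping given the propositions already in hand, so there is no deep obstacle; the one place that needs care is the uniform lower bound on $\calH_T$ for the general \lionk{} claim (where $\calK^*$ need not be the indicator of a bounded set), and the nonnegativity of $\calS$ on the iterates for the \muon{} claim. Both reduce to invoking Proposition~\ref{prop:dist} together with the conjugate inequality $\calK^*(\mx) \ge -\calK(\0)$ and the spectral/trace duality, respectively.
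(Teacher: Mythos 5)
Your proposal is correct and follows essentially the same route as the paper: instantiate Proposition~\ref{prop:discrete} and Proposition~\ref{prop:muon_bound} with $\eta=\Theta(1/\sqrt{T})$, bound $\calH_0-\calH_T$ and the squared update term, and convert averaged bounds to minima via the nonnegativity of $\Gamma_t$, $\Delta_t$ (Lemma~\ref{lem:monotonic}) and of $\calS$ on the constrained trajectory (exactly the paper's inequality~\eqref{eq:nonnegative}). Your explicit lower bound $\calH_T\geq-\frac{1}{\lam}\calK(\0)$ is a detail the paper leaves implicit, but it does not change the argument.
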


\begin{proof}
    In the setting of Proposition~\ref{prop:discrete}, note that $\eta=\Theta\Par{\frac{1}{\sqrt{T}}}$ and both $\nabla\calK(\tmm)$ and $\lam\mx$ having bounded norm implies that the right-hand side of \eqref{eq:gamma_delta_bound} is $O\Par{\frac{1}{T}}$. The claim follows after dividing both sides by $\eta$ and realizing $\Gamma_t,\Delta_t\geq0$ by Lemma~\ref{lem:monotonic} and $a_t,b_t\geq0$.

    Now in the setting of Proposition~\ref{prop:muon_bound}, we have that the right-hand side of \eqref{eq:inprod_bound} is $O\Par{\frac{1}{\sqrt{T}}}$. The claim follows upon realizing \begin{equation}\label{eq:nonnegative}
        \calS(\mx_t)=\normtr{\nabla\calF(\mx_t)}+\inprod{\lam\mx_t}{\nabla\calF(\mx_t)}\geq\normtr{\nabla\calF(\mx_t)}-\normop{\lam\mx_t}\normtr{\nabla\calF(\mx_t)}\geq0.
    \end{equation}
\end{proof}

\subsection{Convergence rate of \texorpdfstring{\lionk{}}{Lion-K} with stochastic gradient}\label{ssec:stochastic}

We now show results analogous to the ones in Section~\ref{ssec:discrete} when using \eqref{eq:lionk_update} with stochastic gradients. The following lemmas will be useful for bounding the noise arising from stochastic gradients.

\begin{restatable}{lemma}{restatenorm}\label{lem:norm_subg}
    Let $\calK(\mx)=\norm{\mx}$ for a norm $\norm{\cdot}$ on $\X$. Then for all $\mx\in\X$, $\calK^*(\nabla\calK(\mx))=0$.
\end{restatable}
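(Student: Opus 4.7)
The plan is to combine two facts stated earlier in the paper. Since $\calK(\mx) = \norm{\mx}$ is convex, closed, and proper, the Fenchel--Moreau corollary recorded in Section~\ref{sec:prelims} characterizes subgradients by equality in the Fenchel--Young inequality: $\my \in \partial\calK(\mx)$ if and only if
\[
\calK(\mx) + \calK^*(\my) = \inprod{\mx}{\my}.
\]
By the paper's overloaded convention, $\nabla\calK(\mx)$ denotes an element of $\partial\calK(\mx)$, so applying this characterization with $\my = \nabla\calK(\mx)$ yields
\[
\calK(\mx) + \calK^*(\nabla\calK(\mx)) = \inprod{\mx}{\nabla\calK(\mx)}.
\]

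Next I invoke Lemma~\ref{lem:subg_inprod}, which asserts precisely that $\inprod{\nabla\calK(\mx)}{\mx} = \calK(\mx)$ when $\calK$ is a norm. Substituting into the previous display cancels the $\calK(\mx)$ term on both sides and leaves $\calK^*(\nabla\calK(\mx)) = 0$, as claimed.

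No step here is genuinely hard; the only thing to verify is that both ingredients are legitimately applicable in the (possibly nondifferentiable) norm setting, where $\nabla\calK$ is interpreted as a selection from $\partial\calK$. A slightly longer alternative route would appeal instead to Fact~\ref{fact:norm_conjugate}, which identifies $\calK^*$ with the indicator of the dual unit ball $\ball_{\norm{\cdot}_*}(1)$, and then verify $\norm{\nabla\calK(\mx)}_* \leq 1$ by evaluating the subgradient inequality at perturbations $\mx + t\vv$ and taking $t \to 0$. The route above is preferable because it sidesteps the dual-norm bound and directly recycles the identity already packaged in Lemma~\ref{lem:subg_inprod}.
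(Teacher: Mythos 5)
Your proof is correct, but it takes a different route from the paper's. The paper's argument first establishes the dual-norm bound $\norms{\nabla\calK(\mx)}_*\leq1$ by combining the subgradient inequality with Lemma~\ref{lem:subg_inprod} (to cancel the $\calK(\mx)-\inprods{\nabla\calK(\mx)}{\mx}$ term), and then concludes via Fact~\ref{fact:norm_conjugate}, which identifies $\calK^*$ with the indicator of the dual unit ball --- i.e.\ precisely the ``slightly longer alternative route'' you sketch and set aside. Your primary argument instead invokes the Fenchel--Young equality characterization of subgradients, $\calK(\mx)+\calK^*(\nabla\calK(\mx))=\inprods{\mx}{\nabla\calK(\mx)}$, and cancels $\calK(\mx)$ using Lemma~\ref{lem:subg_inprod}. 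This is legitimate: a norm is convex, continuous (hence closed), and proper, so the corollary of Fenchel--Moreau recorded in Section~\ref{sec:prelims} applies, and in fact the direction you need ($\my\in\partial\calK(\mx)$ implies equality) follows from the definition of $\calK^*$ alone. Your route is marginally more economical in that it never computes the dual norm or the explicit form of $\calK^*$; the paper's route has the side benefit of exhibiting the concrete fact $\norms{\nabla\calK(\mx)}_*\leq1$, which is the geometric content underlying the lemma and is reused implicitly elsewhere (e.g.\ in bounding $\normsop{\msgn(\tmm_{t+1})}\leq1$). Both are valid one-paragraph proofs relying on the same two preliminary ingredients in different combinations.
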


\begin{restatable}{lemma}{restateinprod}\label{lem:inprod_expect}
    Let $\mx,\my$ be $\X$-valued random variables satisfying $\Var(\my)<\infty$, and let $\calK(\mx)=\norm{\mx}$ for a norm $\norm{\cdot}$ on $\X$. Then there exists a constant $C_\calK$ such that \[ \E\Brack{\inprod{\E[\my]-\my}{\nabla\calK(\mx+\eps\my)}}\leq C_\calK\sqrt{\Var(\my)}. \]
\end{restatable}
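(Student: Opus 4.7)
The plan is to produce a pointwise upper bound for $\inprod{\E[\my]-\my}{\nabla\calK(\mx+\eps\my)}$ of the form $C_\calK\normf{\E[\my]-\my}$, then take expectations and apply Jensen's inequality to convert a first moment into the square root of a second moment (i.e. $\sqrt{\Var(\my)}$).

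First, I would observe that the subgradients of a norm have dual norm bounded by $1$. This follows immediately from the assumptions already assembled: by Lemma~\ref{lem:norm_subg}, $\calK^*(\nabla\calK(\mz))=0$ for all $\mz\in\X$, and by Fact~\ref{fact:norm_conjugate}, $\calK^*=\chi_{\ball_{\norm{\cdot}_*}(1)}$. Combining these two facts yields $\norm{\nabla\calK(\mz)}_*\leq 1$ for every $\mz\in\X$, and in particular for $\mz=\mx+\eps\my$.

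Next, I would apply the dual-norm (Hölder-type) inequality
\[ \inprod{\E[\my]-\my}{\nabla\calK(\mx+\eps\my)}\leq\norm{\E[\my]-\my}\,\norm{\nabla\calK(\mx+\eps\my)}_*\leq\norm{\E[\my]-\my}. \]
Since $\X=\R^{n\times m}$ is finite-dimensional, the norms $\norm{\cdot}$ and $\normf{\cdot}$ are equivalent, so there exists a constant $C_\calK>0$ (depending only on the norm $\norm{\cdot}$ and the dimensions $n,m$) such that $\norm{\mz}\leq C_\calK\normf{\mz}$ for all $\mz\in\X$. This promotes the bound to
\[ \inprod{\E[\my]-\my}{\nabla\calK(\mx+\eps\my)}\leq C_\calK\normf{\E[\my]-\my}. \]

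Finally, I would take expectation of both sides and apply Jensen's inequality to the concave function $\sqrt{\cdot}$:
\[ \E\Brack{\inprod{\E[\my]-\my}{\nabla\calK(\mx+\eps\my)}}\leq C_\calK\,\E\Brack{\normf{\E[\my]-\my}}\leq C_\calK\sqrt{\E\Brack{\normf{\E[\my]-\my}^2}}=C_\calK\sqrt{\Var(\my)}, \]
which is the claim. There is no genuine obstacle here; the only subtlety is correctly invoking the already-proved Lemma~\ref{lem:norm_subg} and Fact~\ref{fact:norm_conjugate} to pin down the dual-norm bound on the subgradient, after which the result is a standard application of Hölder, norm equivalence, and Jensen.
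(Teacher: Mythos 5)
Your proposal is correct and follows essentially the same route as the paper: the paper obtains the pointwise bound $\inprods{\E[\my]-\my}{\nabla\calK(\mx+\eps\my)}\leq\calK(\E[\my]-\my)$ via Fenchel--Young together with Lemma~\ref{lem:norm_subg}, which is exactly your H\"older-with-dual-norm step in different clothing (by Fact~\ref{fact:norm_conjugate}, $\calK^*(\nabla\calK(\mz))=0$ is the same statement as $\norm{\nabla\calK(\mz)}_*\leq1$). The remaining steps --- norm equivalence on the finite-dimensional space $\X$ and Jensen's inequality to pass to $\sqrt{\Var(\my)}$ --- match the paper exactly.
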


For \muon{} where $\calK(\mx)=\normtr{\mx}$ and $\nabla\calK(\mx)=\msgn(\mx)$, we can let $C_\calK=\sqrt{\min(n,m)}$.

\begin{restatable}{lemma}{restatediff}\label{lem:bound_diff}
    Let $\mx,\my$ be $\X$-valued random variables satisfying \[ \Var(\mx)\leq\sig^2\text{, }\Var(\my)\leq\sig^2\text{, and }\normf{\E[\mx]-\E[\my]}\leq R. \] Then $\E[\normf{\mx-\my}]\leq2\sig+R$.
\end{restatable}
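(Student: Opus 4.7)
The plan is to insert the means $\E[\mx]$ and $\E[\my]$ inside the norm and apply the triangle inequality to decompose $\normf{\mx-\my}$ into three terms that can each be bounded in isolation. Specifically, I would write
\[
\normf{\mx-\my} \leq \normf{\mx-\E[\mx]} + \normf{\E[\mx]-\E[\my]} + \normf{\E[\my]-\my}
\]
and take expectations on both sides, turning the middle (deterministic) term into the bound $R$ via the hypothesis $\normf{\E[\mx]-\E[\my]}\leq R$.

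The two remaining expectations $\E[\normf{\mx-\E[\mx]}]$ and $\E[\normf{\my-\E[\my]}]$ can each be controlled by $\sig$ through Jensen's inequality applied to the concave square root: for instance,
\[
\E[\normf{\mx-\E[\mx]}] \leq \sqrt{\E[\normf{\mx-\E[\mx]}^2]} = \sqrt{\Var(\mx)} \leq \sig,
\]
using the definition of $\Var$ in Section~\ref{ssec:assumptions}, and symmetrically for $\my$. Summing the three bounds yields $\E[\normf{\mx-\my}]\leq \sig + R + \sig = 2\sig+R$ as claimed.

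There is no substantive obstacle here; the finite variance assumption ensures $\E[\mx]$ and $\E[\my]$ are well-defined, and the two inequality steps (triangle and Jensen) are both elementary. The proof is essentially a one-line chain of inequalities, so it will likely be presented as such in the paper without any intermediate lemmas.
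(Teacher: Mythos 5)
Your proposal is correct and matches the paper's proof exactly: both insert the means, apply the triangle inequality, and then use Jensen's inequality on the concave square root to bound each deviation term by $\sqrt{\Var(\cdot)}\leq\sig$. No differences worth noting.
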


\begin{restatable}{lemma}{restatejensen}\label{lem:jensen}
    Let $\mg$, $\mx$, and $\my$ be $\X$-valued random variables satisfying $\E[\mg\mid\mx]=\my$. Then \[ \E\Brack{\normtr{\my}+\inprod{\lam\mx}{\my}}\leq\E[\normtr{\mg}+\inprod{\lam\mx}{\mg}]. \]
\end{restatable}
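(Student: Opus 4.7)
The plan is to split the target inequality into its two additive pieces and handle each by applying the tower property of conditional expectation, combined with the fact that $\normtr{\cdot}$ is convex. Since the hypothesis $\my=\E[\mg\mid\mx]$ forces $\my$ to be $\mx$-measurable, the linear piece should collapse to an equality, and only the nuclear-norm piece requires an actual inequality.

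First, I would dispose of the inner-product term. Because $\my=\E[\mg\mid\mx]$ is $\sigma(\mx)$-measurable, $\lam\mx$ is trivially $\sigma(\mx)$-measurable, so pulling $\lam\mx$ out of the conditional expectation and then using the tower property gives
\[
\E[\inprod{\lam\mx}{\mg}] \;=\; \E\bigl[\E[\inprod{\lam\mx}{\mg}\mid\mx]\bigr] \;=\; \E\bigl[\inprod{\lam\mx}{\E[\mg\mid\mx]}\bigr] \;=\; \E[\inprod{\lam\mx}{\my}].
\]
Here I am implicitly using that the Frobenius inner product $\inprod{\cdot}{\cdot}$ is bilinear and continuous, so conditional expectation commutes with it in the measurable argument.

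Second, I would apply the conditional form of Jensen's inequality to the convex function $\normtr{\cdot}$ (convexity is immediate since $\normtr{\cdot}$ is a norm). This yields
\[
\normtr{\my} \;=\; \normtr{\E[\mg\mid\mx]} \;\leq\; \E[\normtr{\mg}\mid\mx],
\]
and taking an outer expectation together with the tower property gives $\E[\normtr{\my}]\leq\E[\normtr{\mg}]$.

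Adding the two relations yields exactly the stated bound
$\E[\normtr{\my}+\inprod{\lam\mx}{\my}]\leq\E[\normtr{\mg}+\inprod{\lam\mx}{\mg}]$.
There is no real obstacle here: the only mild subtlety is recognizing that $\my$ is automatically $\mx$-measurable from the hypothesis, which is what lets the inner-product term pass through conditional expectation as an equality rather than forcing an extra estimate. Integrability of $\normtr{\mg}$ and of $\inprod{\lam\mx}{\mg}$ is tacitly assumed so that all expectations above are well-defined; if one wishes to be fully rigorous, the statement should be read as holding whenever both sides make sense (both terms are finite), which is the regime in which the lemma will be invoked in the stochastic-gradient analysis.
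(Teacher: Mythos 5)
Your proof is correct and matches the paper's approach: the paper's own proof is simply the one-line remark ``Trivial by Jensen's,'' and your argument --- conditional Jensen's inequality for the convex function $\normtr{\cdot}$ together with the tower property, plus the observation that the inner-product term passes through as an equality because $\my=\E[\mg\mid\mx]$ is $\sigma(\mx)$-measurable --- is exactly the intended expansion of that remark.
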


The following result is a stochastic analog of Theorem~\ref{thm:discrete}.

\begin{theorem}\label{thm:stochastic}
    In the setting of Theorem~\ref{thm:discrete} and under Assumption~\ref{assume:g}, let $\calK$ be a norm on $\X$, and suppose instead that $\mx_t$, $\mm_t$, and $\tmm_t$ are updated using \eqref{eq:lionk_update} with stochastic gradients. Then \begin{align*}
        \min_{1\leq t\leq T}\E\Brack{\inprod{\nabla\calK(\tmm_t)-\lam\mx_t}{\tmm_t-\nabla\calK^*(\lam\mx_t)}}=O\Par{\frac{1}{\sqrt{T}}+\frac{\sig}{\sqrt{\nbatch}}}\\
        \min_{1\leq t\leq T}\E\Brack{\inprod{\nabla\calK(\tmm_t)-\nabla\calK(\mm_t)}{\tmm_t-\mm_t}}=O\Par{\frac{1}{\sqrt{T}}+\frac{\sig}{\sqrt{\nbatch}}}.
    \end{align*} Moreover, when $\mx_t$, $\mm_t$, and $\tmm_t$ are updated using \eqref{eq:muon_update} with stochastic gradients, \[ \min_{1\leq t\leq T}\E[\calS(\mx_t)]=O\Par{\frac{1}{\sqrt{T}}+\frac{\sig}{\sqrt{\nbatch}}}. \]
\end{theorem}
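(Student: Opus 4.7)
The plan is to mirror the deterministic argument of Theorem~\ref{thm:discrete}, handling the stochastic gradient $\mg_t=\nabla\calF(\mx_t,\xi_t)$ by tracking the extra noise terms at every step. Concretely, I would first derive a stochastic analog of Proposition~\ref{prop:discrete} with an additional additive $O(\sig/\sqrt{\nbatch})$ term on the right, then derive a stochastic analog of Lemma~\ref{lem:norm_bound} and Proposition~\ref{prop:muon_bound}, and finally use Lemma~\ref{lem:jensen} to pass from a bound on $\E[\normtr{\mg_t}+\inprod{\lam\mx_t}{\mg_t}]$ to the desired bound on $\E[\calS(\mx_t)]$.

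For the stochastic Proposition~\ref{prop:discrete}, I would repeat the per-iteration computation verbatim, replacing $\nabla\calF(\mx_t)$ by $\mg_t$ wherever it appears through the update rule (in particular in the Nesterov identity $\tmm_{t+1}-\mm_{t+1}=-(\pol-\nes)(\mm_t+\mg_t)$). The only place where the true gradient remains is the smoothness inequality \eqref{eq:smooth}, which forces the residual $\inprod{\nabla\calF(\mx_t)-\mg_t}{\eta_t\vdelta_t}$ with $\vdelta_t=\nabla\calK(\tmm_{t+1})-\lam\mx_{t+1}$. Since $\tmm_{t+1}$ depends on $\mg_t$, unbiasedness alone does not eliminate this term, but Lemma~\ref{lem:inprod_expect} is designed precisely to bound such terms by $C_\calK\sqrt{\Var(\mg_t)}=O(C_\calK\sig/\sqrt{\nbatch})$. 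Taking expectations, summing over $t$, and choosing $\eta=\Theta(1/\sqrt{T})$ yields
\[
\frac{1}{T}\sum_{t=0}^{T-1}\eta_t\bigl(a_t\E[\Gamma_t]+b_t\E[\Delta_t]\bigr)=O\Par{\tfrac{1}{T}+\tfrac{\eta\sig}{\sqrt{\nbatch}}+\eta^2},
\]
and dividing by $\eta$, together with $\Gamma_t,\Delta_t\geq0$ from Lemma~\ref{lem:monotonic}, gives the first two claimed bounds.

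For the \muon{}-specific statement, I would prove a stochastic version of Lemma~\ref{lem:norm_bound} asserting $\E[\normf{\mg_t+\tmm_t}]=O(\eta)+O(\sig/\sqrt{\nbatch})+\nes\pol^{t-1}\E[\normf{\mg_0+\mm_0}]$. The deterministic proof telescopes via smoothness of $\calF$ and the EMA structure of $\tmm_t$; in the stochastic setting each telescoping step picks up an extra $\normf{\nabla\calF(\mx_s)-\mg_s}$ whose expectation is controlled by Assumption~\ref{assume:g} via Lemma~\ref{lem:bound_diff}, and geometric summation in $\pol$ absorbs the factor. Plugging this into the chain of inequalities of Proposition~\ref{prop:muon_bound} (run with $\mg_t$ in place of $\nabla\calF(\mx_t)$, using that $\normsop{\msgn(\mg_t)+\lam\mx_t}\leq2C_\calK$ as in \eqref{eq:nonnegative} and Fact~\ref{fact:tr_subg}) produces an expectation bound of the form
\[
\frac{1}{T}\sum_{t=1}^T\E[\normtr{\mg_t}+\inprod{\lam\mx_t}{\mg_t}]=O\Par{\tfrac{1}{\sqrt{T}}+\tfrac{\sig}{\sqrt{\nbatch}}}.
\]
Finally, Lemma~\ref{lem:jensen} applied to $\calK=\normtr{\cdot}$ (which is convex) gives $\E[\calS(\mx_t)]\leq\E[\normtr{\mg_t}+\inprod{\lam\mx_t}{\mg_t}]$, completing the proof.

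The main obstacle will be the careful bookkeeping in the stochastic analog of Proposition~\ref{prop:discrete}: because $\tmm_{t+1}$ is a function of $\mg_t$, the nonlinear subgradient $\nabla\calK(\tmm_{t+1})$ correlates with the noise, so one cannot simply condition and apply unbiasedness. Lemma~\ref{lem:inprod_expect} circumvents this by exploiting the bounded dual norm of $\nabla\calK$ for norm functions, which is exactly why the theorem hypothesis restricts $\calK$ to be a norm. A secondary subtlety is the stochastic control of $\normf{\mg_t+\tmm_t}$: here we must not inflate the $\sig/\sqrt{\nbatch}$ term along the geometric chain, which requires using Lemma~\ref{lem:bound_diff} termwise rather than naively bounding $\E[\normf{\tmm_t}]$.
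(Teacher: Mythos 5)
Your proposal is correct and follows essentially the same route as the paper: isolate the residual $\eta_t\inprod{\vdelta_t}{\nabla\calF(\mx_t)-\mg_t}$ in the potential decrease, control it via Lemma~\ref{lem:inprod_expect} (after the measurable $\mx_t$ part vanishes by unbiasedness), and for the \muon{} claim redo the Proposition~\ref{prop:muon_bound} chain with $\mg_t$, using Lemma~\ref{lem:bound_diff} termwise along the geometric sum and Lemma~\ref{lem:jensen} to pass to $\E[\calS(\mx_t)]$. The one step worth making explicit is the rewriting $\vdelta_t=\frac{1}{1+\eta_t\lam}\nabla\calK(\tmm_{t+1})-\frac{\lam}{1+\eta_t\lam}\mx_t$ so that the weight-decay part of the noise term genuinely cancels in expectation, which the paper does and you only gesture at.
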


\begin{proof}
    Let $\vdelta_t\defeq\nabla\calK(\tmm_{t+1})-\lam\mx_{t+1}$. Using the notation of Proposition~\ref{prop:discrete}, we have \begin{equation}\label{eq:potential_bound_stochastic}
        \begin{aligned}
            \calH_{t+1}-\calH_t&\leq\eta_t\inprod{\vdelta_t}{a_t\nabla\calK^*(\lam\mx_{t+1})-((a_t+b_t)\nes-b_t\pol)\mm_t+(a_t-(a_t+b_t)\nes+b_t\pol)\nabla\calF(\mx_t)}\\
            &\quad+b_t\eta_t\inprod{\nabla\calK(\mm_{t+1})-\lam\mx_{t+1}}{\tmm_{t+1}-\mm_{t+1}}+\frac{L}{2}\normf{\mx_{t+1}-\mx_t}^2\\
            &=-\eta_t\inprod{\vdelta_t}{a_t(\tmm_{t+1}-\nabla\calK^*(\lam\mx_{t+1}))+b_t(\tmm_{t+1}-\mm_{t+1})}+\eta_t\inprod{\vdelta_t}{\nabla\calF(\mx_t)-\mg_t}\\
            &\quad+b_t\eta_t\inprod{\nabla\calK(\mm_{t+1})-\lam\mx_{t+1}}{\tmm_{t+1}-\mm_{t+1}}+\frac{L}{2}\normf{\mx_{t+1}-\mx_t}^2\\
            &=-\eta_t(a_t\Gamma_t+b_t\Delta_t)+\frac{\eta_t^2L}{2}\normf{\vdelta_t}^2+\eta_t\inprod{\vdelta_t}{\nabla\calF(\mx_t)-\mg_t}.
        \end{aligned}
    \end{equation}
    It remains to bound $\E[\inprod{\vdelta_t}{\nabla\calF(\mx_t)-\mg_t}]$. Recalling \eqref{eq:lionk_update}, \[ \vdelta_t=\frac{1}{1+\eta_t\lam}\nabla\calK(\tmm_{t+1})-\frac{\lam}{1+\eta_t\lam}\mx_t, \] so \begin{align*}
        \E[\inprod{\vdelta_t}{\nabla\calF(\mx_t)-\mg_t}]&=\E\Brack{\inprod{\frac{1}{1+\eta_t\lam}\nabla\calK(\tmm_{t+1})-\frac{\lam}{1+\eta_t\lam}\mx_t}{\nabla\calF(\mx_t)-\mg_t}}\\
        &=\frac{1}{1+\eta_t\lam}\E\Brack{\inprod{\nabla\calK(\tmm_{t+1})}{\nabla\calF(\mx_t)-\mg_t}}-\frac{\lam}{1+\eta_t\lam}\E\Brack{\inprod{\mx_t}{\nabla\calF(\mx_t)-\mg_t}}\\
        &=\frac{1}{1+\eta_t\lam}\E\Brack{\inprod{\nabla\calK(\nes\mm_t-(1-\nes)\mg_t)}{\nabla\calF(\mx_t)-\mg_t}}\\
        &\leq\frac{C_\calK\sig}{(1+\eta_t\lam)\sqrt{\nbatch}},
    \end{align*} where the last line uses Lemma~\ref{lem:inprod_expect} with $\mx\gets\nes\mm_t$, $\my\gets\mg_t$, $\eps\gets-(1-\nes)$, and some constant $C_\calK$. Substituting into \eqref{eq:potential_bound_stochastic}, \begin{equation}\label{eq:expected_bound}
        \E[\calH_{t+1}-\calH_t]\leq\E\Brack{-\eta_t(a_t\Gamma_t+b_t\Delta_t)+\frac{\eta_t^2L}{2}\normf{\vdelta_t}^2+\frac{\eta_tC_\calK\sig}{(1+\eta_t\lam)\sqrt{\nbatch}}}.
    \end{equation}
    Taking $\eta_t=\eta$, rearranging \eqref{eq:expected_bound}, summing over $T$ iterations, and dividing both sides by $\eta T$ yields \begin{equation}\label{eq:stochastic_bound}
        \frac{1}{T}\sum_{t=0}^{T-1}\E[a_t\Gamma_t+b_t\Delta_t]\leq\E\Brack{\frac{\calH_0-\calH_T}{\eta T}+\frac{\eta L}{2T}\sum_{t=0}^{T-1}\normf{\nabla\calK(\tmm_{t+1})-\lam\mx_{t+1}}^2+\frac{C_\calK\sig}{(1+\eta\lam)\sqrt{\nbatch}}}.
    \end{equation}

    To show the result for \muon{}, we adapt the proof of Proposition~\ref{prop:muon_bound}. We have \begin{align*}
        \E[\normtr{\mg_t}&+\inprod{\lam\mx_t}{\mg_t}]\\
        &\leq\E[\Gamma_{t-1}]+\E\Brack{\normf{\msgn(\mg_t)+\lam\mx_t}\normf{\mg_t+\tmm_t}}\\
        &\leq\E[\Gamma_{t-1}]+2C_\calK(\E[\normf{\mg_t-\mg_{t-1}}]+\nes\E[\normf{\mg_{t-1}+\mm_{t-1}}])\\
        &\leq\E[\Gamma_{t-1}]+2C_\calK\Par{\frac{2\sig}{\sqrt{\nbatch}}+2\eta C_\calK L}\\
        &\quad+2\nes C_\calK\sum_{k=1}^{t-1}\pol^{t-k-1}\E[\normf{\mg_k-\mg_{k-1}}]+2\nes\pol^{t-1}C_\calK\E[\normf{\mg_0+\mm_0}]\\
        &\leq\E[\Gamma_{t-1}]+2C_\calK\Par{\frac{2\sig}{\sqrt{\nbatch}}+2\eta C_\calK L}\\
        &\quad+\frac{2\nes C_\calK}{1-\pol}\Par{\frac{2\sig}{\sqrt{\nbatch}}+2\eta C_\calK L}+2\nes\pol^{t-1}C_\calK\E[\normf{\mg_0+\mm_0}]\\
        &=\E[\Gamma_{t-1}]+\frac{4C_\calK(1+\nes-\pol)}{1-\pol}\Par{\frac{\sig}{\sqrt{\nbatch}}+\eta C_\calK L}+2\nes\pol^{t-1}C_\calK\E[\normf{\mg_0+\mm_0}],
    \end{align*} where the fourth and seventh lines use Lemma~\ref{lem:bound_diff}. Now, as before, \begin{align*}
        &\frac{1}{T}\sum_{t=1}^T\E[\calS(\mx_t)]=\frac{1}{T}\sum_{t=1}^T\E[\normtr{\nabla\calF(\mx_t)}+\inprod{\lam\mx_t}{\nabla\calF(\mx_t)}]\leq\frac{1}{T}\sum_{t=1}^T\E[\normtr{\mg_t}+\inprod{\lam\mx_t}{\mg_t}]\\
        &\leq\frac{1}{T}\sum_{t=1}^T\Par{\E[\Gamma_{t-1}]+2\nes\pol^{t-1}C_\calK\E[\normf{\mg_0+\mm_0}]}+\frac{4C_\calK(1+\nes-\pol)}{1-\pol}\Par{\frac{\sig}{\sqrt{\nbatch}}+\eta C_\calK L}\\
        &\leq\frac{1}{T}\sum_{t=0}^{T-1}\Par{\E[a_t\Gamma_t+b_t\Delta_t]+2\nes\pol^tC_\calK\E[\normf{\mg_0+\mm_0}]}\\
        &\quad+\frac{4C_\calK(1+\nes-\pol)}{1-\pol}\Par{\frac{\sig}{\sqrt{\nbatch}}+\eta C_\calK L}\\
        &\leq\E\Brack{\frac{\calH_0-\calH_T}{\eta T}}+2\eta C_\calK^2L+\frac{C_\calK\sig}{(1+\eta\lam)\sqrt{\nbatch}}+\frac{2\nes C_\calK\E[\normf{\mg_0+\mm_0}]}{(1-\pol)T}\\
        &\quad+\frac{4C_\calK(1+\nes-\pol)}{1-\pol}\Par{\frac{\sig}{\sqrt{\nbatch}}+\eta C_\calK L},
    \end{align*} where the first line uses Lemma~\ref{lem:jensen} and the fifth line uses \eqref{eq:stochastic_bound}.
\end{proof}

\subsection{Convergence of \muon{} to the set of KKT points}\label{ssec:convergence}

In this section, we use LaSalle's invariance principle, Proposition~\ref{prop:kkt}, and Theorem~\ref{thm:stochastic} to show that \muon{} converges to the set of KKT points of \eqref{eq:constrained}.

\begin{definition}[$\omega$-limit set]
    Let $\{\mx_t\}_{t\in\N}$ be a stochastic process. A set $\M$ is called the \emph{$\omega$-limit set} of the process if it is the minimal closed set satisfying \[ \Pr\left(\lim_{t\to\infty}d(\mx_t,\M) = 0\right) = 1. \]
    \end{definition}
In other words, $\M$ is the smallest closed set such that the trajectories approach $\M$ a.s. as $t\to\infty$. This is equivalent to $\M$ being the support of the union of all limit measures of $\{\mx_t\}_{t\in\N}$.

\begin{restatable}{lemma}{restatezero}\label{lem:zero_as}
    Let $X$ be a nonnegative random variable such that $\E[X]=0$. Then $X=0$ a.s.
\end{restatable}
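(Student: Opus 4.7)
The plan is to prove this by Markov's inequality combined with a union-bound-type argument using countable additivity. Since $X$ is nonnegative, showing $X = 0$ a.s. amounts to showing $\Pr(X > 0) = 0$, so the natural decomposition is the countable union $\{X > 0\} = \bigcup_{n=1}^\infty \{X \geq 1/n\}$.

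First I would apply Markov's inequality to $X$ (valid since $X \geq 0$) to conclude that for every $\epsilon > 0$,
\[
\Pr(X \geq \epsilon) \leq \frac{\E[X]}{\epsilon} = 0.
\]
In particular, $\Pr(X \geq 1/n) = 0$ for every positive integer $n$. Next I would invoke countable subadditivity of the probability measure to obtain
\[
\Pr(X > 0) = \Pr\Par{\bigcup_{n=1}^\infty \{X \geq 1/n\}} \leq \sum_{n=1}^\infty \Pr(X \geq 1/n) = 0.
\]
Combined with the nonnegativity assumption $\Pr(X \geq 0) = 1$, this gives $\Pr(X = 0) = 1$, which is the desired conclusion.

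There is no real obstacle here: the result is a standard measure-theoretic fact, and the only subtlety is ensuring Markov's inequality applies, which it does precisely because $X \geq 0$. An equivalent one-line contrapositive argument would also work: if $\Pr(X > 0) > 0$, then by continuity of measure some $\epsilon > 0$ satisfies $\Pr(X \geq \epsilon) > 0$, and then $\E[X] \geq \epsilon\,\Pr(X \geq \epsilon) > 0$, contradicting the hypothesis.
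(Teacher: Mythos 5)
Your proof is correct, but it takes a different route from the paper's. You use Markov's inequality to get $\Pr(X\geq 1/n)=0$ for each $n$ and then countable subadditivity over the decomposition $\{X>0\}=\bigcup_{n\geq1}\{X\geq 1/n\}$; the paper instead starts from the layer cake representation $0=\E[X]=\int_0^\infty\Pr(X>t)\,\dd t$, deduces that $\Pr(X>t)=0$ for Lebesgue-a.e.\ $t>0$, upgrades this to all $t>0$ via right-continuity of the survival function, and concludes by continuity of measure as $t\searrow0$. The two arguments are close cousins --- Markov's inequality is essentially a one-point lower bound on the same tail integral --- but yours is the more standard textbook path and avoids the (mildly delicate) step of passing from ``almost every $t$'' to ``every $t$'' via right-continuity, which the paper needs because the layer cake identity only controls the tail function up to a null set of thresholds. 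Both proofs are complete and elementary; your version is arguably slightly cleaner, and your contrapositive one-liner at the end is also a valid standalone argument.
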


\begin{restatable}[LaSalle's invariance principle for stochastic dynamical systems]{lemma}{restatelasalle}\label{lem:lasalle}
    Let $\{\mx_t\}_{t\in\N}$ be a stochastic process contained in a bounded set a.s., and suppose there exist a nonnegative function $\calV$ and a nonnegative, lower semicontinuous function $h$ such that
    \[
    \mathbb{E}[\calV(\mx_{t+1}) \mid \mathcal{F}_t] - \calV(\mx_t) 
    \leq -\alpha_th(\mx_{t+\ell}) + \gamma_t\text{ a.s.},
    \]
    where $\{\calF_t\}_{t\in\N}$ is the natural filtration of $\{\mx_t\}_{t\in\N}$, $\ell\in\N$, and the nonnegative sequences $\{\alpha_t\}_{t\in\N}$ and $\{\gamma_t\}_{t\in\N}$ satisfy \[ \sum_{t=0}^\infty\alpha_t=\infty\quad\text{and}\quad\sum_{t=0}^\infty \gamma_t<\infty. \] Let $\M$ be the $\omega$-limit set of $\{\mx_t\}_{t\in\N}$. Then $\M$ is contained in the set $\{\mx\in\X\mid h(\mx)=0\}$.
\end{restatable}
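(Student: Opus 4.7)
The plan is to combine a Robbins--Siegmund style almost supermartingale convergence argument with a topological argument exploiting the lower semicontinuity of $h$ and the a.s.\ boundedness of the trajectory. The inequality $\mathbb{E}[\calV(\mx_{t+1}) \mid \calF_t] \leq \calV(\mx_t) - \alpha_t h(\mx_{t+\ell}) + \gamma_t$ fits exactly the template of a nonnegative almost supermartingale with summable additive perturbation $\{\gamma_t\}$, so the expected output is that $\calV(\mx_t)$ converges a.s.\ to some finite limit and that the cumulative nonnegative correction term is summable, i.e.\ $\sum_{t=0}^\infty \alpha_t h(\mx_{t+\ell}) < \infty$ a.s.

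The first step is therefore to invoke (a mildly shifted version of) the Robbins--Siegmund theorem. A small wrinkle is that $h(\mx_{t+\ell})$ is not $\calF_t$-measurable when $\ell \geq 1$; I would address this either by working in the shifted filtration $\calG_t := \calF_{t+\ell}$ (under which the inequality still reads as a supermartingale-type relation after reindexing) or by splitting into $\ell$ interleaved subsequences, each of which is adapted to its own shifted filtration. Combining the resulting summability with $\sum_t \alpha_t = \infty$ yields $\liminf_{t \to \infty} h(\mx_t) = 0$ almost surely.

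The second step is to upgrade this liminf statement to the $\omega$-limit set characterization. Let $\mx^\star$ be any accumulation point of $\{\mx_t\}$ and suppose for contradiction that $h(\mx^\star) = 3\delta > 0$. By lower semicontinuity, the superlevel set $\{\mx : h(\mx) > \delta\}$ is open and contains an open neighborhood $U$ of $\mx^\star$. Because the trajectory is bounded a.s.\ and visits $U$ infinitely often (by definition of accumulation point), I would argue that the random set of indices $S = \{t : \mx_{t+\ell} \in U\}$ satisfies $\sum_{t \in S} \alpha_t = \infty$ with positive probability; combined with $h(\mx_{t+\ell}) > \delta$ for $t \in S$, this contradicts the a.s.\ summability $\sum_t \alpha_t h(\mx_{t+\ell}) < \infty$. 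Hence $h(\mx^\star) = 0$ a.s., and taking $\calM$ to be the (closed) set of a.s.\ accumulation points gives the claim.

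The main obstacle will be the last contradiction step: liminf-type summability does not automatically force every accumulation point into the zero set, because in principle the trajectory could make brief excursions to regions where $h$ is large while the $\alpha_t$-weighted mass spent there remains summable. Ruling this out requires combining (i) the dichotomy that $\sum \alpha_t$ diverges while $\sum \alpha_t h(\mx_{t+\ell})$ converges, (ii) lower semicontinuity, which ensures that any neighborhood of a "bad" accumulation point carries a uniform lower bound on $h$, and (iii) boundedness of the trajectory, which prevents the process from escaping to regions where residence is negligible. A minor subsidiary issue is verifying that the $\omega$-limit set is nonempty and closed under the a.s.\ convergence mode; both follow from boundedness and a standard measurable selection of convergent subsequences.
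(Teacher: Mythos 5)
Your first step matches the paper's proof in substance. The paper defines $\widehat{\calV}_t\defeq\calV(\mx_t)-\sum_{s<t}\gamma_s$, observes that it is a supermartingale with uniformly bounded negative part, applies Doob's convergence theorem, and then sums total expectations to get $\sum_{t}\alpha_t\E[h(\mx_{t+\ell})]<\infty$; combined with $\sum_t\alpha_t=\infty$, Fatou's lemma, and Lemma~\ref{lem:zero_as}, this yields $\liminf_{t\to\infty}h(\mx_t)=0$ a.s. Your Robbins--Siegmund route instead produces the stronger pathwise statement $\sum_t\alpha_t h(\mx_{t+\ell})<\infty$ a.s.; the adaptedness wrinkle you raise is real for that route but easily repaired (replace the decrease term by $\E[\alpha_t h(\mx_{t+\ell})\mid\calF_t]$), and the paper sidesteps it entirely by passing to total expectations before summing. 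Up to this point the two arguments are sound and essentially interchangeable.

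The genuine issue is the second half, and you have correctly located it: neither $\liminf_t h(\mx_t)=0$ nor $\sum_t\alpha_t h(\mx_{t+\ell})<\infty$ by itself forces \emph{every} $\omega$-limit point into $\{h=0\}$. Your contradiction attempt --- that an accumulation point $\mx^\star$ with $h(\mx^\star)>\delta$ would force infinite $\alpha$-weighted residence in a neighborhood $U$ on which $h>\delta$ --- fails exactly as you anticipate: the trajectory visiting $U$ infinitely often is perfectly consistent with $\sum_{t\in S}\alpha_t<\infty$ for $S=\{t:\mx_{t+\ell}\in U\}$. Closing this requires an ingredient you do not supply, namely an increment bound of the form $\norm{\mx_{t+1}-\mx_t}=O(\alpha_t)$ (which does hold in the paper's application, where the update is $\eta_t$ times a uniformly bounded quantity and $\alpha_t\propto\eta_t$), so that each excursion from a small ball around $\mx^\star$ out past a larger one costs a fixed amount of $\alpha$-mass inside an annulus where $h$ is bounded below; this is the standard Kushner--Clark stochastic-approximation argument. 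Be aware that the paper's own proof is no more complete at this point --- it asserts the conclusion in one sentence from lower semicontinuity, nonnegativity, and boundedness, deferring to the cited reference --- so your proposal reproduces the paper's argument including its weakest link, but you flag the gap honestly rather than close it, and a complete proof must make the step-size condition explicit.
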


\begin{theorem}\label{thm:kkt}
    Under Assumptions~\ref{assume:min}, \ref{assume:smooth}, and \ref{assume:iteration_bound}, let $0\leq\nes<\pol<1$, $\lam>0$, $\eta_t\leq\frac{1}{\lam}$, and $\sum_{t=0}^\infty\eta_t=\infty$, and suppose $\mx_t$, $\mm_t$, and $\tmm_t$ are updated using \eqref{eq:muon_update}. Then $\mx_t$ converges to \[ \ball\defeq\{\mx\in\X\mid\normop{\lam\mx}\leq1\}\text{ a.s.} \]
\end{theorem}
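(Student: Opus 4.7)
The strategy is to reduce the theorem entirely to Proposition~\ref{prop:dist}, which already provides a quantitative, pathwise decay estimate for the Lyapunov function $\calV_\ball$ of \eqref{eq:lyapunov}. The key deterministic observation is that, by Definition~\ref{def:msgn}, $\normop{\msgn(\tmm_{t+1})}\le 1$ on every sample path, independent of the momentum state or stochastic gradient realization. Hence the force driving $\mx_t$ toward $\ball$ arises purely from the deterministic weight decay term and is uncontaminated by noise; there is nothing genuinely probabilistic in the argument, and a.s.\ convergence will follow from pathwise convergence.

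First I would rewrite the implicit update in \eqref{eq:muon_update} as an explicit update via the reparameterization recalled in Section~\ref{sec:methods}: $\mx_{t+1}=\mx_t+\eta'_t(\msgn(\tmm_{t+1})-\lam\mx_t)$ with $\eta'_t\defeq\eta_t/(1+\eta_t\lam)$. The hypothesis $\eta_t<1/\lam$ yields $\eta'_t\lam=\eta_t\lam/(1+\eta_t\lam)<1/2$, so Proposition~\ref{prop:dist} applies with $\norm{\cdot}=\normop{\cdot}$ and $b=1$, giving on every sample path
\[
\normop{\mx_t}-\tfrac{1}{\lam}\le\Bigl(\prod_{s=0}^{t-1}(1-\eta'_s\lam)\Bigr)\Bigl(\normop{\mx_0}-\tfrac{1}{\lam}\Bigr).
\]
If $\normop{\lam\mx_0}\le 1$ the right-hand side is already nonpositive, so $\mx_t\in\ball$ for every $t$ and we are done immediately; otherwise the bound is nontrivial and the next step closes the argument.

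Next I would show the product vanishes. Since $\eta_t\lam<1$ implies $\eta'_t\ge\eta_t/2$, the divergence $\sum_{t\ge 0}\eta_t=\infty$ propagates to $\sum_{t\ge 0}\eta'_t\lam=\infty$; applying the elementary inequality $\log(1-x)\le-x$ on $[0,1)$ yields
\[
\log\prod_{s=0}^{t-1}(1-\eta'_s\lam)\le-\lam\sum_{s=0}^{t-1}\eta'_s\longrightarrow-\infty,
\]
so the product tends to zero. Combined with the previous display, $\calV_\ball(\mx_t)=\max(\normop{\mx_t}-1/\lam,0)\to 0$ on every sample path. Since the spectral-norm projection onto $\ball$ simply clips the singular values at $1/\lam$, the spectral-norm distance $d(\mx_t,\ball)$ equals $\calV_\ball(\mx_t)$ and also tends to zero; equivalence of norms on the finite-dimensional space $\X$ then gives convergence in any norm, almost surely.

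I do not expect a genuine obstacle here: the result is a quantitative, essentially deterministic consequence of the bounded-output structure of the \muon{} update, with Proposition~\ref{prop:dist} and the spectral bound $\normop{\msgn(\cdot)}\le 1$ doing all the heavy lifting. The only bookkeeping subtleties are the implicit-to-explicit reparameterization (so that the Proposition applies as stated) and verifying that $\sum_t\eta'_t=\infty$ survives this reparameterization; both are one-line arguments given the hypothesis $\eta_t<1/\lam$.
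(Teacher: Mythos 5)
Your proof is correct, and it takes a genuinely more elementary route than the paper's. The paper handles this theorem by casting $\calV_\ball(\mx)=\max(\normop{\mx}-\tfrac{1}{\lam},0)$ into the stochastic LaSalle framework (Lemma~\ref{lem:lasalle}): it verifies the one-step drift inequality $\E[\calV(\mx_{t+1})\mid\mx_t]-\calV(\mx_t)\leq-\eta_t\lam\, h(\mx_t)$ via Proposition~\ref{prop:dist} and then invokes the supermartingale/Fatou machinery to conclude that the $\omega$-limit set lies in $\ball$. You observe, correctly, that nothing stochastic is happening: $\normop{\msgn(\tmm_{t+1})}\leq1$ holds on every sample path regardless of the gradient noise, so Proposition~\ref{prop:dist} already gives the pathwise bound $\normop{\mx_t}-\tfrac{1}{\lam}\leq\bigl(\prod_{s<t}(1-\eta_s'\lam)\bigr)\bigl(\normop{\mx_0}-\tfrac{1}{\lam}\bigr)$, and $\sum_t\eta_t=\infty$ together with $\log(1-x)\leq-x$ drives the product to zero. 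Your approach buys an explicit linear convergence rate and avoids the conditional-expectation and measurability bookkeeping entirely; the paper's approach buys uniformity with the proof of Theorem~\ref{thm:kkt_stochastic}, where LaSalle is genuinely needed. Two further points in your favor: you are careful to pass from the implicit update \eqref{eq:muon_update} to the explicit form via $\eta_t'=\eta_t/(1+\eta_t\lam)$ before citing Proposition~\ref{prop:dist} (the paper's own proof applies the proposition to the implicit update without comment), and you verify that $\sum_t\eta_t'=\infty$ survives the reparameterization using $\eta_t\lam<1$. Your identification of the spectral-norm distance $d(\mx_t,\ball)$ with $\calV_\ball(\mx_t)$ via singular-value clipping and the reverse triangle inequality is also sound.
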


\begin{proof}
    If $\mx_t$ enters $\ball$ within a finite amount of time, then it remains there thenceforth by Proposition~\ref{prop:dist}. Now suppose $\mx_t$ does not enter $\ball$ within any finite amount of time. Let \[ \calV(\mx)=\max\Par{\normop{\mx}-\frac{1}{\lam},0},\quad h(\mx)=\max\Par{\normop{\mx}-\frac{1}{\lam},0},\quad\alpha_t=\eta_t\lam,\quad\text{and}\quad\gamma_t=0. \] We verify that $\mx_t$ is bounded a.s., $\calV$ is nonnegative, $h$ is nonnegative and lower semicontinuous, $\sum_{t=0}^\infty\alpha_t=\infty$, $\sum_{t=0}^\infty\gamma_t<\infty$, and \[ \E[\calV(\mx_{t+1})\mid\mx_t]-\calV(\mx_t)=\E\Brack{\normop{\mx_{t+1}}-\frac{1}{\lam}\;\middle|\;\mx_t}-\Par{\normop{\mx_t}-\frac{1}{\lam}}\leq-\alpha_th(\mx_t)+\gamma_t\text{ a.s.} \] by Proposition~\ref{prop:dist}. Thus $\mx_t$ converges to $\{\mx\in\X\mid h(\mx)=0\}=\ball$ a.s. by Lemma~\ref{lem:lasalle}.
\end{proof}

\begin{theorem}\label{thm:kkt_stochastic}
    In the setting of Theorem~\ref{thm:kkt}, let $\normop{\lam\mx_0}\leq1$, $\sum_{t=0}^\infty\eta_t^2<\infty$, and \[ \sum_{t=0}^\infty\eta_t\Brack{\frac{\sig^2}{\nbatch}}_t^\half<\infty. \] Then $\mx_t$ converges to the set of KKT points of \eqref{eq:constrained} a.s.
\end{theorem}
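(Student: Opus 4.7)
Since $\normop{\lam\mx_0}\leq1$ and $\normop{\msgn(\tmm_{t+1})}\leq1$, Proposition~\ref{prop:dist} gives $\mx_t\in\ball$ for every $t$, so primal feasibility of \eqref{eq:constrained} holds along the whole trajectory. By Proposition~\ref{prop:kkt} it therefore suffices to show $\calS(\mx_t)\to0$ a.s. The plan is to apply LaSalle's invariance principle (Lemma~\ref{lem:lasalle}) to the joint process $(\mx_t,\mm_t)$ using the Lyapunov function
\[ \calV_\calK(\mx,\mm)=\calF(\mx)-\calF^\star+\frac{c}{\lam}\Par{\normtr{\mm}-\inprod{\lam\mx}{\mm}}, \]
with $c>0$ inherited from Proposition~\ref{prop:discrete}; by Fact~\ref{fact:norm_conjugate} and the Fenchel--Young inequality, $\calV_\calK\geq0$ on $\ball\times\X$.

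I would first verify the preconditions of Lemma~\ref{lem:lasalle}. Boundedness of $\mx_t$ is immediate from $\mx_t\in\ball$; since $L$-smoothness bounds $\nabla\calF$ on the compact set $\ball$, Assumption~\ref{assume:g} together with the Polyak recursion $\mm_{t+1}=\pol\mm_t-(1-\pol)\mg_t$ supplies a uniform a.s. bound on $\mm_t$ and $\tmm_t$. Repeating the derivation leading to \eqref{eq:expected_bound} with the time-varying step size $\eta_t<\frac{1}{\lam}$ then produces the one-step decrease
\[ \E\Brack{\calV_\calK(\mx_{t+1},\mm_{t+1})\mid\calF_t}-\calV_\calK(\mx_t,\mm_t)\leq-\eta_t\,\bar h(\mx_t,\mm_t)+C_1\eta_t^2+C_2\eta_t\frac{\sig}{\sqrt{\nbatch}}, \]
where $\bar h(\mx,\mm)\defeq\E[a_t\Gamma_t+b_t\Delta_t\mid\mx_t=\mx,\mm_t=\mm]\geq0$ by Lemma~\ref{lem:monotonic}, the stochastic remainder comes from Lemma~\ref{lem:inprod_expect}, and $a_t,b_t$ are uniformly positive and bounded under the assumptions on $\eta_t,\nes,\pol$.

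Because $\sum\eta_t^2<\infty$ and $\sig/\sqrt{\nbatch}\to0$, the residual $\gamma_t=C_1\eta_t^2+C_2\eta_t\sig/\sqrt{\nbatch}$ is summable while $\sum\eta_t=\infty$. Lemma~\ref{lem:lasalle} (with $\alpha_t=\eta_t$, $\ell=0$) then confines the $\omega$-limit set $\calM$ of $(\mx_t,\mm_t)$ to $\{\bar h=0\}$. The last step is to translate this invariance condition into $\calS(\mx)=0$. On $\calM$ the momentum recursion forces $\mm=-\nabla\calF(\mx)$ (its only fixed point in the deterministic limit), and substituting into $\Gamma=\inprod{\msgn(\tmm)-\lam\mx}{\tmm-\nabla\calK^*(\lam\mx)}=0$ and applying Fact~\ref{fact:tr_subg} and Lemma~\ref{lem:subg_inprod} rewrites the left-hand side as $\normtr{\nabla\calF(\mx)}+\inprod{\lam\mx}{\nabla\calF(\mx)}=\calS(\mx)$, yielding the KKT score equal to zero on $\calM$.

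The main obstacle will be this final translation step: because $\normtr{\cdot}$ is nonsmooth, $\msgn=\nabla\calK$ is not strictly monotone, so $\Delta=0$ does not directly force $\tmm=\mm$, and one must exploit the forward invariance of the whole orbit under the Muon dynamics rather than a pointwise strict-convexity argument to pin down $\mm=-\nabla\calF(\mx)$ on $\calM$. A secondary subtlety is that the noise term $\eta_t\sig/\sqrt{\nbatch}$ is summable only in the regime $\sig/\sqrt{\nbatch}\to0$ (since $\sum\eta_t=\infty$), which is exactly why the theorem is stated as a limit in this quantity rather than as convergence at fixed noise level.
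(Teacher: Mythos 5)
Your overall strategy (feasibility via Proposition~\ref{prop:dist}, then LaSalle via Lemma~\ref{lem:lasalle} with a Lyapunov function built from Proposition~\ref{prop:discrete}) matches the paper's, but the final ``translation step'' that you flag as the main obstacle is a genuine gap, and it is not resolvable by the route you sketch. Lemma~\ref{lem:lasalle} only yields $\calM\subseteq\{h=0\}$; unlike classical LaSalle, it gives you no invariance of the $\omega$-limit set under the dynamics, so you cannot ``exploit the forward invariance of the whole orbit'' to conclude $\mm=-\nabla\calF(\mx)$ on $\calM$. And, as you note, $\Delta=0$ does not force $\tmm=\mm$ since $\msgn$ is not strictly monotone. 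A secondary problem is that your $\bar h$ is defined through $\Gamma_t$, which involves a choice of subgradient $\nabla\calK^*(\lam\mx)$; this is a set-valued, discontinuous object, so the lower semicontinuity hypothesis of Lemma~\ref{lem:lasalle} is not verified for your $h$.

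The paper closes this gap by a different device. It augments the state to $\mz_t=(\mx_t,\mm_t,\tmm_t,\eta_t)$ and first argues \emph{directly} that the whole state converges: $\sum_t\eta_t^2<\infty$ forces $\eta_t\to0$, the update $\msgn(\tmm_{t+1})-\lam\mx_{t+1}$ has uniformly bounded norm, so $\mx_{t+1}-\mx_t\to\0$ and $\mx_t\to\mx^\star$; then $L$-smoothness and the vanishing-noise regime give $\mm_t,\tmm_t\to-\nabla\calF(\mx^\star)$. The momentum limit is thus pinned down by the step-size decay and smoothness, not by any invariance property of $\calM$. Second, it chooses $h(\mz)=\normtr{\tmm}-\inprod{\lam\mx}{\tmm}$, a genuinely continuous function of $(\mx,\tmm)$, and uses only the crude bound $a_t\Gamma_t+b_t\Delta_t\geq\Gamma_t=h(\mz_{t+1})$ (valid since $\nabla\calK^*(\lam\mx_{t+1})$ may be taken to be $\0$ inside $\ball$ and $\inprods{\msgn(\tmm)}{\tmm}=\normstr{\tmm}$ by Lemma~\ref{lem:subg_inprod}); the $\Delta_t$ term is simply discarded, so no strict monotonicity of $\msgn$ is ever needed. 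With $\tmm^\star=-\nabla\calF(\mx^\star)$ in hand, $h(\mz^\star)=0$ reads off as $\calS(\mx^\star)=0$ immediately, and Proposition~\ref{prop:kkt} finishes. To repair your write-up you would need to adopt both of these moves: track $\tmm$ (and $\eta_t$) in the state, replace $\bar h$ by the continuous $h$ above, and replace the invariance argument by the direct convergence argument for $(\mx_t,\mm_t,\tmm_t)$.
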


\begin{proof}
    Let \begin{align*}
        \calH(\mx,\mm,\tmm,\eta)&=\calF(\mx)-\calF^\star+\frac{\eta\nes}{\eta\lam(1-\nes)+(1-\pol)}(\normtr{\mm}-\inprod{\lam\mx}{\mm}),\\
        h(\mx,\mm,\tmm,\eta)&=\normtr{\tmm}-\inprod{\lam\mx}{\tmm},\quad\alpha_t=\eta_t,\quad\text{and}\quad\gamma_t=2\eta_t^2C_\calK^2L+\eta_tC_\calK\Brack{\frac{\sig^2}{\nbatch}}_t^\half,
    \end{align*} where $C_\calK\defeq\sqrt{\min(n,m)}$. Letting $\mz_t\defeq(\mx_t,\mm_t,\tmm_t,\eta_t)$ and using the notation of Proposition~\ref{prop:discrete} with $\calK(\mx)=\normtr{\mx}$ and $\nabla\calK(\mx)=\msgn(\mx)$, we verify that $\mz_t$ is bounded a.s., $\calH$ is nonnegative, $h$ is nonnegative and lower semicontinuous, $\sum_{t=0}^\infty\alpha_t=\infty$, $\sum_{t=0}^\infty\gamma_t<\infty$, and \[ \E[\calH(\mz_{t+1})\mid\mz_t]-\calH(\mz_t)\leq-\alpha_t(a_t\Gamma_t+b_t\Delta_t)+\gamma_t\leq-\alpha_th(\mz_{t+1})+\gamma_t\text{ a.s.} \] by \eqref{eq:expected_bound} and $a_t\Gamma_t+b_t\Delta_t\geq\Gamma_t=h(\mz_{t+1})$. Thus by Lemma~\ref{lem:lasalle}, $h(\mz_t)=0$ a.s. as $t\to\infty$, i.e. \[ \lim_{t\to\infty}\normtr{\tmm_t}-\inprod{\lam\mx_t}{\tmm_t}=0\text{ a.s.} \] It follows that, with probability 1, \begin{align*}
        \lim_{t\to\infty}\calS(\mx_t)&=\lim_{t\to\infty}\Par{\normtr{\nabla\calF(\mx_t)+\tmm_t-\tmm_t}+\inprod{\lam\mx_t}{\nabla\calF(\mx_t)+\tmm_t-\tmm_t}}\\
        &\leq\lim_{t\to\infty}\Par{\normtr{\nabla\calF(\mx_t)+\tmm_t}+\normtr{\tmm_t}+\inprod{\lam\mx_t}{\nabla\calF(\mx_t)+\tmm_t}-\inprod{\lam\mx_t}{\tmm_t}}\\
        &\leq\lim_{t\to\infty}\Par{\normtr{\nabla\calF(\mx_t)+\tmm_t}+\normop{\lam\mx_t}\normtr{\nabla\calF(\mx_t)+\tmm_t}}\leq\lim_{t\to\infty}2\normtr{\nabla\calF(\mx_t)+\tmm_t}\\
        &\leq\lim_{t\to\infty}\Bigg(4\eta_{t-1}C_\calK^2L+4\nes C_\calK^2L\sum_{k=1}^{t-1}\beta_2^{t-k-1}\eta_{k-1}+2\nes\pol^{t-1}C_\calK\normf{\nabla\calF(\mx_0)+\mm_0}\\
        &\quad+2(1-\nes)\normtr{\mg_{t-1}-\nabla\calF(\mx_{t-1})}+2\nes(1-\pol)\sum_{k=0}^{t-2}\pol^{t-k-2}\normtr{\mg_k-\nabla\calF(\mx_k)}\Bigg)\\
        &=0,
    \end{align*} where the fourth line uses \eqref{eq:gradient_momentum_diff}, the fifth line vanishes because of the asymptotically deterministic gradient, and the sixth line uses the Silverman--Toeplitz theorem to show that the sums vanish. By Proposition~\ref{prop:kkt}, we conclude that $\mx_t$ converges to the set of KKT points of \eqref{eq:constrained} a.s.
\end{proof}

\begin{remark}
    The feasible region $\ball$ has nonempty interior, so Slater's condition holds. This implies that the KKT conditions are necessary for optimality, and under the additional assumption that $\calF$ is convex, the KKT conditions also become sufficient for optimality. In this case, Theorem \ref{thm:kkt_stochastic} shows that the algorithm converges to the set of optimal solutions of \eqref{eq:constrained}.
\end{remark}

\begin{remark}
    The $\sum_{t=0}^\infty\eta_t\Brack{\frac{\sig^2}{\nbatch}}_t^\half<\infty$ condition in Theorem~\ref{thm:kkt_stochastic} can be achieved by using deterministic gradients or sufficiently increasing batch sizes, e.g. $[\nbatch]_t=\Theta(t)$ when $\eta_t=O(t^{-1})$.
\end{remark}

\begin{remark}
    Theorem~\ref{thm:kkt_stochastic} does not guarantee convergence to a single point; compare with Frank--Wolfe, which can fail to converge even in the smooth convex setting \cite{BolteCP24}.
\end{remark}
\section{Experiments}\label{sec:experiments}

In this section, we present several empirical studies to validate our theoretical results. Our experiments focus on verifying the implicit constraints enforced by \muon{}, demonstrating its convergent behavior, and comparing its performance to standard adaptive optimizers such as \adamw{} across various tasks, architectures, and choices of $\calK$.

\subsection{Toy example}

To clearly illustrate the behavior of \muon{}, we consider a simplified two-dimensional matrix optimization problem. Figure~\ref{fig:toy_0} shows trajectories of singular values under two distinct constraint strengths ($\lambda=1.5$ and $\lambda=5.0$) with $\beta_1 = \beta_2 = 0.95$ and $\eta = 0.001$. 
We initialize $\mx$ as a random matrix with singular values $(1, 0)$. The singular values quickly move into and remain within their respective constraint regions, clearly demonstrating the enforcement of spectral norm constraints.

\begin{figure*}[ht!]
    \centering
    \includegraphics[width=\textwidth]{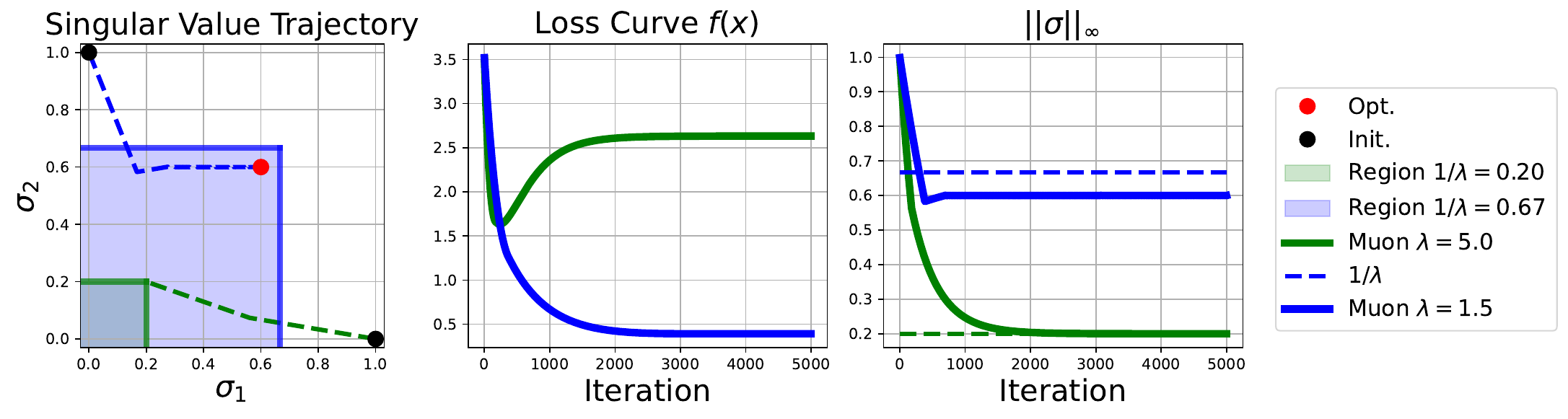}
\caption{
Trajectories of \muon{} for the matrix optimization problem 
$\min_{\mathbf{X} \in \mathbb{R}^{2\times2}} f(\mathbf{X})$, where $f(\mx)=\normf{\mathbf{A}\mathbf{X} - \mathbf{B}}^2 + \mu \normf{\mathbf{X}}^2$, $\mathbf{A}, \mathbf{B}\in \mathbb{R}^{2\times2}$, and $\mu \in \mathbb{R}_{>0}$, evaluated for two different values of $\lambda$: $1.5$ (blue) and $5.0$ (green). 
The colored boxes illustrate the constraint sets induced by $\|\mathbf{X}\|_{\infty}\leq\frac{1}{\lam}$: the blue box corresponds to $\lambda=1.5$, and the green box corresponds to $\lambda=5.0$. The red dot indicates the optimal solution.
}

    \vspace{10pt}
    \label{fig:toy_0}
\end{figure*}

\subsection{Constraint verification}

We examine the constraint enforcement of \muon{} using a toy example (Figure~\ref{fig:toyfinal}). In the upper panel, we set $\lambda = 1.25$ and initialize two trajectories within the feasible region (shaded green): one trajectory starts at $\mx^0 = \diag{0.01, 0.75}$ (red) and the other at $\mx^1 = \diag{0.7, 0.7}$ (blue). Both trajectories converge to the feasible optimum, although the objective function exhibits a nonmonotonic spike near iteration 2000. Despite this, the constructed Lyapunov function $\mathcal{H}$ from Proposition~\ref{prop:discrete} decreases monotonically, verifying the theoretical convergence guarantees. In the lower panel, we increase $\lambda$ to $4$ and initialize trajectories outside the feasible region at $\mx^0 = \diag{0.1, 0.95}$ (red curve) and $\mx^1 = \diag{0.7, 0.9}$ (blue curve). Here, since the optimal solution is infeasible, the trajectories converge to a feasible projection. Although the objective function plateaus around iteration 500, the Lyapunov function \eqref{eq:lyapunov} continues to decrease monotonically.

\begin{figure*}[ht!]
    \centering
    \includegraphics[width=\textwidth]{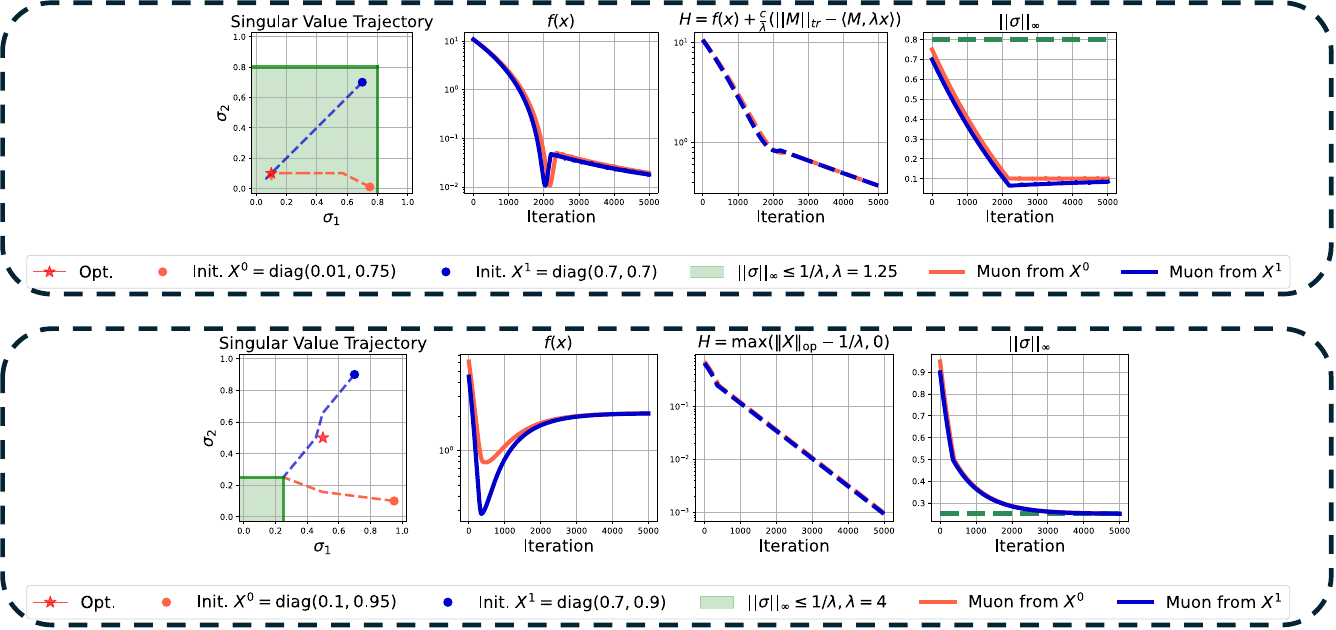}
\caption{\muon{} with $\lambda=1.25$, initialized within the feasible region (upper panel), and $\lambda=4$, initialized outside the feasible region (lower panel). The green region denotes the constraint region. Both cases illustrate convergence and the monotonic decrease of the Lyapunov function $\mathcal{H}$.
}
    \vspace{10pt}
    \label{fig:toyfinal}
\end{figure*}

We further validate the implicit constraint enforcement of \muon{} on standard benchmarks. Figure~\ref{fig:hist_0} demonstrates the rapid enforcement of singular value constraints in ResNet-18 trained on CIFAR-10. Singular values initially outside the constraint set quickly enter within approximately 400 training steps and remain reliably bounded thereafter.

\begin{figure*}[ht!]
    \centering
    \includegraphics[width=\textwidth]{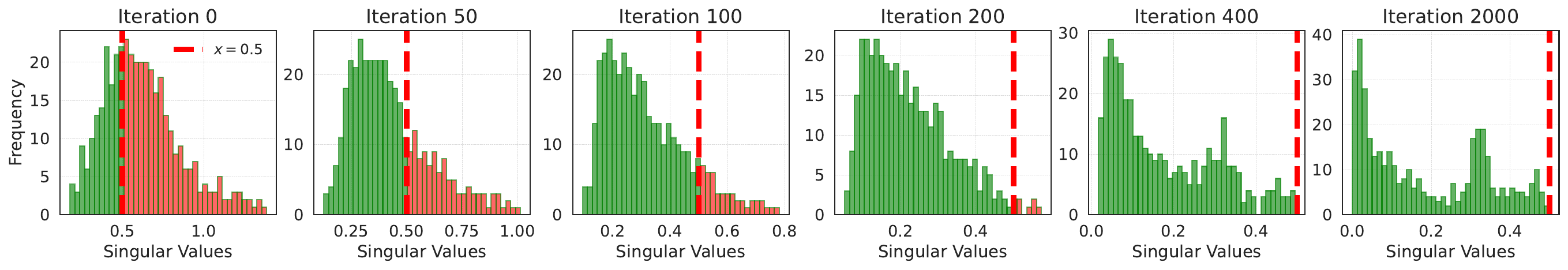}
\caption{
Histograms of singular values of the weight matrices from each module of ResNet-18 trained on CIFAR-10 with the \muon{} optimizer ($\lambda = 2.0$). The constraint $\normop{\mw}\leq\frac{1}{\lam}$ (indicated by red vertical lines) is rapidly enforced, with singular values initially outside the constraint region quickly moving inside within approximately 400 training steps. Once inside, singular values remain consistently bounded by the constraint throughout the remainder of training.
}
    \vspace{10pt}
    \label{fig:hist_0}
\end{figure*}

In Figure~\ref{fig:verify}, we extend this verification to larger-scale tasks and architectures, including ImageNet classification and language modeling using ResNet-50, ViT-B/16, Qwen-100M, and LLaMA-300M models. The results consistently confirm that the singular values remain bounded within the theoretical upper limit ($\frac{1}{\lam}$), indicated by horizontal dashed lines, under different regularization strengths ($\lambda=2.0$ and $\lambda=4.0$).

\begin{figure*}[ht!]
    \centering
    \includegraphics[width=\textwidth]{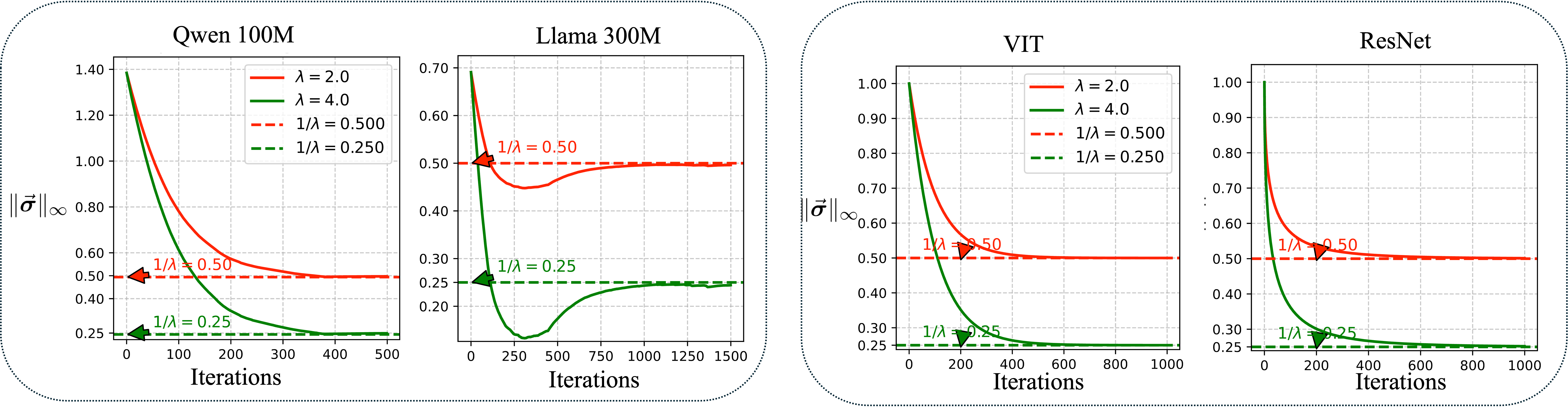}
\caption{
Verification of the implicit constraint enforced by the \muon{} optimizer with decoupled weight decay on ImageNet and language modeling tasks, across architectures including ResNet-50, ViT-B/16, Qwen-100M, and LLaMA-300M. The red and green curves correspond to the choices $\lambda = 2.0$ and $\lambda = 4.0$, respectively. The horizontal dashed lines indicate the theoretical upper bounds $\frac{1}{\lam}$ of the implicit box constraints.
}
    \vspace{10pt}
    \label{fig:verify}
\end{figure*}

\subsection{Implicit spectral regularization in large models}

We investigate the implicit spectral regularization induced by the \muon{} optimizer in comparison to \adamw{}. Figure~\ref{fig:hist_1} displays the singular value distributions of converged weights for the query ($\mathbf{W}_\mathrm{Q}$), key ($\mathbf{W}_\mathrm{K}$), and value ($\mathbf{W}_\mathrm{V}$) matrices in the LLaMA 0.5B model trained by both optimizers. We observe that \muon{} consistently produces regularized singular value distributions, reflecting the effect of its implicit spectral norm constraint. In contrast, singular values from \adamw{} do not exhibit any spectral regularization.

\begin{figure*}[ht!]
    \centering
    \includegraphics[width=\textwidth]{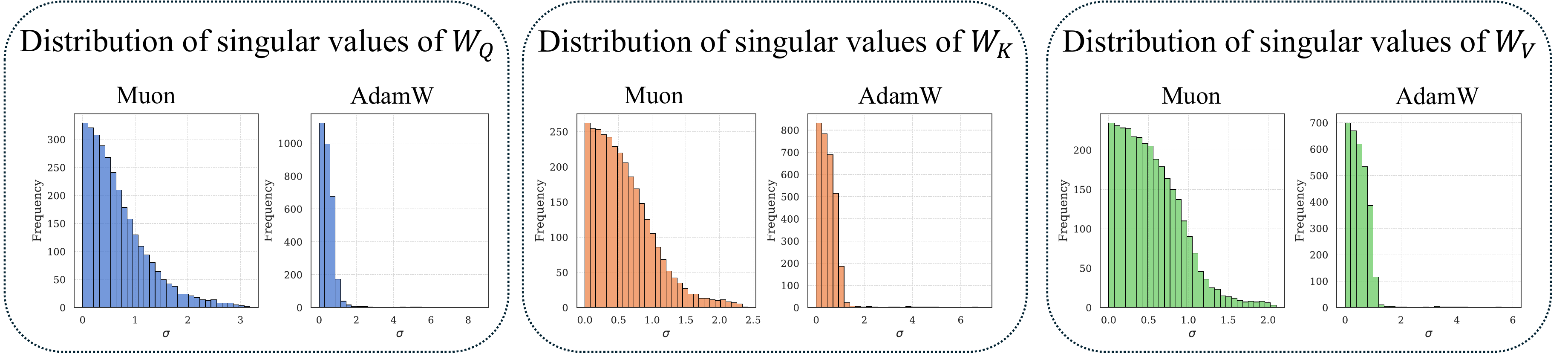}
\caption{Singular value distributions of converged weights for the query matrix ($\mw_\mathrm{Q}$), key matrix ($\mw_\mathrm{K}$), and value matrix ($\mw_\mathrm{V}$) matrices trained with \muon{} and \adamw{} optimizers on the LLaMA 0.5B model. From left to right, each subfigure compares singular value distributions obtained by \muon{} and \adamw{}, respectively.}

    \vspace{10pt}
    \label{fig:hist_1}
\end{figure*}

\subsection{Generalizations via different convex functions}

We explore the flexibility provided by the \lionk{} framework by varying the convex map $\mathcal{K}$. Figure~\ref{fig:kappa} shows results from applying \lionk{} with alternative convex maps on the previously defined matrix optimization problem. For instance, selecting the thresholding function \[ \mathcal{K}(\mathbf{X})=\sum_{i=1}^{\min(n,m)}\max(|\vsig_i(\mathbf{X})|-e, 0) \] induces a soft-thresholding penalty on singular values exceeding a threshold $e$, illustrating how different convex maps yield distinct implicit constraints and penalty structures.

\begin{figure*}[ht!]
    \centering
    \includegraphics[width=\textwidth]{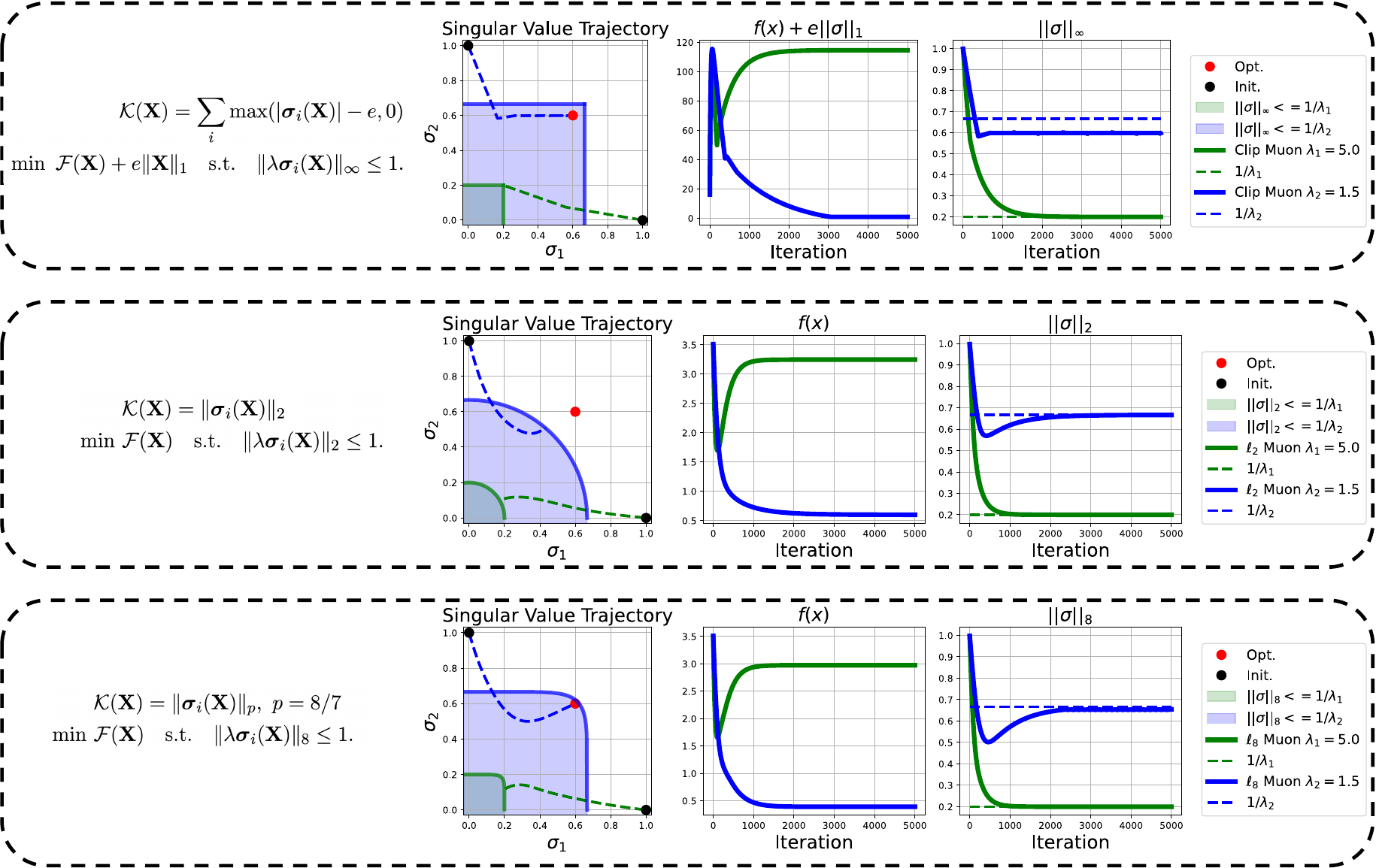}
\caption{
Behavior of the \lionk{} optimizer under different choices of $\mathcal{K}$. Using the same objective function as in Figure~\ref{fig:toy_0}, varying the choice of $\mathcal{K}$ induces distinct implicit constraints and penalty structures on the objective function. The colored boxes illustrate the constraint sets induced by $\calK$. For example, choosing the thresholding function $\mathcal{K}(\mathbf{X})=\sum_{i=1}^{\min(n,m)}\max(|\vsig_i(\mathbf{X})|-e, 0)$ enforces a soft-thresholding penalty on singular values $\vsig_i(\mathbf{X})$ exceeding a threshold $e$. Here, $\norm{\vsig_i(\mx)}_p$ denotes the Schatten $p$-norm of $\mx$.
}
    \vspace{10pt}
    \label{fig:kappa}
\end{figure*}

\section{Conclusion}\label{sec:conclusion}

In this paper, we showed that \muon{} is an instance of the \lionk{} optimizer when equipped with the nuclear norm and extended the \lionk{} analysis of \cite{ChenLLL24} to accommodate matrix-valued updates in both deterministic and stochastic gradient settings. We tailored our analysis to \muon{} with decoupled weight decay, demonstrating that it converges to the set of KKT points of a spectral-norm-constrained optimization problem. Beyond theoretical guarantees, we developed a framework for the generalization of \muon{} and empirically validated our results. Overall, we present \muon{} as a theoretically grounded optimizer for deep learning, with promising directions for future work.

\paragraph{Limitations.} While our theoretical and empirical findings provide substantial insights, several limitations suggest directions for future research. First, although our analysis focuses on key convex maps, such as the nuclear norm and clipping functions, extending the \lionk{} framework to a broader class of convex maps may reveal additional implicit regularization behaviors tailored to specific tasks. Second, extending our results to practical training conditions (e.g. general learning rates, nonsmooth objectives) warrants further investigation. Finally, scaling empirical evaluations to larger models and more diverse tasks would help further validate and refine the practical applicability and robustness of the \muon{} optimizer and its \lionk{} generalizations.

\bibliographystyle{alpha}
\bibliography{ref}

\newpage
\appendix

\section{Continuous-time analysis of \texorpdfstring{\lionk{}}{Lion-K}}\label{app:continuous}

For completeness, we give a straightforward extension of the analysis of \cite{ChenLLL24} to establish the convergence of the \lionk{} ODE \eqref{eq:lionk_ode} for matrices.

\begin{theorem}\label{thm:continuous}
    Under Assumptions~\ref{assume:min} and \ref{assume:convex}, let $\calF$, $\calK$, and $\calK^*$ be continuously differentiable and \begin{equation}\label{eq:ode}
        \begin{aligned}
            \dot{\mx}_t&=\nabla\calK(\mm_t-\eps(\gamma\mm_t+\alpha\nabla\calF(\mx_t)))-\lam\mx_t\\
            \dot{\mm}_t&=-\alpha\nabla\calF(\mx_t)-\gamma\mm_t,
        \end{aligned}
    \end{equation} where $\alpha,\gamma,\eps,\lam>0$ and $\eps\gamma\leq1$. Define \begin{equation}\label{eq:lyapnov}
        \calH(\mx,\mm)\defeq\alpha(\calF(\mx)-\calF^\star)+\frac{\gamma}{\lam}(\calK^*(\lam\mx)+\calK(\0))+\frac{1-\eps\gamma}{1+\eps\lam}(\calK^*(\lam\mx)+\calK(\mm)-\inprod{\mm}{\lam\mx}).
    \end{equation} Then for all $t$, $\calH(\mx_t,\mm_t)\geq0$ and $\frac{\dd}{\dd t}\calH(\mx_t,\mm_t)\leq0$, i.e. $\calH$ is a Lyapunov function for \eqref{eq:ode}.
\end{theorem}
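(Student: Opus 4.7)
The plan is to establish the two claims separately: nonnegativity of $\calH$ is immediate from Fenchel--Young, while the monotonicity decomposes $\frac{\dd}{\dd t}\calH$ into two ``monotonicity gaps'' each nonpositive by Lemma~\ref{lem:monotonic}. The coefficients in \eqref{eq:lyapnov} are engineered precisely to make this decomposition work out, and the main substantive work is the algebraic bookkeeping that exposes the cancellation.

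For nonnegativity, I would inspect each of the three summands in \eqref{eq:lyapnov}. The first is $\geq 0$ by Assumption~\ref{assume:min}. The second uses Fenchel--Young at $(\lam\mx,\0)$, giving $\calK^*(\lam\mx)+\calK(\0)\geq\inprods{\0}{\lam\mx}=0$. The third uses Fenchel--Young at $(\lam\mx,\mm)$. The coefficient $\frac{1-\eps\gamma}{1+\eps\lam}$ is nonnegative thanks to the hypothesis $\eps\gamma\leq 1$, and $\frac{\gamma}{\lam}>0$.

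For monotonicity, I would first apply the chain rule to obtain
\[
\tfrac{\dd}{\dd t}\calH=\inprods{\alpha\nabla\calF(\mx)+(\gamma+c\lam)\nabla\calK^*(\lam\mx)-c\lam\mm}{\dot\mx}+c\inprods{\nabla\calK(\mm)-\lam\mx}{\dot\mm},
\]
where $c\defeq\frac{1-\eps\gamma}{1+\eps\lam}$, then substitute the momentum equation $\alpha\nabla\calF(\mx)=-\dot\mm-\gamma\mm$ to replace $\alpha\nabla\calF-c\lam\mm$ by $-\dot\mm-(\gamma+c\lam)\mm$, obtaining an $-\inprods{\dot\mm}{\dot\mx}$ cross-term plus a $(\gamma+c\lam)\inprods{\nabla\calK^*(\lam\mx)-\mm}{\dot\mx}$ term. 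The key observation is that $\tmm_t\defeq\mm_t-\eps(\gamma\mm_t+\alpha\nabla\calF(\mx_t))=\mm_t+\eps\dot\mm_t$, so writing $\mm=\tmm-\eps\dot\mm$ in this second term splits it into a piece $\inprods{\nabla\calK^*(\lam\mx)-\tmm}{\dot\mx}$ (which rewrites, via $\dot\mx=\nabla\calK(\tmm)-\lam\mx$, as a negative monotonicity gap) plus an extra $\eps\inprods{\dot\mm}{\dot\mx}$. A short computation using $\gamma+c\lam=(\gamma+\lam)/(1+\eps\lam)$ verifies $-1+\eps(\gamma+c\lam)=-c$, so the two $\inprods{\dot\mm}{\dot\mx}$ contributions combine into $-c\inprods{\dot\mm}{\dot\mx}$. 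Combining with the remaining $c\inprods{\nabla\calK(\mm)-\lam\mx}{\dot\mm}$ term and again using $\dot\mx=\nabla\calK(\tmm)-\lam\mx$ and $\tmm-\mm=\eps\dot\mm$, everything reassembles into
\[
\tfrac{\dd}{\dd t}\calH=-\tfrac{c}{\eps}\inprods{\tmm-\mm}{\nabla\calK(\tmm)-\nabla\calK(\mm)}-\tfrac{\gamma+\lam}{1+\eps\lam}\inprods{\tmm-\nabla\calK^*(\lam\mx)}{\nabla\calK(\tmm)-\lam\mx},
\]
and both terms are $\leq 0$ by the two bounds in Lemma~\ref{lem:monotonic}. (The degenerate case $\eps=0$ trivially kills the first summand since then $\tmm=\mm$.)

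The main obstacle is the bookkeeping in the paragraph above: keeping track of which inner products contribute to which monotonicity gap, and in particular noticing the cancellation $-1+\eps(\gamma+c\lam)=-c$ that eliminates the $\inprods{\dot\mm}{\dot\mx}$ cross-term. Once this identity is observed, the coefficients of $\calH$ are revealed as being chosen precisely so that two applications of Lemma~\ref{lem:monotonic} finish the proof; nothing beyond this is needed.
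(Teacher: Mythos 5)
Your proof is correct and follows essentially the same route as the paper's: nonnegativity via Fenchel--Young termwise, and monotonicity by decomposing $\frac{\dd}{\dd t}\calH$ into the two monotonicity gaps $\inprods{\tmm-\mm}{\nabla\calK(\tmm)-\nabla\calK(\mm)}$ and $\inprods{\tmm-\nabla\calK^*(\lam\mx)}{\nabla\calK(\tmm)-\lam\mx}$ with coefficients $\frac{c}{\eps}$ and $\frac{\gamma+\lam}{1+\eps\lam}$, exactly matching the paper's final display. The only difference is cosmetic bookkeeping (you substitute $\dot\mm$ from the ODE and track the $\inprods{\dot\mm}{\dot\mx}$ cross-term explicitly, whereas the paper verifies the coefficient grouping directly), and your identity $-1+\eps(\gamma+c\lam)=-c$ checks out.
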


\begin{proof}
    For simplicity, we drop the index $t$. By assumption, $\calF(\mx)-\calF^\star\geq0$, by the Fenchel--Young inequality, $\calK^*(\lam\mx)+\calK(\mm)-\inprod{\mm}{\lam\mx}\geq0$, and by definition, \[ \calK^*(\lam\mx)+\calK(\0)=\sup_{\my\in\X}\Par{\inprod{\lam\mx}{\my}-\calK(\my)}+\calK(\0)\geq\inprod{\lam\mx}{\0}-\calK(\0)+\calK(\0)=0. \] Combining these inequalities shows that $\calH(\mx,\mm)\geq0$.

    Let $\tmm\defeq\mm-\eps(\gamma\mm+\alpha\nabla\calF(\mx))$. By Lemma~\ref{lem:monotonic}, we have \begin{equation}\label{eq:inprods}
        \begin{aligned}
            0&\geq\inprod{-\tmm+\nabla\calK^*(\lam\mx)}{\nabla\calK(\tmm)-\lam\mx}\\
            &=\inprod{\eps\alpha\nabla\calF(\mx)-(1-\eps\gamma)\mm+\nabla\calK^*(\lam\mx)}{\nabla\calK(\tmm)-\lam\mx}\\
            0&\geq\inprod{\mm-\tmm}{\nabla\calK(\tmm)-\nabla\calK(\mm)}\\
            &=\inprod{\eps\alpha\nabla\calF(\mx)+\eps\gamma\mm}{(\nabla\calK(\tmm)-\lam\mx)-(\nabla\calK(\mm)-\lam\mx)}.
        \end{aligned}
    \end{equation} By straightforward computation, \begin{align*}
        \frac{\dd}{\dd t}\calH(\mx,\mm)&=\inprod{\nabla_\mx\calH(\mx,\mm)}{\dot{\mx}}+\inprod{\nabla_\mm\calH(\mx,\mm)}{\dot{\mm}}\\
        &=\inprod{\alpha\nabla\calF(\mx)+\gamma\nabla\calK^*(\lam\mx)+\frac{1-\eps\gamma}{1+\eps\lam}(\lam\nabla\calK^*(\lam\mx)-\lam\mm)}{\nabla\calK(\tmm)-\lam\mx}\\
        &\quad+\frac{1-\eps\gamma}{1+\eps\lam}\inprod{\nabla\calK(\mm)-\lam\mx}{-\alpha\nabla\calF(\mx)-\gamma\mm}\\
        &=\frac{\lam+\gamma}{1+\eps\lam}\inprod{\eps\alpha\nabla\calF(\mx)-(1-\eps\gamma)\mm+\nabla\calK^*(\lam\mx)}{\nabla\calK(\tmm)-\lam\mx}\\
        &\quad+\frac{1-\eps\gamma}{\eps(1+\eps\lam)}\inprod{\eps\alpha\nabla\calF(\mx)+\eps\gamma\mm}{(\nabla\calK(\tmm)-\lam\mx)-(\nabla\calK(\mm)-\lam\mx)}\\
        &\leq0,
    \end{align*} where the last line uses \eqref{eq:inprods}.
\end{proof}

We recover \eqref{eq:lionk_ode} by setting $\alpha=\gamma=1$ in \eqref{eq:ode}. Although an important result, Theorem~\ref{thm:continuous} cannot be directly applied to \muon{} due to the nondifferentiability of the nuclear norm.

\section{Deferred proofs}\label{app:proofs}

\restatescore*

\begin{proof}
    Suppose the KKT conditions for the original problem \eqref{eq:constrained} are satisfied, i.e. there exist $\mu\in\R_{\geq0}$ and a subgradient $\mg$ of $\normop{\cdot}$ at $\mx^\star$, where $\normop{\lam\mx^\star}\leq1$, such that \[\nabla\calF(\mx^\star)+\mu\mg=\0\quad\text{and}\quad\mu(\normop{\lam\mx^\star}-1)=0. \]
    
    $\normop{\lam\mx^\star}\leq1$ is satisfied by primal feasibility. If $\mu=0$, then $\nabla\calF(\mx^\star)=\0$, which implies $\calS(\mx^\star)=0$. Otherwise, we have $\normop{\lam\mx^\star}=1$ by complementary slackness. Let the multiplicity of $\vsig_1(\mx^\star)$ be $t$, with corresponding singular vectors $\mmu_1$ and $\mv_1$. Using \cite{Watson92}'s characterization of the subgradients of $\normop{\cdot}$, we have $\mg=\mmu_1\mh\mv_1^\top$, where $\mh\in\PSD^{t\times t}$ and $\Tr(\mh)=1$. Thus $\normtr{\mg}=1$, and \[ \calS(\mx^\star)=\normtr{\nabla\calF(\mx^\star)}+\inprod{\lam\mx^\star}{\nabla\calF(\mx^\star)}=\mu\normtr{\mg}-\mu\inprod{\lam\mx^\star}{\mg}=\mu\normtr{\mg}-\mu\normop{\lam\mx^\star}=0, \] where the third equality uses Lemma~\ref{lem:subg_inprod}.
\end{proof}

\restatemonotonic*

\begin{proof}
    By definition of subgradients, we have \begin{align*}
        \calK(\my)-\calK(\mx)&\geq\inprod{\nabla\calK(\mx)}{\my-\mx}\\
        \calK(\mx)-\calK(\my)&\geq\inprod{\nabla\calK(\my)}{\mx-\my}.
    \end{align*}
    Summing the inequalities gives $0\geq\inprod{\nabla\calK(\my)-\nabla\calK(\mx)}{\mx-\my}$, which shows \eqref{eq:monotonic1}. \eqref{eq:monotonic2} follows by setting $\my\gets\nabla\calK^*(\my)$ in \eqref{eq:monotonic1} and using $\my\in\partial\calK(\nabla\calK^*(\my))$.
\end{proof}

\restatesubg*

\begin{proof}
    By definition of a subgradient and properties of norms, \begin{align*}
        0=\calK(\0)&\geq\calK(\mx)+\inprod{\nabla\calK(\mx)}{\0-\mx}=\calK(\mx)-\inprod{\nabla\calK(\mx)}{\mx}\\
        2\calK(\mx)=\calK(2\mx)&\geq\calK(\mx)+\inprod{\nabla\calK(\mx)}{2\mx-\mx}=\calK(\mx)+\inprod{\nabla\calK(\mx)}{\mx}.
    \end{align*} Combining the inequalities shows that $\inprod{\nabla\calK(\mx)}{\mx}=\calK(\mx)$.
\end{proof}

\restatebound*

\begin{proof}
    By Proposition~\ref{prop:dist}, we have $\normop{\lam\mx_t}\leq1$ for all $t\geq0$. Furthermore, for all $t>0$, \begin{equation}\label{eq:grad_diff_bound}
        \begin{aligned}
            \normf{\nabla\calF(\mx_t)-\nabla\calF(\mx_{t-1})}&\leq L\normf{\mx_t-\mx_{t-1}}=\eta_{t-1}L\normf{\msgn(\tmm_t)-\lam\mx_t}\\
            &\leq\eta_{t-1}C_\calK L\normop{\msgn(\tmm_t)-\lam\mx_t}\leq 2\eta_{t-1}C_\calK L,
        \end{aligned}
    \end{equation} where the first line uses smoothness. Recalling \eqref{eq:lionk_update}, \begin{equation}\label{eq:norm_bound}
        \begin{aligned}
            \normf{\nabla\calF(\mx_t)+\mm_t}&=\normf{\nabla\calF(\mx_t)+\pol\mm_{t-1}-(1-\pol)\nabla\calF(\mx_{t-1})}\\
            &=\normf{\nabla\calF(\mx_t)-\nabla\calF(\mx_{t-1})+\pol(\nabla\calF(\mx_{t-1})+\mm_{t-1})}\\
            &=\normf{\sum_{k=1}^t\pol^{t-k}(\nabla\calF(\mx_k)-\nabla\calF(\mx_{k-1}))+\pol^t(\nabla\calF(\mx_0)+\mm_0)}\\
            &\leq\sum_{k=1}^t\pol^{t-k}\normf{\nabla\calF(\mx_k)-\nabla\calF(\mx_{k-1})}+\pol^t\normf{\nabla\calF(\mx_0)+\mm_0}\\
            &\leq2C_\calK L\sum_{k=1}^t\beta_2^{t-k}\eta_{k-1}+\pol^t\normf{\nabla\calF(\mx_0)+\mm_0},
        \end{aligned}
    \end{equation} where the third line iterates and expands the first two lines, the fourth line uses the triangle inequality, and the fifth line uses \eqref{eq:grad_diff_bound}. Thus \begin{equation}\label{eq:gradient_momentum_diff}
        \begin{aligned}
            \normf{\nabla\calF(\mx_t)+\tmm_t}&=\normf{\nabla\calF(\mx_t)+\nes\mm_{t-1}-(1-\nes)\nabla\calF(\mx_{t-1})}\\
            &=\normf{\nabla\calF(\mx_t)-\nabla\calF(\mx_{t-1})+\nes(\nabla\calF(\mx_{t-1})+\mm_{t-1})}\\
            &\leq\normf{\nabla\calF(\mx_t)-\nabla\calF(\mx_{t-1})}+\nes\normf{\nabla\calF(\mx_{t-1})+\mm_{t-1}}\\
            &\leq2\eta_{t-1}C_\calK L+\nes\Par{2C_\calK L\sum_{k=1}^{t-1}\beta_2^{t-k-1}\eta_{k-1}+\pol^{t-1}\normf{\nabla\calF(\mx_0)+\mm_0}}\\
            &=2\eta_{t-1}C_\calK L+2\nes C_\calK L\sum_{k=1}^{t-1}\beta_2^{t-k-1}\eta_{k-1}+\nes\pol^{t-1}\normf{\nabla\calF(\mx_0)+\mm_0},
        \end{aligned}
    \end{equation} where the third line uses the triangle inequality and the fourth line uses \eqref{eq:grad_diff_bound} and \eqref{eq:norm_bound}. The result follows upon setting $\eta_t=\eta$ and using $\sum_{k=1}^{t-1}\pol^{t-k-1}\leq\sum_{j=0}^\infty\pol^j=\frac{1}{1-\pol}.$
\end{proof}

\restatenorm*

\begin{proof}
    By definition of a subgradient and Lemma~\ref{lem:subg_inprod}, \[ \calK(\my)\geq\calK(\mx)+\inprod{\nabla\calK(\mx)}{\my-\mx}=\inprod{\nabla\calK(\mx)}{\my} \] for all $\my\in\X$, which implies that \[ 0\leq\norm{\nabla\calK(\mx)}_*=\sup_{\my\neq\0}\frac{\inprod{\nabla\calK(\mx)}{\my}}{\calK(\my)}\leq1 \] by definition of the dual norm. We conclude that $\calK^*(\nabla\calK(\mx))=0$ by Fact~\ref{fact:norm_conjugate}.
\end{proof}

\restateinprod*

\begin{proof}
    By the Fenchel--Young inequality and Lemma~\ref{lem:norm_subg}, \[ \inprod{\E[\my]-\my}{\nabla\calK(\mx+\eps\my)}\leq\calK(\E[\my]-\my)+\calK^*(\nabla\calK(\mx+\eps\my))=\calK(\E[\my]-\my). \] By the equivalence of norms on finite-dimensional vector spaces, there exists a constant $C_\calK$ such that $\calK(\mx)\leq C_\calK\normf{\mx}$. Taking expectations, \begin{align*}
        \E\Brack{\inprod{\E[\my]-\my}{\nabla\calK(\mx+\eps\my)}}&\leq\E\Brack{\calK(\E[\my]-\my)}\leq C_\calK\E\Brack{\normf{\E[\my]-\my}}\\
        &\leq C_\calK\sqrt{\E\Brack{\normf{\E[\my]-\my}^2}}=C_\calK\sqrt{\Var(\my)},
    \end{align*} where the second line uses Jensen's inequality.
\end{proof}

\restatediff*

\begin{proof}
    We have \begin{align*}
        \E[\normf{\mx-\my}]&\leq\E[\normf{\mx-\E[\mx]}+\normf{\E[\mx]-\E[\my]}+\normf{\my-\E[\my]}]\\
        &\leq\sqrt{\E\Brack{\normf{\mx-\E[\mx]}^2}}+R+\sqrt{\E\Brack{\normf{\my-\E[\my]}^2}}\leq2\sig+R,
    \end{align*} where the first line uses the triangle inequality and the second line uses Jensen's inequality.
\end{proof}

\restatejensen*

\begin{proof}
    Trivial by Jensen's.
\end{proof}

\restatezero*

\begin{proof}
    By the layer cake representation, \[ 0=\E[X]=\int_0^\infty\Pr(X>t)\dd t, \] so $\Pr(X>t)=0$ for (Lebesgue) almost every $t>0$. Since $\Pr(X>t)$ is a right-continuous function of $t$, we have that $\Pr(X>t)=0$ for all $t>0$. The conclusion follows from \[ \Pr(X>0)=\lim_{t\searrow0}\Pr(X>t)=0. \]
\end{proof}

\restatelasalle*

\begin{proof}
Define the auxiliary function
\[
\widehat{\calV}_t \defeq \calV(\mx_t) - g_t,\quad \text{where } g_t \defeq \sum_{s=0}^{t-1}\gamma_s.
\]

Then, we have
\begin{equation}\label{eq:supermartingale}
    \E\Brack{\widehat{\calV}_{t+1} \mid \mathcal{F}_t} - \widehat{\calV}_t = \E[\calV(\mx_{t+1}) \mid \mathcal{F}_t] - \calV(\mx_t) - \gamma_t \leq -\alpha_th(\mx_{t+\ell})\leq0\text{ a.s.}
\end{equation}

By \eqref{eq:supermartingale}, the nonnegativity of $\calV$, and $\lim_{t\to\infty}g_t<\infty$, it follows that $\{\widehat{\calV}_t\}_{t\in\N}$ is a supermartingale with $\sup_{t\in\N}\E\Brack{\widehat{\calV}(\mx_t)^-}<\infty$. By Doob's supermartingale convergence theorem, we conclude that $\{\widehat{\calV}_t\}_{t\in\N}$ and hence $\{\calV(\mx_t)\}_{t\in\N}$ converges almost surely to a random variable with finite expectation.

Now, taking expectations of \eqref{eq:supermartingale} and summing from $t=0$ to $\infty$, we obtain
\[
\sum_{t=0}^{\infty}\E[\alpha_th(\mx_{t+\ell})]=\sum_{t=\ell}^{\infty}\alpha_{t-\ell}\E[h(\mx_t)]<\infty.
\]
Since $\sum_{t=0}^\infty\alpha_t=\infty$, this implies $\liminf_{t\to\infty}\E[h(\mx_t)]=0.$ By Fatou's lemma, we have \[ 0\leq\E\Brack{\liminf_{t\to\infty}h(\mx_t)}\leq\liminf_{t\to\infty}\E[h(\mx_t)]=0 \] and hence
\[
\liminf_{t\to\infty}h(\mx_t)=0\text{ a.s.}
\] by Lemma~\ref{lem:zero_as}. By the lower semicontinuity and nonnegativity of $h$ and the (almost sure) boundedness of $\mx_t$, the $\omega$-limit set $\M$ of $\{\mx_t\}_{t\in\N}$ must satisfy $h(\mx)=0$ for all $\mx\in \M$, i.e. \[ \M\subseteq\{\mx\in\X\mid h(\mx)=0\}. \]
\end{proof}

\end{document}